\pdfoutput=1

\documentclass[11pt]{article}

\usepackage{EMNLP2023}

\usepackage{times}
\usepackage{latexsym}

\usepackage[T1]{fontenc}

\usepackage[utf8]{inputenc}

\usepackage{microtype}

\usepackage{inconsolata}

\usepackage{graphicx}
\usepackage{epstopdf}

\usepackage{amsmath}
\usepackage{amssymb}
\usepackage{amsfonts}

\usepackage{xcolor}
\definecolor{ForestGreen}{RGB}{34,139,34}
\usepackage{cleveref}
\usepackage{colortbl}
\usepackage{wrapfig}
\usepackage{multirow}
\usepackage{multicol}
\usepackage{microtype}
\usepackage{fixltx2e}
\usepackage{comment}

\usepackage{bm}
\usepackage{booktabs} 

\usepackage{algorithm}
\usepackage{algpseudocode}
\usepackage{amsthm}
\usepackage{nameref}

\usepackage{subfig}
\usepackage{subfloat}

\usepackage[american]{babel}

\usepackage{color}

\usepackage{graphicx}
\allowdisplaybreaks

\def\eqref#1{equation~\ref{#1}}

\def\onedot{.}
\def\eg{\emph{e.g}\onedot} 

\def\ie{\emph{i.e}\onedot}





























\newtheorem{theorem}{Theorem}

\newtheorem{corollary}[theorem]{Corollary}

\newtheorem{remark}{Remark}





\newcounter{mylabelcounter}

\makeatletter
\newcommand{\labelText}[2]{%
#1\refstepcounter{mylabelcounter}%
\immediate\write\@auxout{%
  \string\newlabel{#2}{{1}{\thepage}{{\unexpanded{#1}}}{mylabelcounter.\number\value{mylabelcounter}}{}}%
}%
}
\makeatother

\renewcommand{\v}{\mathbf{v}}

\renewcommand{\comment}[1]{}

\newlength\myindent 
\setlength\myindent{6em} 


\algdef{SE}[SUBALG]{Indent}{EndIndent}{}{\algorithmicend\ }%
\algtext*{Indent}
\algtext*{EndIndent}

\newcommand{\ours}{\textsc{DBNorm}}

\title{Balance Act: Mitigating Hubness in Cross-Modal Retrieval with Query and Gallery Banks}

\author{Yimu Wang\thanks{\ \  Corresponding author.} \and Xiangru Jian \\
  University of Waterloo \\
  \texttt{\{yimu.wang,xiangru.jian\}@uwaterloo.ca} \\\And
  Bo Xue \\
  City University of Hong Kong \\
  \texttt{boxue4-c@my.cityu.edu.hk} \\}

\begin{document}
\maketitle
\begin{abstract}
In this work, we present a post-processing solution to address the hubness problem in cross-modal retrieval, a phenomenon where a small number of gallery data points are frequently retrieved, resulting in a decline in retrieval performance. 
We first theoretically demonstrate the necessity of incorporating both the gallery and query data for addressing hubness as hubs always exhibit high similarity with gallery and query data. 
Second, building on our theoretical results, we propose a novel framework, Dual Bank Normalization (\ours). 
While previous work has attempted to alleviate hubness by only utilizing the query samples, \ours\ leverages two banks constructed from the query and gallery samples to reduce the occurrence of hubs during inference. 
Next, to complement \ours, we introduce two novel methods, dual inverted softmax and dual dynamic inverted softmax, for normalizing similarity based on the two banks. 
Specifically, our proposed methods reduce the similarity between hubs and queries while improving the similarity between non-hubs and queries. 
Finally, we present extensive experimental results on diverse language-grounded benchmarks, including text-image, text-video, and text-audio, demonstrating the superior performance of our approaches compared to previous methods in addressing hubness and boosting retrieval performance. 
Our code is available at \url{https://github.com/yimuwangcs/Better_Cross_Modal_Retrieval}.
\end{abstract}

\begin{figure}[ht!]
    \centering
    \includegraphics[width=\columnwidth]{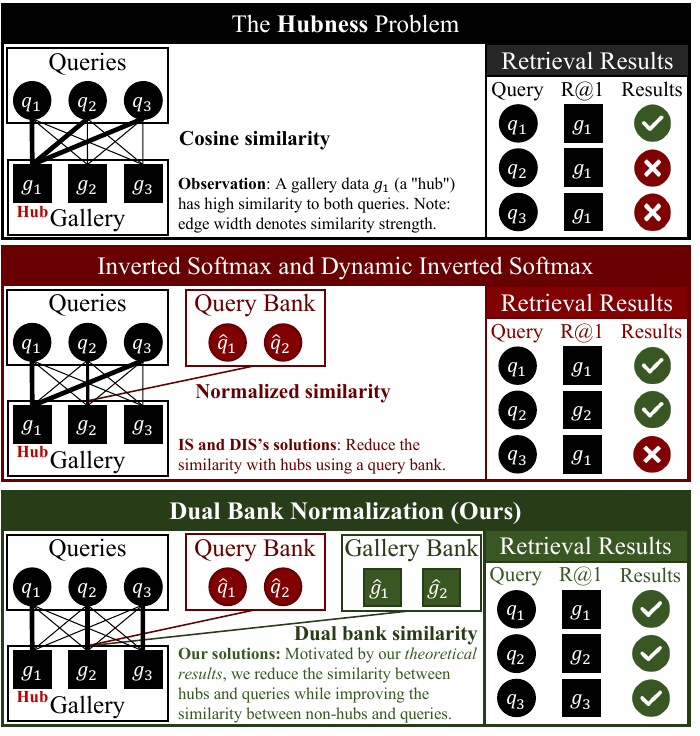}
    \caption{
    \textbf{Top}: The hubness problem~\cite{JMLR:v11:radovanovic10a} in cross-modal retrieval where queries $\mathbf{q}_1$, $\mathbf{q}_2$, and $\mathbf{q}_3$ are associated with their respective galleries $\mathbf{g}_1$, $\mathbf{g}_2$, and $\mathbf{g}_3$. 
    The hub ($\mathbf{g}_1$) is the nearest neighbor to multiple queries ($\mathbf{q}_1$, $\mathbf{q}_2$, and $\mathbf{q}_3$), resulting to suboptimal retrieval performance. 
    \textbf{Middle}: Previous methods employ a query bank to normalize similarities.
    \textbf{Bottom}: Our \ours\ tackles the hubness problem by utilizing both gallery and query banks. 
    It reduces the similarity between the hub $\mathbf{g}_1$ and all queries while simultaneously enhancing the similarity between non-hubs $\mathbf{g}_2$ and $\mathbf{g}_3$ and all queries, resulting in improved performance.
    }
    \label{fig:my_label}
\end{figure}

\section{Introduction}
Cross-modal retrieval (CMR) facilitates flexible information retrieval~\cite{DBLP:conf/prcv/WangWXZ20,10.1145/3394171.3413882} across various modalities, including images, videos, audio, and text, by extracting discriminative features and summarizing information from multiple modalities. 
Recently, supported by the development of pre-training multi-modal models~\cite{tsai-etal-2019-multimodal,li-etal-2020-hero,yu-etal-2021-vision,frank-etal-2021-vision,fang2023hierarchical}, innovative retrieval methods~\cite{luo-etal-2022-conditioned}, and cross-modal benchmarks~\cite{DBLP:conf/cvpr/XuMYR16,kim2023nice}, significant advances have been made in image-text retrieval~\cite{liu-ye-2019-strong,DBLP:conf/icml/RadfordKHRGASAM21,ijcai2021p156,luo-etal-2022-conditioned}, video-text retrieval~\cite{xu-etal-2021-videoclip,wu-etal-2022-rap,park-etal-2022-exposing,chen-etal-2022-litevl,park-etal-2022-exposing,10.1145/3477495.3531950,fang2022multi,fang2023you,wang2023videotext}, and audio-text retrieval~\cite{koepke_audio_2022}, achieving satisfactory retrieval performance.

In the most popular CMR paradigm, deep neural networks are employed to project data from diverse modalities into a shared high-dimensional vector space, enabling direct comparison through distance measures such as cosine similarity and $\ell_2$ similarity. 
However, within this paradigm, the occurrence of hubness~\cite{JMLR:v11:radovanovic10a} poses a challenge. 
Hubness refers to the phenomenon where certain data points, known as hubs, frequently emerge as nearest neighbors of other points, resulting in a decline in retrieval performance. 
To empirically illustrate the existence of hubness, we visualize the distribution of the frequency at which each gallery data point is retrieved by queries, as shown in \Cref{fig: showing 1 occurrence} and \Cref{fig: showing 1 occurrence appendix} in the Appendix. 
We observe that hubness is prevalent in video-text, image-text, and audio-text retrieval, significantly impacting the performance of retrieval systems~\cite{DBLP:conf/cvpr/BogolinCJLA22}. 

Prior research on mitigating hubness can be roughly categorized into training methods~\cite{liu_hal_2020} and post-processing methods~\cite{suzuki-etal-2013-centering,dinu_improving_2015,DBLP:conf/iclr/SmithTHH17,lample2018word,huang-etal-2019-hubless,liu-ye-2019-strong}, with particular attention in zero-shot learning~\cite{lazaridou-etal-2015-hubness,huang-etal-2020-improving} and bilingual word translation~\cite{DBLP:conf/iclr/SmithTHH17,huang-etal-2019-hubless,huang-etal-2020-improving,repar-etal-2022-fusion}. 
Due to the limitation of space, detailed related works are presented in \Cref{sec: related works}. 
In this paper, we focus on addressing hubness in a post-processing manner.

However, hubness in the context of CMR received limited attention until the introduction of QBNorm~\cite{DBLP:conf/cvpr/BogolinCJLA22}, which is a pioneering method specifically designed for CMR and incorporates a novel technique called DIS. 
It addresses hubness by selectively reducing the similarity between queries and hubs, thereby mitigating the impact of hubs. 
It follows the principle that hubs exhibit high similarity with query data to identify hubs. 
This prompts us to investigate whether hubs also demonstrate high similarity with gallery data. 
Therefore, the first research question is: 
\begin{center}
    \textcolor{ForestGreen}{
        \textit{RQ1: Do hubs exhibit high similarity with gallery data?}
    }
\end{center}
To answer it and provide a theoretical validation for the principle of QBNorm, we theoretically demonstrate that hubness is a universal characteristic across different modalities. 
In other words, a hub exhibits high similarity with data from various modalities, which shows the necessity of using gallery data. 
The second research question is:
\begin{center}    
    \textcolor{ForestGreen}{    
        \textit{RQ2: How can we leverage gallery data to mitigate hubness?}
    }
\end{center}
Motivated by our theoretical results, instead of only using query data as QBNorm, we propose a unified framework, namely Dual Bank Normalization (\ours) as shown in \Cref{fig:my_label}, along with two novel normalizing methods, namely dual inverted softmax and dual dynamic inverted softmax, which leverage both query and gallery data banks to alleviate the occurrence of hubs by reducing the similarity between hubs and queries and improving the similarity between non-hubs and queries.

Finally, to evaluate the effectiveness of our proposed \ours, we conducted experiments on several cross-modal benchmarks, including four video-text retrieval benchmarks~\cite{chen-dolan-2011-collecting,caba2015activitynet,DBLP:conf/cvpr/XuMYR16,hendricks_localizing_2017},
two image-text retrieval benchmarks~\cite{DBLP:conf/eccv/LinMBHPRDZ14,DBLP:journals/ijcv/PlummerWCCHL17},
and two audio-text retrieval benchmarks~\cite{kim-etal-2019-audiocaps,drossos_clotho_2020}.
Benefiting from gallery and query banks, \ours \ outperforms previous methods without requiring access to any additional data.

In summary, our contributions are as follows\footnote{The code is released at \href{https://github.com/yimuwangcs/Better_Cross_Modal_Retrieval}{link}.}:
\begin{itemize}
    \item We propose a unified framework \ours\ and two novel post-processing methods, namely dual inverted softmax and dual dynamic inverted softmax.
    These methods effectively mitigate hubness by reducing the similarity between hubs and queries with query and gallery banks.
    \item 
    Our proposed methods achieve state-of-the-art performance on eight cross-modal retrieval benchmarks, outperforming previous approaches across multiple evaluation metrics.
    \item 
    We are the first to theoretically demonstrate that hubs exhibit high similarity with data from different modalities and the necessity of utilizing both query and gallery data to address the issue of hubness.
\end{itemize}

\section{Our Methods}

In this section, we first address RQ1 through a theoretical demonstration of the universality of hubness and the necessity of incorporating both query and gallery data to address hubness.
Next, to answer RQ2, we propose a novel method called \ours, which builds upon our theoretical analysis, along with two novel similarity normalization techniques: Dual Inverted Softmax and Dynamic Dual Inverted Softmax. 
These methods effectively utilize both query and gallery data to mitigate hubness.

\subsection{Preliminaries}
In this paper, we focus on cross-modal retrieval (CMR), aiming to learn a pair of encoders that map data from different modalities into a common space where they can be directly compared. 

The query and gallery modalities are denoted as $\mathcal{X}$ and $\mathcal{Y}$.  
The (test) gallery, denoted by $G=\{\mathbf{g}_1, \ldots, \mathbf{g}_{N_G}\}$, contains all the embeddings of the gallery data, where $N_G$ is the size of the gallery data. 
In cross-modal retrieval, the gallery data does not overlap with the training data. 
Moreover, as our proposed methods require training data to address hubness, we define the sets of embeddings of training query and gallery data as $\hat{Q} = \{\hat{\mathbf{q}}_1, \ldots, \hat{\mathbf{q}}_{N_{\hat{Q}}}\}$ and $\hat{G} = \{\hat{\mathbf{g}}_1, \ldots, \hat{\mathbf{g}}_{N_{\hat{G}}}\}$, where $N_{\hat{Q}}$ and $N_{\hat{G}}$ are the sizes. 
Finally, the query data is denoted as $\mathbf{q}$. 
Following QBNorm, we focus on a practical scenario where only a single query occurs at a time and queries are unable to observe each other. 
In such a situation, we are limited to utilizing the training data for constructing banks.

\subsection{RQ1: Theoretical Analysis}\label{Sec: theory}

In cross-modal retrieval, we consider two spaces, denoted as $\mathcal{X}$ and $\mathcal{Y}$, that contain embeddings or representations of data points from two modalities. 
We assume that the two spaces, $\mathcal{X}$ and $\mathcal{Y}$, follow symmetric distributions\footnote{This assumption is reasonable since various symmetric distributions, such as Uniform, Normal, or Laplacian distributions, exist.} although they may have different means, variances, and distribution types. 
Specifically, the mean of $\mathcal{X}$ and $\mathcal{Y}$ are denoted as $\boldsymbol{\mu}_x$ and $\boldsymbol{\mu}_y$.

First, we demonstrate that points (embeddings or representations) close to the mean point are more likely to exhibit hubness compared to points situated farther away from the mean. 
We use the $\ell_2$ distance measure to compute distances.
\begin{theorem}\label{the: 1}
Assuming that $\mathbf{x}_1$ and $\mathbf{x}_2$ are sampled from a distribution $\mathcal{X}$ with mean $\boldsymbol{\mu}$, if $\mathbf{x}_1$ is closer to $\boldsymbol{\mu}$ than $\mathbf{x}_2$, then $\mathbf{x}_1$ is more likely to be a hub than $\mathbf{x}_2$ on the space $\mathcal{X}$, such that,
\begin{equation*}
    \mathbb{E}\left[\|\mathbf{x}_2 - \mathbf{x}\|^2 \right] > \mathbb{E}\left[\|\mathbf{x}_1 - \mathbf{x}\|^2 \right], \forall \mathbf{x} \sim \mathcal{X}\,.
\end{equation*}
\end{theorem}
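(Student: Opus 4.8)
The plan is to read the expectation in the statement as being taken over an independent ``probe'' point $\mathbf{x}\sim\mathcal{X}$, while $\mathbf{x}_1$ and $\mathbf{x}_2$ are held fixed at their sampled values. Under this reading the inequality reduces to a bias--variance-style decomposition, and the only structural fact I actually need is that $\boldsymbol{\mu}$ is the true mean of $\mathcal{X}$; the symmetry assumption stated earlier is not required for this particular claim.

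First I would treat an arbitrary fixed anchor $\mathbf{a}$, insert $\pm\boldsymbol{\mu}$, and expand the square:
\begin{align*}
\mathbb{E}\big[\|\mathbf{a}-\mathbf{x}\|^2\big]
&= \mathbb{E}\big[\|(\mathbf{a}-\boldsymbol{\mu})+(\boldsymbol{\mu}-\mathbf{x})\|^2\big]\\
&= \|\mathbf{a}-\boldsymbol{\mu}\|^2
  + 2(\mathbf{a}-\boldsymbol{\mu})^\top\mathbb{E}\big[\boldsymbol{\mu}-\mathbf{x}\big]
  + \mathbb{E}\big[\|\boldsymbol{\mu}-\mathbf{x}\|^2\big].
\end{align*}
The cross term vanishes because $\mathbb{E}[\boldsymbol{\mu}-\mathbf{x}]=\boldsymbol{\mu}-\mathbb{E}[\mathbf{x}]=\mathbf{0}$, and the last term $\sigma^2:=\mathbb{E}[\|\mathbf{x}-\boldsymbol{\mu}\|^2]$ is the total variance of $\mathcal{X}$, a constant that does not depend on $\mathbf{a}$. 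This gives the identity $\mathbb{E}[\|\mathbf{a}-\mathbf{x}\|^2]=\|\mathbf{a}-\boldsymbol{\mu}\|^2+\sigma^2$. Applying it once with $\mathbf{a}=\mathbf{x}_2$ and once with $\mathbf{a}=\mathbf{x}_1$ and subtracting, the common variance term cancels, leaving
\[
\mathbb{E}\big[\|\mathbf{x}_2-\mathbf{x}\|^2\big]-\mathbb{E}\big[\|\mathbf{x}_1-\mathbf{x}\|^2\big]
= \|\mathbf{x}_2-\boldsymbol{\mu}\|^2-\|\mathbf{x}_1-\boldsymbol{\mu}\|^2 > 0,
\]
where positivity is precisely the hypothesis that $\mathbf{x}_1$ lies closer to $\boldsymbol{\mu}$ than $\mathbf{x}_2$, completing the argument.

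There is no real analytic obstacle here: the computation is a two-line expansion, and the heart of it is simply that subtracting two such expected squared distances eliminates the shared variance term and isolates the squared distances to the mean. The one thing that must be handled carefully is the quantifier: the ``$\forall\,\mathbf{x}\sim\mathcal{X}$'' should be interpreted as ``the expectation is taken over the probe point $\mathbf{x}$,'' with $\mathbf{x}_1$ and $\mathbf{x}_2$ conditioned to their fixed values, since otherwise the statement is not well posed. I would also flag explicitly that it is the property $\mathbb{E}[\mathbf{x}]=\boldsymbol{\mu}$ (shared mean with the probe) that kills the cross term, and that the bridge from ``smaller expected squared distance to a random point'' to ``more likely to be a hub'' rests on the informal notion of a hub as a frequent nearest neighbor; thus the theorem rigorously establishes a ranking of expected proximities, which is then taken as the operational proxy for hubness.
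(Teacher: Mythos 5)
Your proof is correct and follows essentially the same route as the paper's: both insert $\pm\boldsymbol{\mu}$ into the squared distance, expand, kill the cross term using $\mathbb{E}[\mathbf{x}]=\boldsymbol{\mu}$, and cancel the common variance term so that only $\|\mathbf{x}_2-\boldsymbol{\mu}\|^2-\|\mathbf{x}_1-\boldsymbol{\mu}\|^2>0$ remains. Your added clarifications --- that the expectation is over the probe point with $\mathbf{x}_1,\mathbf{x}_2$ fixed, and that only the mean property (not symmetry) is used --- are accurate and, if anything, tidy up ambiguities the paper leaves implicit.
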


Next, we demonstrate that a hub will have high similarity (low distance) with points from another space (distribution).
\begin{theorem}\label{thm: 2}
    Assuming $\mathbf{x}_1, \mathbf{x}_2 \sim \mathcal{X}$ and $\mathbf{y} \sim \mathcal{Y}$, similarly, if $\mathbf{x}_1$ is closer to the mean point $\boldsymbol{\mu}_x$ of the space $\mathcal{X}$ than $\mathbf{x}_2$, such that $\|\mathbf{x}_2 - \boldsymbol{\mu}_x\| > \|\mathbf{x}_1 - \boldsymbol{\mu}_x\|$, then even in a different space (distribution) $\mathcal{Y}$, $\mathbf{x}_1$ is still more likely to be a hub than $\mathbf{x}_2$, such that:
    \begin{equation*}
        \mathbb{E}\left[\|\mathbf{x}_2 - \mathbf{y}\|^2 \right] > \mathbb{E}\left[\|\mathbf{x}_1 - \mathbf{y}\|^2 \right], \forall \mathbf{y} \sim \mathcal{Y} \,.
    \end{equation*}
\end{theorem}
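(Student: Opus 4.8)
The plan is to mirror the bias--variance decomposition that drives \Cref{the: 1}, and then to isolate precisely where moving from $\mathcal{X}$ to $\mathcal{Y}$ enters. First I would fix $\mathbf{x}_i$ and expand the expected squared distance to a random $\mathbf{y}\sim\mathcal{Y}$ by inserting the mean $\boldsymbol{\mu}_y$:
\begin{equation*}
\mathbb{E}\left[\|\mathbf{x}_i - \mathbf{y}\|^2\right] = \|\mathbf{x}_i - \boldsymbol{\mu}_y\|^2 + 2\left\langle \mathbf{x}_i - \boldsymbol{\mu}_y,\, \boldsymbol{\mu}_y - \mathbb{E}[\mathbf{y}]\right\rangle + \mathbb{E}\left[\|\mathbf{y} - \boldsymbol{\mu}_y\|^2\right].
\end{equation*}
Since $\boldsymbol{\mu}_y = \mathbb{E}[\mathbf{y}]$, the middle term vanishes, leaving $\mathbb{E}[\|\mathbf{x}_i - \mathbf{y}\|^2] = \|\mathbf{x}_i - \boldsymbol{\mu}_y\|^2 + V_y$, where $V_y := \mathbb{E}[\|\mathbf{y} - \boldsymbol{\mu}_y\|^2]$ is the total variance of $\mathcal{Y}$ and does not depend on the index $i$. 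Subtracting the two instances of this identity cancels the constant $V_y$, so the claim reduces to the purely geometric inequality $\|\mathbf{x}_2 - \boldsymbol{\mu}_y\|^2 > \|\mathbf{x}_1 - \boldsymbol{\mu}_y\|^2$.

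The main obstacle is that the hypothesis controls distances to $\boldsymbol{\mu}_x$, whereas the reduced inequality is phrased in terms of $\boldsymbol{\mu}_y$, and the two means need not coincide. To bridge this I would re-expand each distance around $\boldsymbol{\mu}_x$:
\begin{equation*}
\|\mathbf{x}_i - \boldsymbol{\mu}_y\|^2 = \|\mathbf{x}_i - \boldsymbol{\mu}_x\|^2 + 2\left\langle \mathbf{x}_i - \boldsymbol{\mu}_x,\, \boldsymbol{\mu}_x - \boldsymbol{\mu}_y\right\rangle + \|\boldsymbol{\mu}_x - \boldsymbol{\mu}_y\|^2,
\end{equation*}
which gives $\|\mathbf{x}_2 - \boldsymbol{\mu}_y\|^2 - \|\mathbf{x}_1 - \boldsymbol{\mu}_y\|^2 = \big(\|\mathbf{x}_2 - \boldsymbol{\mu}_x\|^2 - \|\mathbf{x}_1 - \boldsymbol{\mu}_x\|^2\big) + 2\langle \mathbf{x}_2 - \mathbf{x}_1,\, \boldsymbol{\mu}_x - \boldsymbol{\mu}_y\rangle$. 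The first bracket is strictly positive by the hypothesis $\|\mathbf{x}_2 - \boldsymbol{\mu}_x\| > \|\mathbf{x}_1 - \boldsymbol{\mu}_x\|$, so it suffices to neutralize the residual cross term $2\langle \mathbf{x}_2 - \mathbf{x}_1,\, \boldsymbol{\mu}_x - \boldsymbol{\mu}_y\rangle$.

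This is exactly where I expect the symmetry assumption on $\mathcal{X}$ to be essential. Since the coupling between $\mathbf{x}_1$ and $\mathbf{x}_2$ enters only through their radii $\|\mathbf{x}_i - \boldsymbol{\mu}_x\|$, central symmetry of $\mathcal{X}$ about $\boldsymbol{\mu}_x$ forces the conditional mean of $\mathbf{x}_i - \boldsymbol{\mu}_x$ given its radius to be $\mathbf{0}$, so that averaging the cross term over the directions of $\mathbf{x}_1,\mathbf{x}_2$ yields $\mathbb{E}[\langle \mathbf{x}_2 - \mathbf{x}_1,\, \boldsymbol{\mu}_x - \boldsymbol{\mu}_y\rangle] = 0$. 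I would therefore carry out the comparison in expectation over the symmetric sampling of $\mathbf{x}_1,\mathbf{x}_2$ (the cleanest special case being aligned means, $\boldsymbol{\mu}_x = \boldsymbol{\mu}_y$, for which the cross term vanishes identically), after which positivity of the first bracket closes the argument. The delicate point to state carefully is thus the precise probabilistic sense in which the cross term is dropped; the algebraic decomposition itself is routine and parallels \Cref{the: 1}.
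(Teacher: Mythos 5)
Your proposal is correct and follows essentially the same route as the paper's proof: a parallel-axis (bias--variance) expansion of $\mathbb{E}\left[\|\mathbf{x}_i - \mathbf{y}\|^2\right]$, with the mean-offset cross term eliminated via independence of $\mathbf{x}_i$ and $\mathbf{y}$ together with the mean-zero property of $\mathbf{x}_i - \boldsymbol{\mu}_x$ under the symmetric sampling of $\mathcal{X}$. The only cosmetic difference is that the paper centers directly at $\boldsymbol{\mu}_x$ and drops the cross term $\mathbb{E}\left[(\mathbf{x}_i - \boldsymbol{\mu}_x)^\top(\mathbf{y} - \boldsymbol{\mu}_x)\right]$ in one step, whereas you center first at $\boldsymbol{\mu}_y$ and then re-center, isolating the residual term $2\langle \mathbf{x}_2 - \mathbf{x}_1, \boldsymbol{\mu}_x - \boldsymbol{\mu}_y\rangle$; your explicit observation that this term vanishes only in expectation over the directions of $\mathbf{x}_1, \mathbf{x}_2$ (not pointwise for fixed samples when $\boldsymbol{\mu}_x \neq \boldsymbol{\mu}_y$) makes precise exactly the step the paper's proof leaves implicit.
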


Based on the previous theoretical results, we introduce the following corollary.
\begin{corollary}\label{coro: l2}
Under the $\ell_2$ distance measure, a hub will exhibit high similarity (low distance) with any point from $\mathcal{X}$ or $\mathcal{Y}$.
\end{corollary}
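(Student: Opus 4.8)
The plan is to obtain the corollary directly by combining Theorems~\ref{the: 1} and \ref{thm: 2}, since together they already supply the two halves of the conclusion. First I would fix the working definition of a \emph{hub} that is implicit in Theorem~\ref{the: 1}: a point qualifies as a hub on a space precisely when it lies close to the mean $\boldsymbol{\mu}_x$ of $\mathcal{X}$, because that theorem shows proximity to the mean is what minimizes the expected squared distance to other points of the same distribution. Accordingly, let $\mathbf{x}_1$ be such a hub, so that $\|\mathbf{x}_1 - \boldsymbol{\mu}_x\|$ is small relative to an arbitrary competitor $\mathbf{x}_2$, i.e. $\|\mathbf{x}_2 - \boldsymbol{\mu}_x\| > \|\mathbf{x}_1 - \boldsymbol{\mu}_x\|$.

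With this hypothesis in hand, I would invoke the two theorems in turn for the fixed hub $\mathbf{x}_1$. Theorem~\ref{the: 1} yields $\mathbb{E}[\|\mathbf{x}_2 - \mathbf{x}\|^2] > \mathbb{E}[\|\mathbf{x}_1 - \mathbf{x}\|^2]$ for every $\mathbf{x} \sim \mathcal{X}$, which says the hub has strictly lower expected distance (hence higher $\ell_2$ similarity) to any point of its own modality. Theorem~\ref{thm: 2}, under the \emph{same} mean-proximity hypothesis, yields $\mathbb{E}[\|\mathbf{x}_2 - \mathbf{y}\|^2] > \mathbb{E}[\|\mathbf{x}_1 - \mathbf{y}\|^2]$ for every $\mathbf{y} \sim \mathcal{Y}$, the analogous statement for the cross-modal space. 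Taking the union of these two inequalities gives exactly the corollary: a hub exhibits low expected distance, and therefore high similarity, to any point drawn from $\mathcal{X}$ or $\mathcal{Y}$.

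I do not anticipate a substantive obstacle, since the corollary is essentially a repackaging of the two preceding theorems rather than a new result. The only point that needs genuine care is consistency of the hub definition: I must state explicitly that ``hub'' is to be read in the sense licensed by Theorem~\ref{the: 1}, namely a near-mean point, so that a single object $\mathbf{x}_1$ simultaneously satisfies the hypotheses of both theorems. With that alignment fixed, the fact that $\ell_2$ distance is the common measure in both theorems is what lets the two monotonicity statements be glued into one conclusion about similarity spanning both modalities, which is precisely what establishes the necessity of incorporating gallery data alongside query data.
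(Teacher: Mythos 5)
Your proposal is correct and in substance matches the paper's own argument: both reduce the corollary to the common pivot of proximity to $\boldsymbol{\mu}_x$ and then glue together the same-modal inequality from Theorem~\ref{the: 1} and the cross-modal inequality from Theorem~\ref{thm: 2}. The only difference is packaging: you fix the near-mean definition of a hub and apply both theorems in the forward direction, while the paper starts from the cross-modal inequality, recovers mean proximity by inverting the equality $\Delta = \|\mathbf{x}_2 - \boldsymbol{\mu}_x\|^2 - \|\mathbf{x}_1 - \boldsymbol{\mu}_x\|^2$ established inside the proof of Theorem~\ref{thm: 2}, and then invokes Theorem~\ref{the: 1}; both routes are valid and rest on the same two results.
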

\begin{remark}
This corollary implies that a hub will be frequently retrieved by both the query and gallery points, as the similarity between that point and any other point will be high (low distance).
\end{remark}
Additionally, we extend our theoretical findings beyond the $\ell_2$ distance measure and explore the use of hyper-spheres, which can be regarded as using cosine similarity for measuring the distance between data points in CMR. 
This is particularly relevant since the cosine similarity is widely used in recent work~\cite{DBLP:journals/ijon/LuoJZCLDL22,DBLP:journals/corr/abs-2111-05610}. 
\begin{theorem}\label{thm: 3}
Hubs will have high similarity with any point from $\mathcal{X}$ and $\mathcal{Y}$ under the cosine distance measure.
\end{theorem}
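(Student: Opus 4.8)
The plan is to reduce the cosine-distance statement to the $\ell_2$ results already established in \Cref{the: 1} and \Cref{thm: 2}. Since cosine similarity depends only on direction, I would first project every embedding onto the unit hyper-sphere by normalizing, writing $\tilde{\mathbf{x}} = \mathbf{x}/\|\mathbf{x}\|$ and $\tilde{\mathbf{y}} = \mathbf{y}/\|\mathbf{y}\|$, so that the distributions $\mathcal{X},\mathcal{Y}$ induce distributions $\tilde{\mathcal{X}},\tilde{\mathcal{Y}}$ supported on $\mathbb{S}^{d-1}$. The key identity is that for unit vectors $\|\tilde{\mathbf{a}}-\tilde{\mathbf{b}}\|^2 = 2 - 2\langle\tilde{\mathbf{a}},\tilde{\mathbf{b}}\rangle$, so the cosine distance $1-\langle\tilde{\mathbf{a}},\tilde{\mathbf{b}}\rangle$ is exactly one half of the squared $\ell_2$ distance between the projected points. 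Consequently, comparing expected cosine distances is identical to comparing expected squared $\ell_2$ distances on the sphere, which is precisely the quantity controlled by the earlier theorems.

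With this reduction in hand, I would carry out the same second-moment computation as in the proofs of \Cref{the: 1} and \Cref{thm: 2}, now on the sphere. Because $\|\tilde{\mathbf{x}}_i\| = \|\tilde{\mathbf{y}}\| = 1$, only the cross term survives: $\mathbb{E}_{\tilde{\mathbf{y}}}[\|\tilde{\mathbf{x}}_i - \tilde{\mathbf{y}}\|^2] = 2 - 2\langle\tilde{\mathbf{x}}_i, \boldsymbol{\nu}_y\rangle$, where $\boldsymbol{\nu}_y = \mathbb{E}[\tilde{\mathbf{y}}]$ is the mean direction (mean resultant vector) of the projected gallery distribution, and analogously with $\boldsymbol{\nu}_x$ for $\tilde{\mathcal{X}}$. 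Subtracting the two candidates shows that the expected cosine distance of $\tilde{\mathbf{x}}_2$ exceeds that of $\tilde{\mathbf{x}}_1$ exactly when $\langle\tilde{\mathbf{x}}_1, \boldsymbol{\nu}\rangle > \langle\tilde{\mathbf{x}}_2, \boldsymbol{\nu}\rangle$ for $\boldsymbol{\nu}\in\{\boldsymbol{\nu}_x,\boldsymbol{\nu}_y\}$. Thus the whole claim collapses to showing that a hub is the point most aligned with the mean direction, with $\boldsymbol{\mu}$ in \Cref{the: 1} and \Cref{thm: 2} replaced by $\boldsymbol{\nu}$.

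The main obstacle is bridging the hub criterion of the earlier theorems, namely closeness to the mean $\boldsymbol{\mu}$ in the ambient space, with the spherical criterion of alignment with $\boldsymbol{\nu}$. I would invoke the symmetry assumption on $\mathcal{X}$ and $\mathcal{Y}$ here: writing $\mathbf{x} = \boldsymbol{\mu}_x + \boldsymbol{\epsilon}$ with $\boldsymbol{\epsilon}$ symmetric about the origin, pairing $\boldsymbol{\epsilon}$ with $-\boldsymbol{\epsilon}$ shows that the fluctuations orthogonal to $\boldsymbol{\mu}_x$ cancel in expectation after normalization, so that $\boldsymbol{\nu}_x$ is parallel to $\boldsymbol{\mu}_x$, and likewise $\boldsymbol{\nu}_y \parallel \boldsymbol{\mu}_y$. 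Given this parallelism, a point closer to the mean projects to a direction with larger inner product against $\boldsymbol{\mu}/\|\boldsymbol{\mu}\|$, hence against $\boldsymbol{\nu}$, which is exactly the inequality required above, and it holds against both modalities simultaneously, establishing \Cref{thm: 3}. I expect the careful part to be justifying $\boldsymbol{\nu}\parallel\boldsymbol{\mu}$ in full generality rather than in the clean symmetric special cases, since normalization is nonlinear and the orthogonal components cancel exactly only because of the assumed symmetry; this is precisely where the symmetric-distribution hypothesis does its work.
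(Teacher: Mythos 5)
Your reduction of cosine distance to squared $\ell_2$ distance on the unit sphere, and the computation $\mathbb{E}\bigl[\|\tilde{\mathbf{x}}_i-\tilde{\mathbf{y}}\|^2\bigr]=2-2\langle\tilde{\mathbf{x}}_i,\boldsymbol{\nu}_y\rangle$ with $\boldsymbol{\nu}_y=\mathbb{E}[\tilde{\mathbf{y}}]$, are correct, and linearity of the inner product is indeed what powers the paper's proof as well. The gap is in the bridge you build afterwards, and both halves of it fail. First, central symmetry does not give $\boldsymbol{\nu}_x\parallel\boldsymbol{\mu}_x$: pairing $\boldsymbol{\epsilon}$ with $-\boldsymbol{\epsilon}$ produces the orthogonal contributions $\boldsymbol{\epsilon}_\perp/\|\boldsymbol{\mu}_x+\boldsymbol{\epsilon}\|$ and $-\boldsymbol{\epsilon}_\perp/\|\boldsymbol{\mu}_x-\boldsymbol{\epsilon}\|$, whose sum is $\boldsymbol{\epsilon}_\perp\bigl(\|\boldsymbol{\mu}_x+\boldsymbol{\epsilon}\|^{-1}-\|\boldsymbol{\mu}_x-\boldsymbol{\epsilon}\|^{-1}\bigr)\neq 0$ whenever $\boldsymbol{\mu}_x^\top\boldsymbol{\epsilon}\neq 0$; the nonlinearity of normalization bites exactly here. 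Concretely, take $\boldsymbol{\mu}_x=(1,0)$ and $\boldsymbol{\epsilon}=\pm(1,1)$ with probability $1/2$ each: then $\mathbb{E}[\mathbf{x}/\|\mathbf{x}\|]=\frac12\bigl((2,1)/\sqrt{5}+(0,-1)\bigr)$, which has a nonzero second component. The cancellation you want needs the pairing $\boldsymbol{\epsilon}_\parallel+\boldsymbol{\epsilon}_\perp\leftrightarrow\boldsymbol{\epsilon}_\parallel-\boldsymbol{\epsilon}_\perp$, i.e., invariance under reflection across the $\boldsymbol{\mu}_x$-axis --- a strictly stronger (axial) symmetry that fails for, e.g., anisotropic noise tilted relative to $\boldsymbol{\mu}_x$. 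Second, even granting $\boldsymbol{\nu}\parallel\boldsymbol{\mu}$, the step ``closer to the mean in ambient $\ell_2$ implies larger inner product with $\boldsymbol{\mu}/\|\boldsymbol{\mu}\|$'' is false: with $\boldsymbol{\mu}=(1,0)$, $\mathbf{x}_1=(1,0.5)$, $\mathbf{x}_2=(3,0)$, the point $\mathbf{x}_1$ is closer to $\boldsymbol{\mu}$ yet $\cos(\mathbf{x}_2,\boldsymbol{\mu})=1>\cos(\mathbf{x}_1,\boldsymbol{\mu})$. So the hub criterion of \Cref{the: 1,thm: 2} does not transfer to the angular criterion your reduction requires.

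The paper sidesteps both problems by changing the setting rather than bridging it: in \Cref{thm:4,thm:5} the data are assumed to lie on the hypersphere itself ($\|\mathbf{x}_1\|=\|\mathbf{x}_2\|=\|\mathbf{x}\|=r$, as when embeddings are pre-normalized), and the hub hypothesis is stated directly in cosine form, $\cos(\mathbf{x}_1,\boldsymbol{\mu})>\cos(\mathbf{x}_2,\boldsymbol{\mu})$, where $\boldsymbol{\mu}$ is the ambient mean of that spherical distribution. Then $\mathbb{E}_{\mathbf{x}}[\cos(\mathbf{x}_i,\mathbf{x})]=\mathbf{x}_i^\top\boldsymbol{\mu}/r^2$ by linearity, and the claim follows in one line, up to a positive factor $\|\boldsymbol{\mu}\|/r$; no mean-direction vector $\boldsymbol{\nu}$ and no transfer lemma are needed, because on a sphere of fixed radius, $\ell_2$-closeness to $\boldsymbol{\mu}$ and cosine-alignment with $\boldsymbol{\mu}$ are the same condition ($\|\mathbf{x}_i-\boldsymbol{\mu}\|^2=r^2+\|\boldsymbol{\mu}\|^2-2\mathbf{x}_i^\top\boldsymbol{\mu}$). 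If you want to keep your more ambitious version --- un-normalized data, hubs certified by \Cref{the: 1,thm: 2}, retrieval by cosine after normalization --- you must add an explicit axial-symmetry assumption to salvage $\boldsymbol{\nu}\parallel\boldsymbol{\mu}$ and restate the hub criterion angularly; as written, both steps of your bridge are genuinely broken, and you correctly suspected that this is where the difficulty lives.
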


Now, we have shown that it is necessary to use gallery and query data to identify whether a point is a hub, as the hub will exhibit high similarity with any point under $\ell_2$ and cosine metrics. 
Furthermore, our theoretical results also suggest that hubness can be better mitigated by utilizing larger data banks\footnote{With more data, the estimation of the mean will be more precise.}, as demonstrated by \citet{DBLP:conf/iclr/SmithTHH17,DBLP:conf/cvpr/BogolinCJLA22}. 
All the proofs are deferred to \Cref{Sec: proof} due to the limitation of space.

\subsection{RQ2: \ours}
To mitigate the pervasive hubness problem observed in various cross-modal retrieval benchmarks and tasks (\Cref{fig: showing 1 occurrence,fig: showing 1 occurrence appendix}), we propose a novel approach named Dual Bank Normalization (\ours), as summarized in  \Cref{alg: ours algorithm}. 
\ours\ first constructs banks from the training data of both modalities and then normalizes the similarity. 
Specifically, \ours\ reduces the similarity between the query and hubs in the gallery data, \ie, data points that appear as nearest neighbors of many queries, and increases the similarity between the query and non-hub gallery data. 
\ours\ includes the following steps.

\begin{algorithm}[t!]
\caption{\ours}\label{alg: ours algorithm}
\begin{algorithmic}[1]
\Require the query point $\mathbf{q}$, the gallery set $G$.
\State Construct dual banks $\hat{Q}$ and $\hat{G}$ from the training or validation sets. \Comment{\textbf{Dual Bank Construction.}\label{alg: dbnorm 1}}
\State Calculate the unnormalized similarity $s_{\mathbf{q}, \mathbf{g}_i} = sim(\mathbf{q}, \mathbf{g}_{i}), \forall i \in [N_G]$. \Comment{\textbf{Similarity Computation.}} \label{alg: dbnorm 2}
\State Calculate normalized similarity $\hat{s}_{\mathbf{q}, \mathbf{g}_{i}}, \forall i \in [N_G],$ with 
DualIS (\Cref{eq: dualis}) or DualDIS (\Cref{eq: dualdis}). \Comment{\textbf{Similarity Normalization.}} \label{alg: dbnorm 3}
\State \Return Ranking $\operatorname{argsort}_{i \in [N_G]}\hat{s}_{\mathbf{q}, \mathbf{g}_{i}}$.
\end{algorithmic}
\end{algorithm}

\textbf{Dual Bank Construction}. 
In order to identify hubs in the gallery, \ours\ constructs two banks, $\hat{Q}=\{\hat{\mathbf{q}}_{1}, \ldots, \hat{\mathbf{q}}_{N_{\hat{Q}}}\}$ and $\hat{G}=\{\hat{\mathbf{g}}_{1}, \ldots, \hat{\mathbf{g}}_{N_{\hat{G}}}\}$, where $N_{\hat{Q}}$ and $N_{\hat{G}}$ are the sizes of the two banks. 
As suggested by our theoretical results, using both query and gallery banks simultaneously allows for a better estimation of the hubness of a data point. 

\textbf{Similarity Computation}. 
The similarity between the query and the gallery is calculated as $s_{\mathbf{q}, \mathbf{g}_i} = sim(\mathbf{q}, \mathbf{g}_{i}), \forall i \in [N_G]$, where $sim(\cdot, \cdot)$ is a similarity metric.

\textbf{Similarity Normalization}.
To normalize the similarity, we introduce two novel methods: DualIS (\Cref{eq: dualis}) and DualDIS (\Cref{eq: dualdis}). These methods allow for the normalization of the similarity $\hat{s}_{\mathbf{q}, \mathbf{g}_i}, \forall i \in [N_G]$.

Finally, with the normalized similarity $\hat{s}_{\mathbf{q}, \mathbf{g}_i}, \forall i \in [N_G]$, we can obtain the ranking of gallery data.  
It is worth noting that in practice, the query and gallery banks should be precomputed and reused across all queries, resulting in significant improvements in computational efficiency.

\subsubsection{Normalization Methods}
As existing methods are either designed for only using the query bank or cannot be directly incorporated into \ours\footnote{
In practice, queries are isolated from each other, making it infeasible to include CSLS~\cite{lample2018word} and GC~\cite{dinu_improving_2015} as they require queries to be visible to each other. 
A detailed comparison with CSLS and GC is in \Cref{sec: GC CSLS}.
}, we propose two novel methods to efficiently implement \ours, as outlined below.

First, drawing inspiration from IS~\cite{DBLP:conf/iclr/SmithTHH17}, by inverting the softmax, and normalizing the probability over query and gallery banks rather than the test gallery, we propose \textbf{Dual} \textbf{I}nverted \textbf{S}oftmax (\textbf{DualIS}). 
Given the query $\mathbf{q}$ and a gallery $\mathbf{g}_i$, the normalized similarity $\hat{s}_{\mathbf{q}, \mathbf{g}_i}$ is calculated as follows,
\begin{equation}\label{eq: dualis}
    \begin{aligned}
    \hat{s}_{\mathbf{q}, \mathbf{g}_i} = & \hat{s}_{\mathbf{q}, \mathbf{g}_i} ^\mathbf{g} * \hat{s}_{\mathbf{q}, \mathbf{g}_i} ^ \mathbf{q} \\
    \hat{s}_{\mathbf{q}, \mathbf{g}_i} ^ \mathbf{g} = &\frac{\exp(\beta_1 s_{\mathbf{q}, \mathbf{g}_i})}{\sum_{j \in [N_{\hat{G}}]}\exp(\beta_1 s_{\mathbf{g}_i, \hat{\mathbf{g}}_{j}}))}\\
    \hat{s}_{\mathbf{q}, \mathbf{g}_i} ^\mathbf{q} = & \frac{\exp(\beta_2 s_{\mathbf{q}, \mathbf{g}_i})}{\sum_{j \in [N_{\hat{Q}}]}\exp(\beta_2 s_{\mathbf{g}_i, \hat{\mathbf{q}}_{ j}}))}\,,
    \end{aligned}
\end{equation}
where 
$\beta_1$ and $\beta_2$ are temperatures and $s_{\mathbf{q}, \mathbf{g}_i}$ is the similarity between $\mathbf{q}$ and $\mathbf{g}_i$. 
We employ multiplication as an aggregation method to combine the normalized similarities from two banks because it effectively summarizes the information from both sides. 
A detailed comparison with other aggregation methods is presented in \Cref{sec: agg}. 
The intuition behind DualIS is to measure the probability that the corresponding gallery retrieves the query data, rather than testing whether the query retrieves the corresponding gallery. 

Previous research~\cite{DBLP:conf/cvpr/BogolinCJLA22} has shown that when the bank fails to effectively represent the space, possibly due to sampling bias, the performance is significantly degraded, even falling below that of unnormalized similarities, because the similarities with non-hubs might be inaccurately normalized by IS and DualIS. 
To address this issue, we introduce the \textbf{Dual} \textbf{D}ynamic \textbf{I}nverted \textbf{S}oftmax (\textbf{DualDIS}). 
Firstly, we precompute two activation sets, $\mathcal{A}_{\hat{G}}$ and $\mathcal{A}_{\hat{Q}}$, storing the indices of gallery data that might be hubs. 
Then, DualDIS only normalize the similarity with the gallery points in these sets to avoid inaccurately normalizing the similarity with non-hubs. 
Specifically, for a query $\mathbf{q}, \forall i \in [N_G]$, we have,
\begin{equation}
\begin{aligned}
    \hat{s}_{\mathbf{q}, \mathbf{g}_i} =& \bar{s}_{\mathbf{q}, \mathbf{g}_i}^\mathbf{g} * \bar{s}_{\mathbf{q}, \mathbf{g}_i}^\mathbf{q}\,,\\  
    \bar{s}_{\mathbf{q}, \mathbf{g}_i}^\mathbf{g} =& \left\{
    \begin{aligned}
        & \hat{s}_{\mathbf{q}, \mathbf{g}_i}^\mathbf{g}, \text{if } \arg\max_{j \in [N_G]} s_{\mathbf{q}, {\mathbf{g}}_j} \in \mathcal{A}_{\mathbf{g}} \\
        & s_{\mathbf{q}, \mathbf{g}_i}, \text{otherwise}
    \end{aligned}
    \right.\,,\\  
    \bar{s}_{\mathbf{q}, \mathbf{g}_i}^\mathbf{q} =& \left\{
    \begin{aligned}
        & \hat{s}_{\mathbf{q}, \mathbf{g}_i}^\mathbf{g},  \text{if } \arg\max_{j \in [N_G]} s_{\mathbf{q}, {\mathbf{g}}_j} \in \mathcal{A}_{\mathbf{q}} \\
        & s_{\mathbf{q}, \mathbf{g}_i}, \text{otherwise}
    \end{aligned}
    \right.\,,
\end{aligned} 
\label{eq: dualdis}
\end{equation}
where 
$\mathcal{A}_{*} = \{\arg\max^{k}_{j \in [N_G]} s_{\mathbf{g}_{j}, \hat{*}_{i}}, i \in [N_{*}]\}$, and $\arg\max^{k}_{j \in [N_G]} s_{\mathbf{g}_{j}, \hat{*}_{i}}$ return the top $k$ indices that maximize $s_{\mathbf{g}_{j}, \hat{*}_{i}}$ ($* \in \{\mathbf{g}, \mathbf{q}\}$). 
In other words, $\mathcal{A}_{*}$ contains the indices of the top $k$ retrieved gallery points by any bank point. %
As the activation sets can be computed before inference and reused, thereby not increasing the computational complexity of inference. 
On the other hand, benefiting from these two activation sets, DualDIS is shown to be more robust than DualIS. 

\section{Experiments}
In this section, we first conduct experiments to demonstrate that \ours\ can efficiently improve the retrieval performance as a by-product. 
Next, our ablation studies indicate that \ours\ effectively mitigates hubness by reducing skewness and exhibiting robustness to hyperparameters. 
For clarity, we use IS, DIS, DualIS, and DualDIS to represent QBNorm (IS), QBNorm (DIS), DBNorm (DualIS), and DBNorm (DualDIS), respectively. 
Following QBNorm, queries remain independent of each other as in practice, queries do not always occur at the same time. 
Detailed comparison with GC and CSLS is deferred to \Cref{sec: GC CSLS}.

\subsection{Datasets, Experimental Settings, and Evaluation Metrics}

While we mainly focus our experiments on standard benchmarks for text-video retrieval, \ie, MSR-VTT~\cite{DBLP:conf/cvpr/XuMYR16}, MSVD~\cite{chen-dolan-2011-collecting}, ActivityNet~\cite{caba2015activitynet}, and DiDemo~\cite{hendricks_localizing_2017}, we also explore the generalization of \ours\ on two text-image retrieval benchmarks (MSCOCO~\cite{DBLP:conf/eccv/LinMBHPRDZ14} and Flickr30k~\cite{DBLP:journals/ijcv/PlummerWCCHL17}) and two audio-text retrieval benchmarks (AudioCaps~\cite{kim-etal-2019-audiocaps} and CLOTHO~\cite{drossos_clotho_2020}). 
We test \ours\ with DualIS and DualDIS on five video-text retrieval methods: CE+~\cite{DBLP:conf/bmvc/LiuANZ19}, TT-CE+~\cite{DBLP:conf/iccv/CroitoruBLJZAL21}, CLIP4Clip~\cite{DBLP:journals/ijon/LuoJZCLDL22}, X-CLIP~\cite{DBLP:conf/mm/MaXSYZJ22}, and CLIP2Video~\cite{park-etal-2022-exposing}, two image-text retrieval methods: CLIP~\cite{DBLP:conf/icml/RadfordKHRGASAM21} and Oscar~\cite{DBLP:conf/eccv/Li0LZHZWH0WCG20}, and one audio-text retrieval method: AR-CE~\cite{koepke_audio_2022}.

\begin{table}[t!]
\centering
\resizebox{\columnwidth}{!}{%
\begin{tabular}{ll|ccccc}
\toprule
                    Methods & Normalization & R@1 $\uparrow$         & R@5 $\uparrow$         & R@10 $\uparrow$        & MdR $\downarrow$    & MnR  $\downarrow$   \\ \midrule
\multicolumn{7}{c}{MSR-VTT (full split)}\\\midrule
\textcolor{gray}{RoME}   & & \textcolor{gray}{10.7}          & \textcolor{gray}{29.6}          & \textcolor{gray}{41.2}           & \textcolor{gray}{17.0}             & \textcolor{gray}{-}             \\
\textcolor{gray}{Frozen}   & & \textcolor{gray}{32.5}          & \textcolor{gray}{61.5}          & \textcolor{gray}{71.2}           & \textcolor{gray}{-}             & \textcolor{gray}{-}     \\
\midrule
\multirow{5}{*}{CE+}                     &                                & 12.62                                        & 33.87          & 46.38          & 12.0          & 74.99            \\
                                         & + IS                           & \underline{13.31}                                        & \underline{35.00}          & \underline{47.46}          & 12.0          & \underline{74.16}   \\
                                         & + DIS                          & \underline{13.31}                                        & \underline{35.00}          & \underline{47.46}          & 12.0          & \underline{74.16}   \\ \cmidrule{2-7} 
                                         \rowcolor{green!10}\cellcolor{white}& + DualIS                 & \textbf{14.88}                               & \textbf{37.36} & \textbf{50.00} & {11.0} & \textbf{70.13} \\
                                         \rowcolor{green!10}\cellcolor{white}& + DualDIS                & \textbf{14.88}                               & \textbf{37.36} & \textbf{50.00} & {11.0} & \textbf{70.13} \\ \midrule
\multirow{5}{*}{TT-CE+}                          &                 & 14.61                                        & 37.81          & 50.78          & 10.0          & 63.31       \\
                                                 & + IS            & 16.58                                        & 40.75          & 53.44          & 9.0           & 60.52              \\
                                                 & + DIS           & 16.59                                        & 40.73          & 53.44          & 9.0           & 60.51             \\ \cmidrule{2-7} 
                                                 \rowcolor{green!10}\cellcolor{white}& + DualIS  & \textbf{17.06}                               & \textbf{41.63} & \textbf{54.25} & 8.0           & \underline{60.10}\\
                                          \rowcolor{green!10}\cellcolor{white}& + DualDIS & \underline{17.02}                               & \underline{41.60} & \underline{54.23} & 8.0           & \textbf{60.01}  \\ 
\midrule
\multicolumn{7}{c}{MSR-VTT (1k split)}\\\midrule
\textcolor{gray}{DiscreteCodebook}   &   & \textcolor{gray}{43.4}          & \textcolor{gray}{72.3}          & \textcolor{gray}{81.2}           & \textcolor{gray}{-}               & \textcolor{gray}{14.8}      \\
\textcolor{gray}{VCM}      &         & \textcolor{gray}{43.8}          & \textcolor{gray}{71.0}          & \textcolor{gray}{-}           & \textcolor{gray}{2.0}             & \textcolor{gray}{14.3}      \\
\textcolor{gray}{CenterCLIP}   &   & \textcolor{gray}{44.2}          & \textcolor{gray}{71.6}          & \textcolor{gray}{82.1}           & \textcolor{gray}{2.0}               & \textcolor{gray}{15.1}          \\
\textcolor{gray}{Align\&Tell}    &   & \textcolor{gray}{45.2}          & \textcolor{gray}{73.0}          & \textcolor{gray}{82.9}           & \textcolor{gray}{2.0}             & \textcolor{gray}{-}        \\
\textcolor{gray}{CLIP2TV}        &    & \textcolor{gray}{45.6}          & \textcolor{gray}{71.1}          & \textcolor{gray}{80.8}           & \textcolor{gray}{2.0}             & \textcolor{gray}{15.0}         \\
\textcolor{gray}{X-Pool} &    & \textcolor{gray}{46.9}          & \textcolor{gray}{72.8}          & \textcolor{gray}{82.2}           & \textcolor{gray}{2.0}               & \textcolor{gray}{14.3}         \\
\textcolor{gray}{TS2-Net}   & & \textcolor{gray}{47.0}          & \textcolor{gray}{74.5}          & \textcolor{gray}{83.8}           & \textcolor{gray}{2.0}             & \textcolor{gray}{13.0}        \\
\textcolor{gray}{CAMoE}       &         & \textcolor{gray}{47.3}          & \textcolor{gray}{74.2}          & \textcolor{gray}{84.5}           & \textcolor{gray}{2.0}             & \textcolor{gray}{11.9}        \\
\midrule
\multirow{5}{*}{CLIP4Clip} &                                & 44.10          & \underline{71.70}          & 81.40          & 2.0           & \multicolumn{1}{c}{\underline{15.51}}     \\
                           & + IS                           & \underline{44.20}          & \underline{71.70}          & \underline{81.60}          & 2.0           & \multicolumn{1}{c}{{15.64}} \\
                           & + DIS                          & \underline{44.20}          & \underline{71.70}          & \underline{81.60}          & 2.0           & \multicolumn{1}{c}{{15.64}}     \\\cmidrule{2-7}
                           \rowcolor{green!10}\cellcolor{white}& + DualIS                 & \textbf{45.00} & \textbf{72.50} & \textbf{82.10} & 2.0           & \multicolumn{1}{c}{\textbf{15.32}} \\
                           \rowcolor{green!10}\cellcolor{white}& + DualDIS                & \textbf{45.00} & \textbf{72.50} & \textbf{82.10} & 2.0           & \multicolumn{1}{c}{\textbf{15.32}}\\\midrule
\multirow{5}{*}{CLIP2Video} &                                & 46.00          & 71.60          & 81.60          & 2.0           & \multicolumn{1}{c}{14.51}              \\
                            & + IS                           & \underline{47.00}          & \underline{72.80}          & \underline{82.10}          & 2.0           & \multicolumn{1}{c}{\underline{13.91}}              \\
                            & + DIS                          & \underline{47.00}          & \underline{72.80} & \underline{82.10}          & 2.0           & \multicolumn{1}{c}{\underline{13.91}}               \\\cmidrule{2-7}
                            \rowcolor{green!10}\cellcolor{white}& + DualIS                 & \textbf{47.20} & \textbf{73.20} & \textbf{82.30} & 2.0           & \multicolumn{1}{c}{\textbf{13.90}}  \\
                            \rowcolor{green!10}\cellcolor{white}& + DualDIS                & \textbf{47.20} & 72.70          & \textbf{82.30} & 2.0           & \multicolumn{1}{c}{\textbf{13.90}} \\ \midrule
\multirow{5}{*}{X-CLIP}    &                                & 46.30          & 74.00          & \underline{83.40}          & 2.0           & \multicolumn{1}{c}{\textbf{12.80}}    \\
                           & + IS                           & 48.60          & \underline{74.10}          & \textbf{84.10} & 2.0           & \multicolumn{1}{c}{13.35}    \\
                           & + DIS                          & 48.60          & \underline{74.10}          & \textbf{84.10} & 2.0           & \multicolumn{1}{c}{13.35}        \\\cmidrule{2-7}
                           \rowcolor{green!10}\cellcolor{white}& + DualIS                 & \textbf{48.80} & \textbf{74.30} & \textbf{84.10} & 2.0           & \multicolumn{1}{c}{\underline{13.30}}      \\
                           \rowcolor{green!10}\cellcolor{white}& + DualDIS                & \underline{48.70} & \textbf{74.30} & \textbf{84.10} & 2.0           & \multicolumn{1}{c}{\underline{13.30}}       \\
\bottomrule
\end{tabular}%
}
\caption{Text-to-Video Retrieval performance on MSR-VTT (full split and 1k split). 
Best in \textbf{Bold} and the second best is \underline{underlined}. }
\label{tab: quan: msrvtt}
\end{table}

\begin{table}[t!]
\centering
\resizebox{\columnwidth}{!}{%
\begin{tabular}{ll|ccccc}
\toprule
                    Methods & Normalization & R@1 $\uparrow$         & R@5 $\uparrow$         & R@10 $\uparrow$        & MdR $\downarrow$    & MnR  $\downarrow$   \\ \midrule
\textcolor{gray}{FSE}         &               & \textcolor{gray}{11.5} & \textcolor{gray}{31.8} & \textcolor{gray}{77.7} & \textcolor{gray}{13.0} & \textcolor{gray}{-}       \\
\textcolor{gray}{HiT}         &               & \textcolor{gray}{27.7} & \textcolor{gray}{58.6} & \textcolor{gray}{94.7} & \textcolor{gray}{4.0}  & \textcolor{gray}{-}       \\
\textcolor{gray}{VCM}         &               & \textcolor{gray}{40.8} & \textcolor{gray}{72.8} & \textcolor{gray}{98.2} & \textcolor{gray}{2.0}  & \textcolor{gray}{7.3}     \\
\textcolor{gray}{CenterCLIP}  &               & \textcolor{gray}{43.9} & \textcolor{gray}{74.6} & \textcolor{gray}{85.8} & \textcolor{gray}{2.0}  & \textcolor{gray}{6.7}     \\
\textcolor{gray}{Align\&Tell} &               & \textcolor{gray}{42.6} & \textcolor{gray}{73.8} & \textcolor{gray}{98.7} & \textcolor{gray}{2.0}  & \textcolor{gray}{-}       \\
\textcolor{gray}{CLIP2TV}     &               & \textcolor{gray}{44.1} & \textcolor{gray}{75.2} & \textcolor{gray}{98.4} & \textcolor{gray}{2.0}  & \textcolor{gray}{6.5}     \\
\textcolor{gray}{TS2-Net}     &               & \textcolor{gray}{41.0} & \textcolor{gray}{73.6} & \textcolor{gray}{84.5} & \textcolor{gray}{2.0}  & \textcolor{gray}{8.4}     \\
\textcolor{gray}{CAMoE}       &               & \textcolor{gray}{51.0} & \textcolor{gray}{77.7} & \textcolor{gray}{-}    & \textcolor{gray}{-}    & \textcolor{gray}{-}       \\ \midrule
\multirow{5}{*}{CE+}          &               & 19.16                  & 47.79                  & 65.79                  & 6.0                    & 21.99                     \\
                              & + IS          & 20.13                  & 50.58                  & 66.44                  & 5.0                    & 21.07                     \\
                              & + DIS         & 20.20                  & 50.50                  & 66.32                  & 5.0                    & 21.20                     \\\cmidrule{2-7}
                         \rowcolor{green!10}\cellcolor{white}& + DualIS      & \textbf{22.66}         & \textbf{52.21}         & \underline{68.09}      & 5.0                    & \textbf{19.02}            \\
\rowcolor{green!10}\cellcolor{white}& + DualDIS     & \underline{22.35}      & \underline{51.96}      & \textbf{68.25}         & 5.0                    & \underline{19.22}         \\ \midrule
\multirow{5}{*}{TT-CE+}       &               & 23.29                  & 56.42                  & 73.78                  & 4.0                    & 13.59                     \\
                              & + IS          & 24.69                  & 57.98                  & 74.42                  & 4.0                    & 13.48                     \\
                              & + DIS         & 24.65                  & 57.64                  & 74.31                  & 4.0                    & 13.35                     \\\cmidrule{2-7}
                              \rowcolor{green!10}\cellcolor{white}& + DualIS      & \textbf{27.15}         & \textbf{60.81}         & \textbf{76.67}         & 4.0                    & \textbf{12.10}            \\
                              \rowcolor{green!10}\cellcolor{white}& + DualDIS     & \underline{26.80}      & \underline{60.16}      & \underline{76.35}      & 4.0                    & \underline{12.18}         \\ \midrule
\multirow{5}{*}{CLIP4Clip}    &               & 41.85                  & 74.44                  & 84.84                  & 2.0                    & 6.84                      \\
                              & + IS          & 45.93                  & 77.52                  & 87.07                  & 2.0                    & 6.39                      \\
                              & + DIS         & 46.02                  & 77.29                  & 86.83                  & 2.0                    & \underline{6.29}          \\\cmidrule{2-7}
                              \rowcolor{green!10}\cellcolor{white}& + DualIS      & \underline{46.71}      & \underline{77.47}      & \textbf{87.35}         & 2.0                    & \textbf{6.01}             \\
                              \rowcolor{green!10}\cellcolor{white}& + DualIS      & \textbf{46.76}         & \textbf{77.48}         & \underline{87.28}      & 2.0                    & \textbf{6.01}             \\ \midrule
\multirow{5}{*}{X-CLIP}       &               & 46.25                  & 76.02                  & 86.05                  & 2.0                    & \multicolumn{1}{c}{6.37} \\
                              & + IS          & 49.36                  & \textbf{79.16}         & \underline{88.36}      & 2.0                    & \multicolumn{1}{c}{5.71} \\
                              & + DIS         & 49.39                  & \underline{78.60}      & 87.97                  & 2.0                    & \multicolumn{1}{c}{5.80} \\\cmidrule{2-7}
                              \rowcolor{green!10}\cellcolor{white}& + DualIS      & \underline{49.96}      & 78.51                  & \textbf{88.62}         & 2.0                    & \multicolumn{1}{c}{\textbf{5.48}} \\
                              \rowcolor{green!10}\cellcolor{white}& + DualDIS     & \textbf{50.17}         & 78.03                  & {88.06}                & {1.0}           & \multicolumn{1}{c}{\underline{5.61}} \\\bottomrule
\end{tabular}%
}
\caption{Text-to-Video Retrieval performance on ActivityNet. Best in \textbf{Bold} and the second best is \underline{underlined}.}
\label{tab: quan: actnet}
\end{table}

\begin{table}[t!]
\centering
\resizebox{\columnwidth}{!}{%
\begin{tabular}{ll|ccccc}
\toprule
                    Methods & Normalization & R@1 $\uparrow$         & R@5 $\uparrow$         & R@10 $\uparrow$        & MdR $\downarrow$    & MnR  $\downarrow$   \\ \midrule
\textcolor{gray}{FSE}         &               & \textcolor{gray}{18.2} & \textcolor{gray}{44.8} & \textcolor{gray}{89.1} & \textcolor{gray}{7.0} & \textcolor{gray}{-}    \\
\textcolor{gray}{Frozen}      &               & \textcolor{gray}{33.7} & \textcolor{gray}{64.7} & \textcolor{gray}{76.3} & \textcolor{gray}{3.0} & \textcolor{gray}{-}    \\
\textcolor{gray}{CenterCLIP}  &               & \textcolor{gray}{47.3} & \textcolor{gray}{76.9} & \textcolor{gray}{86.0} & \textcolor{gray}{2.0} & \textcolor{gray}{9.7}  \\
\textcolor{gray}{Align\&Tell} &               & \textcolor{gray}{47.1} & \textcolor{gray}{77.0} & \textcolor{gray}{85.6} & \textcolor{gray}{2.0} & \textcolor{gray}{-}    \\
\textcolor{gray}{CLIP2TV}     &               & \textcolor{gray}{47.0} & \textcolor{gray}{76.5} & \textcolor{gray}{85.1} & \textcolor{gray}{2.0} & \textcolor{gray}{10.1} \\
\textcolor{gray}{X-Pool}      &               & \textcolor{gray}{47.2} & \textcolor{gray}{77.4} & \textcolor{gray}{86.0} & \textcolor{gray}{2.0} & \textcolor{gray}{9.3}  \\
\textcolor{gray}{CAMoE}       &               & \textcolor{gray}{49.8} & \textcolor{gray}{79.2} & \textcolor{gray}{87.0} & \textcolor{gray}{-}   & \textcolor{gray}{9.4}  \\ \hline
\multirow{5}{*}{CE+}          &               & 23.94                  & 54.98                  & 69.00                  & 4.0                   & 18.46                  \\
                              & + IS          & 24.64                  & \textbf{56.64}         & \underline{70.45}      & 4.0                   & \underline{20.43}      \\
                              & + DIS         & 24.66                  & \textbf{56.64}         & \textbf{70.46}         & 4.0                   & \textbf{20.42}         \\\cmidrule{2-7}
                              \rowcolor{green!10}\cellcolor{white}& + DualIS      & \underline{25.04}      & 56.01                  & 69.59                  & 4.0                   & 20.70                  \\
                              \rowcolor{green!10}\cellcolor{white}& + DualDIS     & \textbf{25.06}         & \underline{56.02}      & 69.65                  & 4.0                   & 20.64                  \\ \hline
\multirow{5}{*}{TT-CE+}       &               & 24.42                  & 56.20                  & 70.44                  & 4.0                   & 17.16                  \\
                              & + IS          & 26.55                  & 59.68                  & \underline{72.85}      & 4.0                   & 17.72                  \\
                              & + DIS         & 26.57                  & 59.68                  & 72.83                  & 4.0                   & 17.71                  \\\cmidrule{2-7}
                              \rowcolor{green!10}\cellcolor{white}& + DualIS      & \underline{27.21}      & \textbf{60.23}         & \textbf{73.35}         & 4.0                   & \textbf{16.88}         \\
                             \rowcolor{green!10}\cellcolor{white} & + DualDIS     & \textbf{27.24}         & \underline{60.20}      & \textbf{73.35}         & 4.0                   & \underline{16.91}      \\ \hline
\multirow{5}{*}{CLIP4Clip}    &               & 44.64                  & 74.66                  & 83.99                  & 2.0                   & 10.32                  \\
                              & + IS          & 46.05                  & 75.60                  & \underline{84.36}      & 2.0                   & 10.16                  \\
                              & + DIS         & 46.05                  & 75.60                  & \textbf{84.37}         & 2.0                   & 10.16                  \\\cmidrule{2-7}
                              \rowcolor{green!10}\cellcolor{white}& + DualIS      & \underline{46.33}      & \underline{75.91}      & 84.35                  & 2.0                   & \underline{10.14}      \\
                              \rowcolor{green!10}\cellcolor{white}& + DualDIS     & \textbf{46.34}         & \textbf{75.95}         & 84.34                  & 2.0                   & \textbf{10.12}         \\ \hline
\multirow{5}{*}{CLIP2Video}   &               & 47.05                  & 76.97                  & 85.59                  & 2.0                   & 9.53                   \\
                              & + IS          & 47.52                  & \textbf{77.95}         & 86.01                  & 2.0                   & \underline{9.47}       \\
                              & + DIS         & 47.52                  & \textbf{77.95}         & 86.00                  & 2.0                   & \underline{9.47}       \\\cmidrule{2-7}
                           \rowcolor{green!10}\cellcolor{white}   & + DualIS      & \underline{47.95}      & \textbf{77.95}         & \textbf{86.22}         & 2.0                   & \textbf{9.30}          \\
                          \rowcolor{green!10}\cellcolor{white}    & + DualDIS     & \textbf{47.97}         & \underline{77.93}      & \underline{86.20}      & 2.0                   & \textbf{9.30}          \\ \hline
\multirow{5}{*}{X-CLIP}       &               & 46.31                  & 76.84                  & \underline{85.31}                  & 2.0                   & \textbf{9.59}          \\
                              & + IS          & \underline{47.06}      & 77.44                  & {85.22}      & 2.0                   & 10.34                  \\
                              & + DIS         & \underline{47.06}      & 77.43                  & {85.22}      & 2.0                   & 10.34                  \\\cmidrule{2-7}
                        \rowcolor{green!10}\cellcolor{white}      & + DualIS      & \textbf{47.95}         & \textbf{78.36}         & \textbf{86.00}         & 2.0                   & \underline{9.70}       \\
                        \rowcolor{green!10}\cellcolor{white}      & + DualDIS     & \textbf{47.95}         & \underline{78.35}      & \textbf{86.00}         & 2.0                   & \underline{9.70}       \\ \bottomrule
\end{tabular}%
}
\caption{Text-to-Video Retrieval performance on MSVD. Best in \textbf{Bold} and the second best is \underline{underlined}. }
\label{tab: quan: msvd}
\end{table}

\begin{table}[t!]
\centering
\resizebox{\columnwidth}{!}{%
\begin{tabular}{ll|ccccc}
\toprule
                    Methods & Normalization & R@1 $\uparrow$         & R@5 $\uparrow$         & R@10 $\uparrow$        & MdR $\downarrow$    & MnR  $\downarrow$   \\ \midrule
                       \multicolumn{7}{c}{MSCOCO (5k split)}\\\midrule
\textcolor{gray}{ViLT}  &               & \textcolor{gray}{40.4} & \textcolor{gray}{70.0} & \textcolor{gray}{81.1} & \textcolor{gray}{-} & \textcolor{gray}{-} \\
\textcolor{gray}{ALIGN} &               & \textcolor{gray}{45.6} & \textcolor{gray}{69.8} & \textcolor{gray}{78.6} & \textcolor{gray}{-} & \textcolor{gray}{-} \\
\textcolor{gray}{CODIS} &               & \textcolor{gray}{53.9} & \textcolor{gray}{79.5} & \textcolor{gray}{87.1} & \textcolor{gray}{-} & \textcolor{gray}{-} \\
\textcolor{gray}{ALBEF} &               & \textcolor{gray}{60.7} & \textcolor{gray}{84.3} & \textcolor{gray}{90.5} & \textcolor{gray}{-} & \textcolor{gray}{-} \\ \midrule
\multirow{5}{*}{CLIP}   &               & 30.31                  & 54.74                  & 66.12                  & 4.0                 & 25.39               \\
                        & + IS          & 35.15                  & 60.71                  & \underline{71.20}      & 3.0                 & 21.69               \\
                        & + DIS         & 35.16                  & 60.70                  & 71.19                  & 3.0                 & 21.69               \\\cmidrule{2-7}
                    \rowcolor{green!10}\cellcolor{white}    & + DualIS      & \textbf{37.93}         & \textbf{63.36}         & \textbf{73.37}         & 3.0                 & \underline{21.23}   \\
                  \rowcolor{green!10}\cellcolor{white}      & + DualDIS     & \underline{37.92}      & \underline{63.35}      & \textbf{73.37}         & 3.0                 & \textbf{21.17}      \\ \midrule
\multirow{5}{*}{Oscar}  &               & 52.50                  & 80.03                  & 87.96                  & 1.0                 & \textbf{10.68}      \\
                        & + IS          & 52.77                  & 80.03                  & 87.99                  & 1.0                 & \underline{10.94}   \\
                        & + DIS         & 53.47                  & 80.02                  & 87.69                  & 1.0                 & 12.34               \\\cmidrule{2-7}
                    \rowcolor{green!10}\cellcolor{white}    & + DualIS      & \underline{53.86}      & \underline{80.39}      & \underline{88.09}      & 1.0                 & 11.83               \\
                    \rowcolor{green!10}\cellcolor{white}    & + DualDIS     & \textbf{53.91}         & \textbf{80.57}         & \textbf{88.10}         & 1.0                 & {11.59}             \\ \midrule
                       \multicolumn{7}{c}{Flickr30k}\\\midrule
\textcolor{gray}{ViLT}   &               & \textcolor{gray}{55.0} & \textcolor{gray}{82.5} & \textcolor{gray}{89.8} & \textcolor{gray}{-} & \textcolor{gray}{-}       \\
\textcolor{gray}{UNITER} &               & \textcolor{gray}{68.7} & \textcolor{gray}{89.2} & \textcolor{gray}{93.9} & \textcolor{gray}{-} & \textcolor{gray}{-}       \\
\textcolor{gray}{ALIGN}  &               & \textcolor{gray}{75.7} & \textcolor{gray}{93.8} & \textcolor{gray}{96.8} & \textcolor{gray}{-} & \textcolor{gray}{-}       \\
\textcolor{gray}{CODIS}  &               & \textcolor{gray}{79.7} & \textcolor{gray}{94.8} & \textcolor{gray}{97.3} & \textcolor{gray}{-} & \textcolor{gray}{-}       \\
\textcolor{gray}{ALBEF}  &               & \textcolor{gray}{85.6} & \textcolor{gray}{97.5} & \textcolor{gray}{98.9} & \textcolor{gray}{-} & \textcolor{gray}{-}       \\ \midrule
\multirow{5}{*}{CLIP}    &               & \underline{58.98}      & \textbf{83.48}         & 90.14                  & 1.0                 & \textbf{6.04}             \\
                         & + IS          & 57.78                  & {83.40}                & \underline{90.16}      & 1.0                 & \underline{6.20}          \\
                         & + DIS         & \textbf{59.02}         & 83.40                  & {90.10}                & 1.0                 & \textbf{6.04}             \\\cmidrule{2-7}
                      \rowcolor{green!10}\cellcolor{white}   & + DualIS      & {58.02}                & {83.40}                & \textbf{90.22}         & 1.0                 & \underline{6.20}          \\
                      \rowcolor{green!10}\cellcolor{white}   & + DualDIS     & \textbf{59.02}         & \underline{83.42}      & {90.10}                & 1.0                 & \textbf{6.04}             \\ \midrule
\multirow{5}{*}{Oscar}   &               & 71.60                  & 91.50                  & \textbf{94.96}         & 1.0                 & \multicolumn{1}{c}{4.24} \\
                         & + IS          & \underline{72.26}      & \textbf{91.74}         & \underline{94.88}      & 1.0                 & \multicolumn{1}{c}{4.26} \\
                         & + DIS         & \underline{72.26}      & \textbf{91.74}         & \underline{94.88}      & 1.0                 & \multicolumn{1}{c}{4.26} \\\cmidrule{2-7}
                       \rowcolor{green!10}\cellcolor{white}  & + DualIS      & \textbf{73.00}         & \textbf{91.74}         & 94.80                  & 1.0                 & \multicolumn{1}{c}{\textbf{4.19}} \\
                        \rowcolor{green!10}\cellcolor{white} & + DualDIS     & \textbf{73.00}         & \underline{91.70}      & 94.78                  & 1.0                 & \multicolumn{1}{c}{\underline{4.21}} \\
\bottomrule
\end{tabular}%
}
\caption{Text-to-Image Retrieval performance on MSCOCO (5k split) and Flickr30k. 
Best in \textbf{Bold} and the second best is \underline{underlined}. 
Due to the limitation of the computational resources, we only use 20\% of data from the training data to construct the dual banks. 
}
\label{tab: quan: coco and f30k}
\end{table}

\begin{table}[t!]
\centering
\resizebox{\columnwidth}{!}{%
\begin{tabular}{ll|ccccc}
\toprule
                Methods        &         Normalization                       & R@1 $\uparrow$ & R@5 $\uparrow$ & R@10 $\uparrow$ & MdR $\downarrow$ & MnR  $\downarrow$\\ \midrule
                \multicolumn{7}{c}{AudioCaps}\\\midrule
\textcolor{gray}{MoEE}                                                &                                                     & \textcolor{gray}{23.00}          & \textcolor{gray}{55.70}          & \textcolor{gray}{71.00}           & \textcolor{gray}{4.0}              & \textcolor{gray}{16.30}           \\
\textcolor{gray}{MMT}                                                 &                                                     & \textcolor{gray}{36.10}          & \textcolor{gray}{72.00}          & \textcolor{gray}{84.50}           & \textcolor{gray}{2.3}              & \textcolor{gray}{7.50}   \\ \midrule
\multirow{5}{*}{AR-CE}  &                                & 22.23          & 54.49          & 70.54           & 5.0              & 15.89             \\
                        & + IS                             & \underline{23.19}          & {55.96}          & \underline{70.05}           & {4.0}     & \underline{20.44}    \\
                        & + DIS                            & \underline{23.19}          & 55.93          & \underline{70.05}           & {4.0}     & 20.45       \\\cmidrule{2-7}
\rowcolor{green!10}\cellcolor{white}     & + DualIS                & \textbf{24.04} & \textbf{56.84} & \textbf{71.03}  & {4.0}     & \textbf{18.44}    \\
\rowcolor{green!10}\cellcolor{white}& + DualDIS               & \textbf{24.04} & \underline{56.81} & \textbf{71.03}  & {4.0}     & \textbf{18.44}    \\ \midrule
\multicolumn{7}{c}{CLOTHO}\\\midrule
                        \textcolor{gray}{MoEE}                                                &                                                     & \textcolor{gray}{6.00}         & \textcolor{gray}{20.80}        & \textcolor{gray}{32.30}         & \textcolor{gray}{23.0}          & \textcolor{gray}{60.20}                              \\
\textcolor{gray}{MMT}                                                 &                                                     & \textcolor{gray}{6.50}      & \textcolor{gray}{21.60}       & \textcolor{gray}{66.90}          & \textcolor{gray}{23.0}             & \textcolor{gray}{67.70}      \\ \midrule
\multirow{5}{*}{AR-CE}  &                                    & 6.27   & 22.32   & 33.30  & 23.0  & 58.95 \\
                        & + IS                                 & 6.83   & 23.50   & 35.04  & 22.0  & 56.52  \\
                        & + DIS                                & 6.83   & 23.46   & 34.97  & 22.0  & 56.61  \\ \cmidrule{2-7} 
\rowcolor{green!10}\cellcolor{white} &  + DualIS  & \underline{7.06}   & \underline{24.42}   & \underline{36.29}  & {21.0}  & \underline{54.12}   \\
\rowcolor{green!10}\cellcolor{white} &  + DualDIS & \textbf{7.12}   & \textbf{24.50}   & \textbf{36.17}  & {21.0}  & \textbf{54.85}  \\
\bottomrule
\end{tabular}%
}
\caption{
Text-to-Audio Retrieval performance on AudioCaps and CLOTHO. Best in \textbf{Bold} and the second best is \underline{underlined}. 
}
\label{tab: quan: audiocaps and clotho}
\end{table}

For evaluation metrics, we employ recall at Rank K (R@K, higher is better), median rank (MdR, lower is better), and mean rank (MnR, lower is better) as commonly utilized retrieval metrics in previous retrieval works~\cite{DBLP:conf/icml/RadfordKHRGASAM21,DBLP:journals/ijon/LuoJZCLDL22,DBLP:conf/mm/MaXSYZJ22}. 

\subsection{Quantitative results}

In this section, we provide quantitative results for eight cross-modal retrieval benchmarks. 
We compare our methods with IS~\cite{DBLP:conf/iclr/SmithTHH17} and DIS~\cite{DBLP:conf/cvpr/BogolinCJLA22}, which are two representative post-processing methods. 
Due to limitations of space, some retrieval results and detailed analyses are presented in \Cref{sec: quan results}.

\textbf{Text-video retrieval}.
The text-to-video results on MSR-VTT, AcvitityNet, and MSVD are presented in \Cref{tab: quan: msrvtt,tab: quan: actnet,tab: quan: msvd}.  
Due to the limitation of space, the results for DiDemo and video-to-text retrieval results on MSR-VTT, ActivityNet, and MSVD are deferred to the Appendix. 
Moreover, we compare our results with state-of-the-art methods in video retrieval, \ie, FSE~\cite{ferrari_cross-modal_2018},
HiT~\cite{liu_hit_2021},
RoME~\cite{rony-etal-2022-rome},
Frozen~\cite{DBLP:conf/iccv/BainNVZ21},
DiscreteCodebook~\cite{liu-etal-2022-cross}, 
VCM~\cite{DBLP:conf/aaai/CaoW0022}, 
CenterCLIP~\cite{10.1145/3477495.3531950}, 
Align\&Tell~\cite{9878037}, 
CLIP2TV~\cite{DBLP:journals/corr/abs-2111-05610}, 
X-Pool~\cite{DBLP:conf/cvpr/GortiVMGVGY22},
TS2-Net~\cite{DBLP:conf/eccv/LiuXXCJ22}, 
and
CAMoE~\cite{DBLP:journals/corr/abs-2109-04290}. 
Notably, our approach, \ours, outperforms previous normalization methods by a significant margin on all four benchmarks. 
Specifically, with DualIS and DualDIS, the retrieval performance of CLIP4Clip yields the best performance, achieving R@1 of 45.00 and R@5 of 72.50. 
Similar observations can be obtained on other benchmarks. 
However, it is important to note that using the activation sets (DualDIS) may have an adverse effect on retrieval performance, possibly due to the biased construction of these sets.

\begin{figure}[t!]
    \centering
    
    \subfloat[Top-3 Text-to-Video retrieval examples.]{
        \includegraphics[width=\columnwidth]{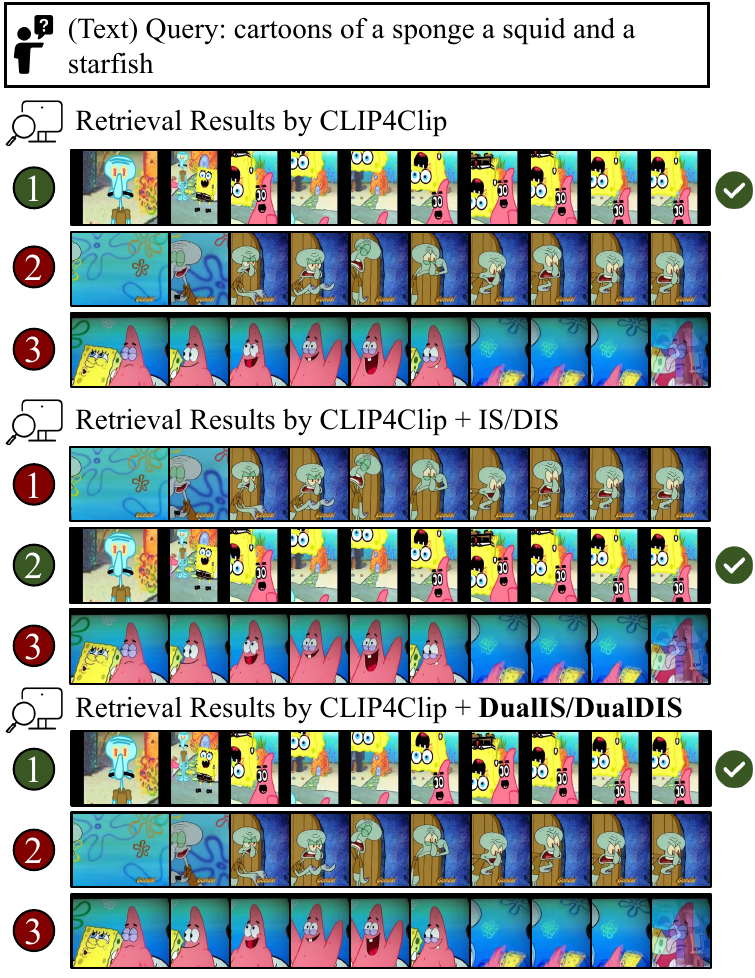}
    }
    
    \subfloat[Top-3 Video-to-Text retrieval examples.]{
        \centering
        \includegraphics[width=\columnwidth]{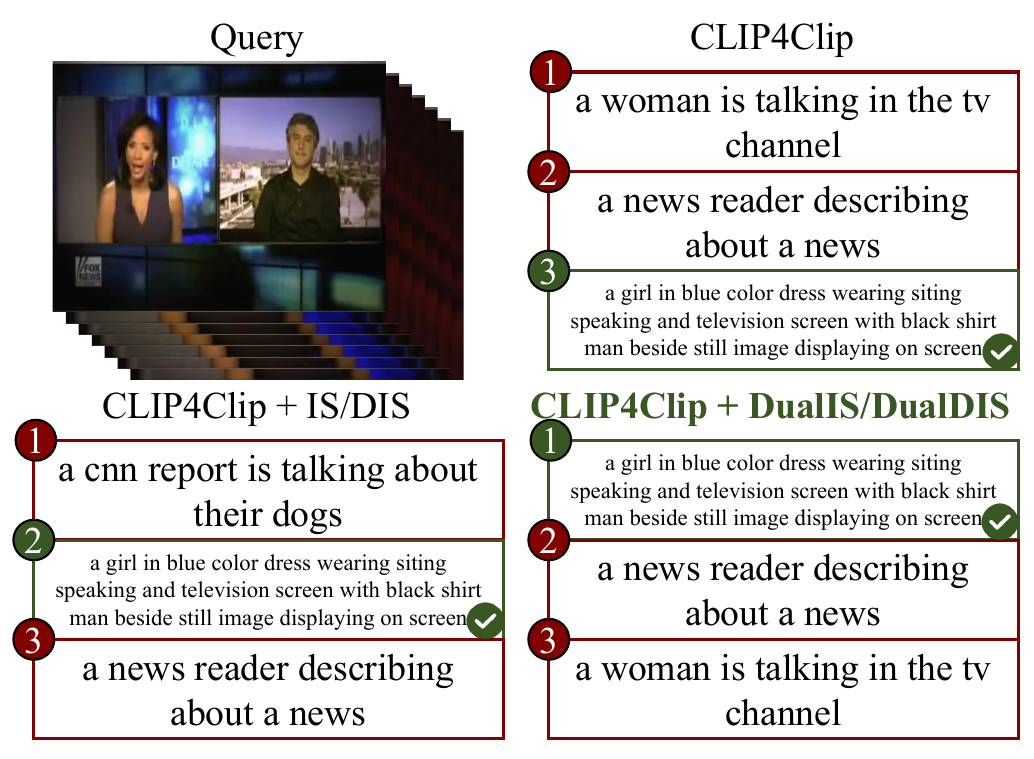}
    }
    \caption{Retrieval examples on MSR-VTT. 
    More examples are presented in \Cref{fig: qualitative appendix} due to the limitation of space.
    }
    \label{fig: qualitative}
\end{figure}

\begin{table*}[t!]
\resizebox{\textwidth}{!}{%
\begin{tabular}{l|ccccc|cccc|cc|c}
\toprule
\multicolumn{1}{l|}{\multirow{2}{*}{Normalization}} & \multicolumn{5}{c|}{MSRVTT}                                         & \multicolumn{4}{c|}{ActivityNet}                                                                      & \multicolumn{2}{c|}{MSCOCO}  &   \multicolumn{1}{c}{\multirow{2}{*}{Best}} \\
                               & CE+ & TT-CE+ & CLIP4Clip & CLIP2Video & \multicolumn{1}{c|}{X-CLIP} & CE+           & TT-CE+                             & CLIP4Clip & \multicolumn{1}{c|}{X-CLIP} & CLIP                               & Oscar   &   \\ \midrule
                               & 1.38          & 1.28          & 1.13          & 0.84          & \multicolumn{1}{c|}{1.24}          & 0.94          & 0.76                        & 0.83          & \multicolumn{1}{c|}{0.98}          & 2.71                           & 0.55        &0  \\
+ IS                           & \underline{0.82}          & 0.34          & \textbf{0.18} & \underline{0.32}          & \multicolumn{1}{c|}{\underline{0.74}}          & 0.67          & 0.51                        & 0.55          & \multicolumn{1}{c|}{0.57}          & 0.90                           & \underline{0.24} & 1 \\
+ DIS                          & 0.83          & 0.34          & \textbf{0.18} & 0.33          & \multicolumn{1}{c|}{\underline{0.74}}          & 0.68          & 0.52                        & \underline{0.42}          & \multicolumn{1}{c|}{0.44}          & 0.90                           & \textbf{0.22}  & 2\\
\rowcolor{green!10}+ DualIS                & \textbf{0.37} & \underline{0.33} & \underline{0.57}          & \textbf{0.26} & \multicolumn{1}{c|}{\textbf{0.70}} & \underline{0.54} & \textbf{0.37}               & \textbf{0.36} & \multicolumn{1}{c|}{\textbf{0.42}} & \textbf{0.42}                  & 0.31       & \textbf{7}    \\
\rowcolor{green!10}+ DualDIS               & \textbf{0.37} & \textbf{0.28} & \underline{0.57}          & \textbf{0.26} & \multicolumn{1}{c|}{\textbf{0.70}} & \textbf{0.50} & \underline{0.40}               & {0.46} & \multicolumn{1}{c|}{\underline{{0.43}}} & \underline{0.43}                  & 0.29      & \underline{5}    \\ \bottomrule
\end{tabular}
}
\caption{
The hubness (skewness score) on text-video/image retrieval with CE+, TT-CE+, CLIP4Clip, CLIP2Video, X-CLIP, CLIP, and Oscar is better reduced after applying our proposed DualIS and DualDIS than IS and DIS. 
The \textbf{Lower} is better. Best in \textbf{Bold} and the second best is \underline{underlined}. 
The hubness scores on other benchmarks, methods, and image/video/audio-text retrieval are deferred to \Cref{tab: hubness after ours appendix} in the Appendix due to the limitation of space.
}
\label{tab: hubness after ours}
\end{table*}

\begin{figure*}[t!]
\centering
    \subfloat[MSR-VTT (CE+).]{
        \centering
        \includegraphics[width=0.24\textwidth]{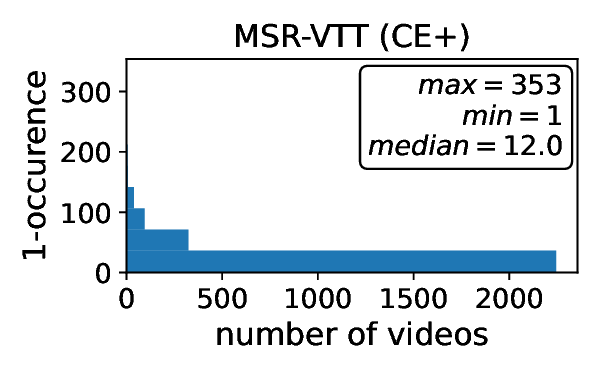}
    }
    \subfloat[MSR-VTT (TT-CE+).]{
        \centering
        \includegraphics[width=0.24\textwidth]{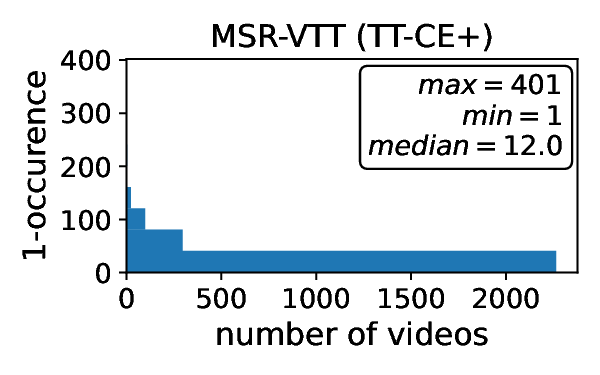}
    }
    \subfloat[MSR-VTT (CLIP4Clip).]{
        \centering
        \includegraphics[width=0.24\textwidth]{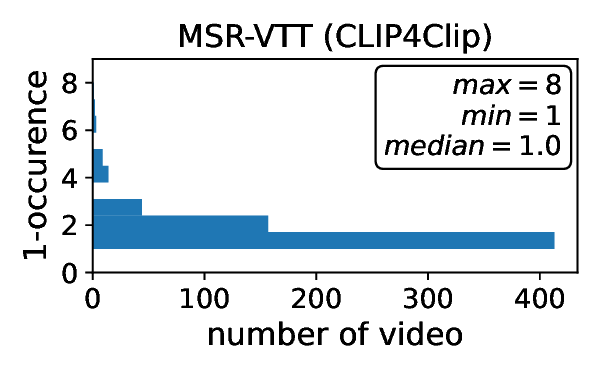}
    }
    \subfloat[MSR-VTT (X-CLIP).]{
        \centering
        \includegraphics[width=0.24\textwidth]{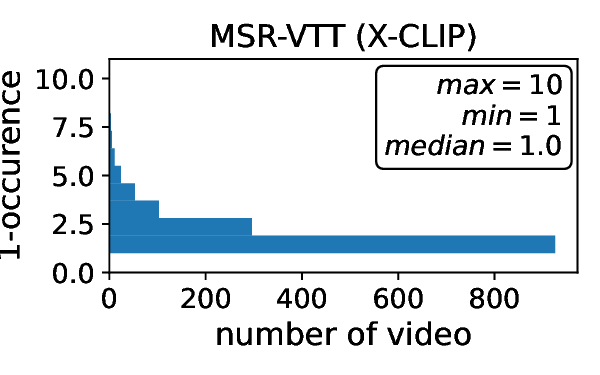}
    }

    \subfloat[ActivityNet (CE+).]{
        \centering
        \includegraphics[width=0.24\textwidth]{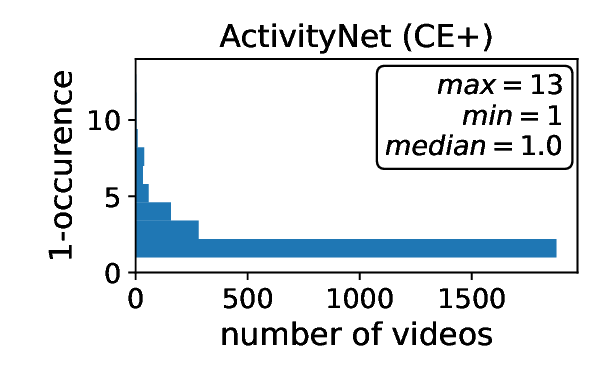}
    }
    \subfloat[ActivityNet (TT-CE+).]{
        \centering
        \includegraphics[width=0.24\textwidth]{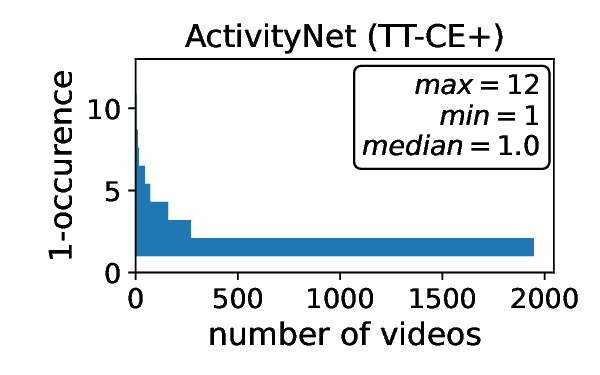}
    }
    \subfloat[ActivityNet (CLIP4Clip).]{
        \centering
        \includegraphics[width=0.24\textwidth]{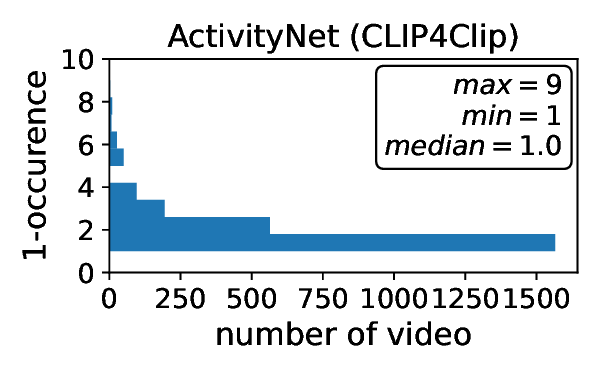}
    }
    \subfloat[ActivityNet (X-CLIP).]{
        \centering
        \includegraphics[width=0.24\textwidth]{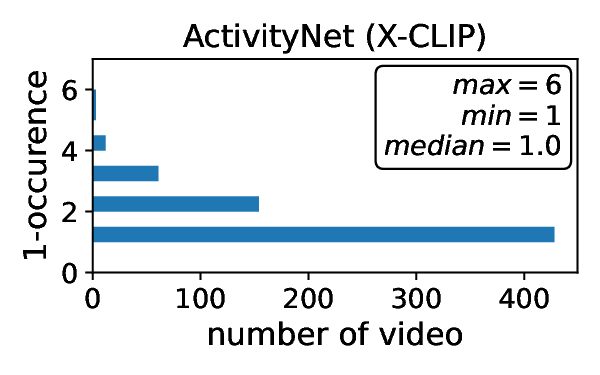}
    }
 
    \caption{
      \textbf{Hubness is prevalent across different methods, datasets, and tasks}. 
        These figures illustrate the distribution of the number of times each (test) gallery video was retrieved by (test) queries. 
      Columns (different models): Retrieval distributions for CE, TT-CE+, CLIP4Clip, and X-CLIP. 
      Rows (different datasets): Retrieval distributions for the same model on MSR-VTT and ActivityNet. 
      The statistics (maximum, minimum, and median values) of 1-occurrence are shown in the figures. 
      More illustrations are deferred to the Appendix (\Cref{fig: showing 1 occurrence appendix}).
      }
      \label{fig: showing 1 occurrence}
\end{figure*}

\begin{figure*}[t!]
    \begin{center}
    \subfloat[R@1 w.r.t $N_{\hat{G}}$ and $N_{\hat{Q}}$.]{
        \centering
        \includegraphics[width=0.24\textwidth]{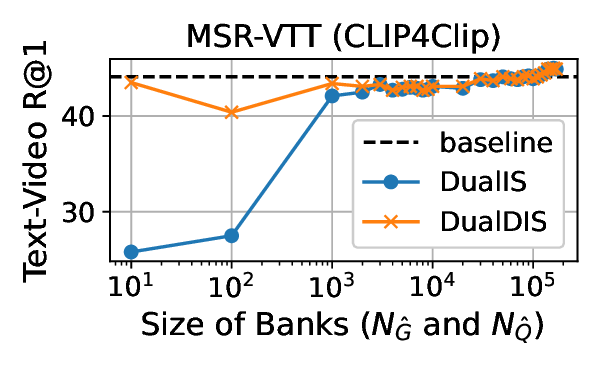}    }
    \subfloat[R@1 w.r.t $N_{\hat{G}}$ and $N_{\hat{Q}}$.]{
        \centering
        \includegraphics[width=0.24\textwidth]{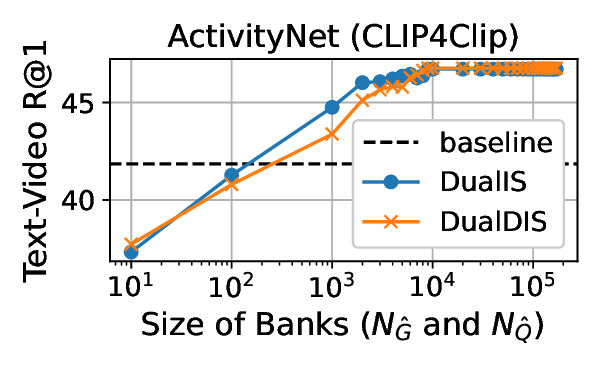}
    }
    \subfloat[R@1 w.r.t $N_{\hat{G}}$.]{
        \centering
        \includegraphics[width=0.24\textwidth]{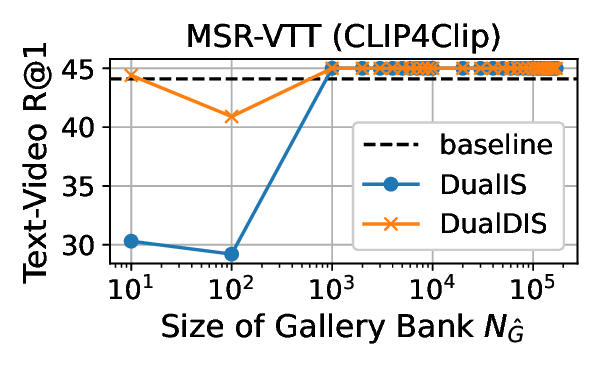}
    }
    \subfloat[R@1 w.r.t $N_{\hat{Q}}$.]{
        \centering
        \includegraphics[width=0.24\textwidth]{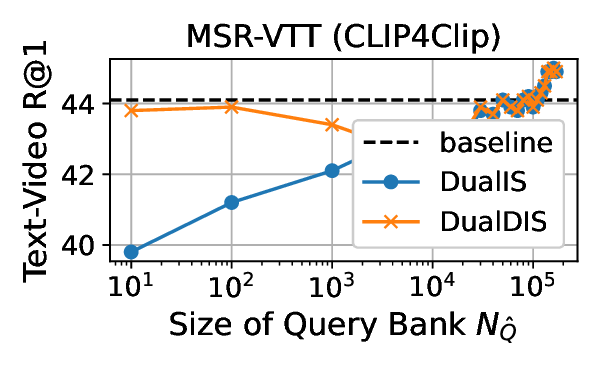}
    }
      \caption{Text-to-Video Retrieval R@1 w.r.t the size of gallery and query banks on MSR-VTT and ActivityNet using DualIs and DualDIS with CLIP4Clip. }
      \label{fig: ablation: size of banks}
   \end{center}
\end{figure*}

\textbf{Text-image retrieval}. 
Quantitative results are shown in \Cref{tab: quan: coco and f30k} and \Cref{tab: quan: coco and f30k appendix} in the Appendix. 
Similar observations can be obtained on MSCOCO and Flickr30k. 
DBNorm (DualIS and DualDIS) effectively improves R@1 compared with IS and DIS. 
We also compare our results with ALBEF~\cite{li2021align}, ViLT~\cite{kim_vilt_2021}, ALIGN~\cite{jia_scaling_2021}, UNITER~\cite{chenUNITERUNiversalImageTExt2020}, and CODIS~\cite{duan_multi-modal_2022}.

\textbf{Text-audio retrieval}. 
Results are presented in \Cref{tab: quan: audiocaps and clotho} and \Cref{tab: quan: audiocaps and clotho appendix} in the Appendix. 
We observed that employing DualIS and DualDIS leads to significant improvements in R@1 for text-to-audio retrieval on two benchmarks. 
Specifically, R@1 on AudioCaps is increased to 24.04 with DualIS and DualDIS. 
Besides, we compare with MoEE~\cite{miech_learning_2020}, MMT~\cite{DBLP:conf/eccv/Gabeur0AS20}.

\subsection{Qualitative Results}
To qualitatively validate the effectiveness of our proposed methods, we present examples of video-to-text and text-to-video retrieval on MSR-VTT in \Cref{fig: qualitative} and \Cref{fig: qualitative appendix} in the Appendix, respectively. 
The retrieval results demonstrate that our proposed DualIS and DualDIS, leveraging gallery and query banks, exhibit better performance and robustness compared to IS and DIS. 
Specifically, \Cref{fig: qualitative} (a) highlights the capability of DualIS and DualDIS to distinguish three distinct characters, while \Cref{fig: qualitative} (b) showcases a specific case where the vanilla CLIP4Clip and CLIP4Clip with IS/DIS fails to capture the presence of a man wearing a black shirt in the background. 
In contrast, DualIS and DualDIS excel in capturing such intricate details, thereby yielding superior retrieval performance.

\subsection{\ours}\label{Sec: hubness in dataset}
In this section, we answer several research questions of \ours\ on MSR-VTT and ActivityNet with CLIP4Clip. 
Due to the limitation of space, the questions on the sensitivity to $\beta_1$, $\beta_2$, and $k$ in DualDIS, the relationship between skewness and performance, aggregation methods, comparison with GC and CSLS, and the computational complexity are presented in the Appendix.

\textbf{RQ1: Can \ours\ alleviate hubness?}
First, to illustrate the presence of the hubness problem, we present 1-occurrence in text-to-video/image/audio retrieval in \Cref{fig: showing 1 occurrence} and \Cref{fig: showing 1 occurrence appendix} in the Appendix. 
Through the visualization across various benchmarks and methods, we consistently observe that a small subset of videos/images/audio is retrieved by multiple times, resulting in a negative impact on performance, which empirically validates the prevalence of hubness. 
To quantitatively evaluate hubness, following \citet{JMLR:v11:radovanovic10a}, we report the skewness (details are presented in \Cref{Sec: hubness skewness}) of text-to-video/image/audio and video/image/audio-to-text retrieval in \Cref{tab: hubness after ours} and \Cref{tab: hubness after ours appendix} in the Appendix. 
The skewness is better reduced after employing DualIS and DualDIS than IS and DIS, proving the effectiveness of our methods in alleviating hubness.

\textbf{RQ2: How much data is desired in the banks?}
To address this question, we conduct experiments on scaling the size of query and gallery banks by uniform sampling. 
The results of R@1 for text-video retrieval using DualIS and DualDIS are presented in \Cref{fig: ablation: size of banks,fig: ablation: size of banks appendix}. 
Our observations indicate that as the size of the two banks increases, the performance improves. 
However, even with a relatively small number of samples in the query and gallery banks, we still observe satisfactory performance. 
Moreover, we examined the individual impact of the query and gallery bank sizes by independently sampling them at different scales. 
The results demonstrate that the size of the query bank has a greater influence on performance compared to the gallery bank, although a bigger gallery bank also leads to better performance.

\section{Conclusion}
In this work, we addressed the issue of hubness in cross-modal retrieval through a post-processing approach. 
First, we theoretically proved that hubs exhibit high similarity with data from both the query and gallery modalities. 
Then, motivated by our theoretical results, we proposed a novel post-processing method, Dual bank Normalization, along with two novel methods, \ie, Dual Inverted Softmax and Dual Dynamic Inverted Softmax, which leveraged gallery and query banks to reduce similarities with hubs and improve similarities with non-hubs gallery data. 
Finally, with extensive experiments across a range of tasks, models, and benchmarks, we demonstrated the superiority of our proposed methods over previous methods in addressing hubness and improving performance.

\section*{Limitations}
First, our proposed method, \ours, is capable of tackling hubness in a range of cross-modal and single-modal retrieval tasks. 
However, in this study, we focused on evaluating \ours\ in the context of text-video, text-image, and text-audio retrieval, while excluding audio-image, audio-text, and audio-image retrieval tasks. 
It would be interesting to test the empirical efficiency of our methods on these tasks. 
Second, we observe that the performance improvement of \ours\ compared to single bank normalization is not significant on certain benchmarks, such as CLOTHO and MSVD.  
However, \ours\ has demonstrated satisfactory performance on MSR-VTT, ActivityNet, MSCOCO, and AudioCaps datasets.
Furthermore, we found that the hubness score (the skewness score) does not have an absolute correlation with the retrieval performance, as a low hubness score can still result in poor retrieval performance. 
In the future, it would be interesting to improve the robustness of bank-based normalization techniques and explore alternative effective metrics for modeling the relationship between hubness (skewness) and retrieval performance.

\section*{Acknowledgments}
We first sincerely thank all the reviewers and chairs for their efforts in helping improve our paper's presentation and organization. 
Next, we thank Tamer for the computational support and Charlie for reviewing the paper. 
Last but not least, we thank all the collaborators, friends, and computer science administrators for their support in Spring 2023.

{

}

\newpage
\clearpage
\appendix

\section{Related work}\label{sec: related works}

In this section, we present prior work from the literature that lies in cross-modal retrieval and the hubness problem, which are the two most related areas to our work.

\textbf{Cross-modal Retrival.} 
Cross-modal retrieval methods aim to learn a common representation space, where the similarity between samples from different modalities can be directly measured. 
Early methods for cross-modal retrieval include Gaussian Mixture Models~\cite{DBLP:conf/eccv/Owens0MFT16} modelling translation via EM~\cite{DBLP:conf/iccv/CroitoruBLJZAL21}, CCA ~\cite{DBLP:conf/mir/MithunLMR18}, and KCCA~\cite{DBLP:conf/iclr/Patrick0AMHHV21}. 

Inspired by the tremendous success of deep learning~\cite{devlin-etal-2019-bert, DBLP:conf/cvpr/HeZRS16}, numerous methods have been proposed for image-text retrieval~\cite{DBLP:conf/icml/RadfordKHRGASAM21}, video-text retrieval~\cite{DBLP:journals/ijon/LuoJZCLDL22}, and audio-text retrieval~\cite{Oncescu21a}.
With the rise of self-supervised pretraining methods~\cite{DBLP:conf/nips/BrownMRSKDNSSAA20, devlin-etal-2019-bert, radford2019language}, vision-language pretraining~\cite{DBLP:conf/nips/Gan0LZ0020, DBLP:conf/eccv/Li0LZHZWH0WCG20, DBLP:conf/cvpr/SinghHGCGRK22} on large-scale unlabeled cross-modal data has shown promising performance in various tasks, such as image retrieval~\cite{liu-etal-2021-inflate}, image captioning~\cite{jiang-etal-2019-tiger}, and video retrieval~\cite{park-etal-2022-normalized}.
Inspired by this, recent works have attempted to pretrain or fine-tune cross-modal retrieval models, \eg, image-text retrieval~\cite{DBLP:conf/icml/RadfordKHRGASAM21,DBLP:conf/eccv/Li0LZHZWH0WCG20}, video-text retrieval~\cite{DBLP:conf/cvpr/ChenZJW20,DBLP:journals/corr/abs-2109-04290,DBLP:journals/corr/abs-2111-05610,DBLP:conf/cvpr/GortiVMGVGY22,DBLP:conf/cvpr/LeiLZGBB021,DBLP:conf/mm/MaXSYZJ22,park-etal-2022-exposing,DBLP:conf/mm/WangXHLJHD22,9878037,10.1145/3477495.3531950}, and audio-text retrieval~\cite{koepke_audio_2022} in an end-to-end manner. 

Another line of research has been focused on improving the effectiveness of retrieval, including k-d trees~\cite{DBLP:books/degruyter/Bellman15}, re-ranking~\cite{DBLP:conf/cvpr/ZhongZCL17,DBLP:conf/cvpr/MiechALSZ21}, query expansion~\cite{DBLP:conf/cvpr/ChenZJW20}, vector compression schemes based on binary codes~\cite{DBLP:conf/iccv/SuZZ19,DBLP:conf/iccv/LiongLT017} and quantization~\cite{DBLP:journals/pami/GongLGP13} that help address the curse of dimensionality~\cite{Keogh2017}.
In addition, extracting additional information from noisy data \cite{DBLP:conf/cvpr/00020ZZ021}, incorporating more datasets \cite{yu2023multimodal}, and mitigating the domain adaptation problem \cite{DBLP:conf/iccv/KimTZ0SSC21} have been other solutions for cross-modal retrieval.

Instead of improving cross-modal retrieval through better representation, retrieval techniques, or more data, we aim to advance cross-modal retrieval by addressing \textit{the hubness problem} in a post-processing manner. 
This problem has been empirically demonstrated to be prevalent among various cross-modal retrieval tasks and datasets, as shown in \Cref{fig: showing 1 occurrence} and \Cref{fig: showing 1 occurrence appendix} in the Appendix.

\textbf{The Hubness problem.} 
This problem was initially characterized by \citet{JMLR:v11:radovanovic10a}. 
They noticed that the distribution becomes considerably skewed as dimensionality increases under commonly used assumptions, leading to the emergence of hubs. 
Hubs are points with very high k-occurrences that effectively represent ``popular'' nearest neighbors. 
In other words, the distribution of ``k-occurrences'' (the number of times a point appears in the k nearest neighbors of other points) skews heavily to the right. 
As an inherent property of data distributions in high-dimensional vector space, it might cause the degradation of the retrieval~\cite{DBLP:conf/cvpr/BogolinCJLA22} and bilingual word translation performance~\cite{huang-etal-2020-improving, huang-etal-2019-hubless, repar-etal-2022-fusion, DBLP:conf/iclr/SmithTHH17}.

In the past few decades, algorithms for mitigating the hubness problem can be categorized into three classes. 
The first line of research focuses on rescaling the similarity space to symmetrize nearest neighbor relations~\cite{DBLP:journals/jmlr/SchnitzerFSW12}, including local~\cite{DBLP:conf/cvpr/JegouHS07,DBLP:conf/nips/Zelnik-ManorP04} and global~\cite{DBLP:journals/jmlr/SchnitzerFSW12} scaling. 
Another line of research has focused on mitigating the hub tendency of centroids of the data through Laplacian-based kernels~\cite{DBLP:conf/aaai/SuzukiHSMS12} and centering~\cite{suzuki-etal-2013-centering,DBLP:conf/aaai/HaraSSKFR15}. 
Later, while CENT~\cite{DBLP:conf/aaai/HaraSSKFR15} has better complexity, \citet{DBLP:conf/cvpr/BogolinCJLA22} show that CENT is not effective in terms of retrieval performance. 

The last line, which is also closest to our paper, is the normalization of similarity by matching queries with a set of data points~\cite{DBLP:conf/cvpr/BogolinCJLA22, dinu_improving_2015,lample2018word, DBLP:conf/iclr/SmithTHH17}. 
\citet{dinu_improving_2015} first proposed a globally-corrected (GC) method to take into account the proximity distribution of potential neighbors across many mapped vectors to tackle the hubness problem. 
Following that, \citet{lample2018word} proposed cross-domain similarity local scaling (CSLS) using a bipartite neighborhood graph. 
Next, inverted softmax (IS)~\cite{DBLP:conf/iclr/SmithTHH17} and QBNorm (DIS)~\cite{DBLP:conf/cvpr/BogolinCJLA22} were proposed to normalize query similarities by considering the similarity between a query bank and the gallery points.

In contrast to IS and QBNorm, which are the two most related works to ours, our \ours\ is built upon a detailed \textit{theoretical analysis} (Section~\ref{Sec: theory}) and normalizes similarity considering points from both modalities, as motivated by our theoretical analysis. 
By constructing banks from two modalities, \ours\ achieves better performance without access to any additional data on several different benchmarks across various cross-modal retrieval tasks.

\begin{table*}[h!]
\centering
\resizebox{\textwidth}{!}{%
\begin{tabular}{ll|ccccc|ccccc}
\toprule
    \multirow{2}{*}{Methods}      &   \multirow{2}{*}{Normalization}          & \multicolumn{5}{c|}{Text-to-Video Retrieval}             & \multicolumn{5}{c}{Video-to-Text Retrieval} \\
                        &                                    & R@1 $\uparrow$    & R@5 $\uparrow$    & R@10 $\uparrow$   & MdR $\downarrow$   & MnR  $\downarrow$  & R@1 $\uparrow$   & R@5  $\uparrow$   & R@10 $\uparrow$  & MdR  $\downarrow$  & MnR $\downarrow$   \\\midrule
\multicolumn{12}{c}{MSR-VTT (full split)}\\\midrule
\textcolor{gray}{RoME}   & & \textcolor{gray}{10.7}          & \textcolor{gray}{29.6}          & \textcolor{gray}{41.2}           & \textcolor{gray}{17.0}             & \textcolor{gray}{-}            & \textcolor{gray}{-}          & \textcolor{gray}{-}          & \textcolor{gray}{-}           & \textcolor{gray}{-}             & \textcolor{gray}{-}             \\
\textcolor{gray}{Frozen}   & & \textcolor{gray}{32.5}          & \textcolor{gray}{61.5}          & \textcolor{gray}{71.2}           & \textcolor{gray}{-}             & \textcolor{gray}{-}            & \textcolor{gray}{-}          & \textcolor{gray}{-}          & \textcolor{gray}{-}           & \textcolor{gray}{-}             & \textcolor{gray}{-}             \\
\midrule
\multirow{5}{*}{CE+}                     &                                & 12.62                                        & 33.87          & 46.38          & 12.0          & 74.99          & 19.83          & 46.72          & 60.94          & 6.0          & 29.42          \\
                                         & + IS                           & \underline{13.31}                                        & \underline{35.00}          & \underline{47.46}          & 12.0          & \underline{74.16}          & \underline{23.68}          & \underline{52.25}          & \underline{65.59}          & 5.0          & \underline{24.75}          \\
                                         & + DIS                          & \underline{13.31}                                        & \underline{35.00}          & \underline{47.46}          & 12.0          & \underline{74.16}          & \underline{23.68}          & \underline{52.25}          & \underline{65.59}          & 5.0          & \underline{24.75}          \\ \cmidrule{2-12} 
                                         \rowcolor{green!10}\cellcolor{white}& + DualIS                 & \textbf{14.88}                               & \textbf{37.36} & \textbf{50.00} & \textbf{11.0} & \textbf{70.13} & \textbf{25.48} & \textbf{56.68} & \textbf{69.89} & \textbf{4.0} & 2\textbf{1.76} \\
                                         \rowcolor{green!10}\cellcolor{white}& + DualDIS                & \textbf{14.88}                               & \textbf{37.36} & \textbf{50.00} & \textbf{11.0} & \textbf{70.13} & \textbf{25.48} & \textbf{56.68} & \textbf{69.89} & \textbf{4.0} & \textbf{21.76} \\ \midrule
\multirow{5}{*}{TT-CE+}                          &                 & 14.61                                        & 37.81          & 50.78          & 10.0          & 63.31          & 24.48          & 54.11          & 67.59          & 5.0          & 20.22          \\
                                                 & + IS            & 16.58                                        & 40.75          & 53.44          & 9.0           & 60.52          & \underline{29.40}          & \underline{60.10}          & \underline{72.11}          & 3.0          & \underline{16.56}          \\
                                                 & + DIS           & 16.59                                        & 40.73          & 53.44          & 9.0           & 60.51          & 26.69          & 56.49          & 69.06          & 4.0          & 18.90          \\ \cmidrule{2-12} 
                                                 \rowcolor{green!10}\cellcolor{white}& + DualIS  & \textbf{17.06}                               & \textbf{41.63} & \textbf{54.25} & 8.0           & \underline{60.10} & \textbf{29.83} & \textbf{62.04} & \textbf{73.68} & 3.0          & \textbf{15.70} \\
                                          \rowcolor{green!10}\cellcolor{white}& + DualDIS & \underline{17.02}                               & \underline{41.60} & \underline{54.23} & 8.0           & \textbf{60.01} & {27.53} & {57.39} & {69.77} & 4.0          & {18.63} \\ 
\midrule
\multicolumn{12}{c}{MSR-VTT (1k split)}\\\midrule
\textcolor{gray}{DiscreteCodebook}   &   & \textcolor{gray}{43.4}          & \textcolor{gray}{72.3}          & \textcolor{gray}{81.2}           & \textcolor{gray}{-}               & \textcolor{gray}{14.8}            & \textcolor{gray}{42.5}          & \textcolor{gray}{71.2}          & \textcolor{gray}{81.1}           & \textcolor{gray}{-}               & \textcolor{gray}{12.0}            \\
\textcolor{gray}{VCM}      &         & \textcolor{gray}{43.8}          & \textcolor{gray}{71.0}          & \textcolor{gray}{-}           & \textcolor{gray}{2.0}             & \textcolor{gray}{14.3}            & \textcolor{gray}{45.1}          & \textcolor{gray}{72.3}          & \textcolor{gray}{82.3}           & \textcolor{gray}{2.0}             & \textcolor{gray}{10.7}            \\
\textcolor{gray}{CenterCLIP}   &   & \textcolor{gray}{44.2}          & \textcolor{gray}{71.6}          & \textcolor{gray}{82.1}           & \textcolor{gray}{2.0}               & \textcolor{gray}{15.1}            & \textcolor{gray}{42.8}          & \textcolor{gray}{71.7}          & \textcolor{gray}{82.2}           & \textcolor{gray}{2.0}               & \textcolor{gray}{10.9}            \\
\textcolor{gray}{Align\&Tell}    &   & \textcolor{gray}{45.2}          & \textcolor{gray}{73.0}          & \textcolor{gray}{82.9}           & \textcolor{gray}{2.0}             & \textcolor{gray}{-}               & \textcolor{gray}{43.4}          & \textcolor{gray}{70.9}          & \textcolor{gray}{81.8}           & \textcolor{gray}{2.0}             & \textcolor{gray}{-}               \\
\textcolor{gray}{CLIP2TV}        &    & \textcolor{gray}{45.6}          & \textcolor{gray}{71.1}          & \textcolor{gray}{80.8}           & \textcolor{gray}{2.0}             & \textcolor{gray}{15.0}            & \textcolor{gray}{43.9}          & \textcolor{gray}{70.9}          & \textcolor{gray}{82.2}           & \textcolor{gray}{2.0}             & \textcolor{gray}{12.0}            \\
\textcolor{gray}{X-Pool} &    & \textcolor{gray}{46.9}          & \textcolor{gray}{72.8}          & \textcolor{gray}{82.2}           & \textcolor{gray}{2.0}               & \textcolor{gray}{14.3}            & \textcolor{gray}{-}             & \textcolor{gray}{-}             & \textcolor{gray}{-}              & \textcolor{gray}{-}               & \textcolor{gray}{-}               \\

\textcolor{gray}{TS2-Net}   & & \textcolor{gray}{47.0}          & \textcolor{gray}{74.5}          & \textcolor{gray}{83.8}           & \textcolor{gray}{2.0}             & \textcolor{gray}{13.0}            & \textcolor{gray}{45.3}          & \textcolor{gray}{74.1}          & \textcolor{gray}{83.7}           & \textcolor{gray}{2.0}             & \textcolor{gray}{9.2}             \\
\textcolor{gray}{CAMoE}       &         & \textcolor{gray}{47.3}          & \textcolor{gray}{74.2}          & \textcolor{gray}{84.5}           & \textcolor{gray}{2.0}             & \textcolor{gray}{11.9}            & \textcolor{gray}{49.1}          & \textcolor{gray}{74.3}          & \textcolor{gray}{84.3}           & \textcolor{gray}{2.0}             & \textcolor{gray}{9.9}             \\
\midrule
\multirow{5}{*}{CLIP4Clip} &                                & 44.10          & \underline{71.70}          & 81.40          & 2.0           & \multicolumn{1}{c|}{\underline{15.51}}          & 42.09          & 71.24          & 81.23          & 2.0          & 12.01          \\
                           & + IS                           & \underline{44.20}          & \underline{71.70}          & \underline{81.60}          & 2.0           & \multicolumn{1}{c|}{{15.64}}          &\underline{44.86}          & \underline{72.04}          & \underline{82.02} & 2.0          & \textbf{11.56} \\
                           & + DIS                          & \underline{44.20}          & \underline{71.70}          & \underline{81.60}          & 2.0           & \multicolumn{1}{c|}{{15.64}}          & \underline{44.86}          & \underline{72.04}          & \textbf{82.11} & 2.0          & 11.61          \\\cmidrule{2-12}
                           \rowcolor{green!10}\cellcolor{white}& + DualIS                 & \textbf{45.00} & \textbf{72.50} & \textbf{82.10} & 2.0           & \multicolumn{1}{c|}{\textbf{15.32}} & \textbf{45.45} & \textbf{73.02} & 81.42          & 2.0          & \textbf{11.56} \\
                           \rowcolor{green!10}\cellcolor{white}& + DualDIS                & \textbf{45.00} & \textbf{72.50} & \textbf{82.10} & 2.0           & \multicolumn{1}{c|}{\textbf{15.32}} & \textbf{45.45} & \textbf{73.02} & 81.42          & 2.0          & \underline{11.59}\\\midrule
\multirow{5}{*}{CLIP2Video} &                                & 46.00          & 71.60          & 81.60          & 2.0           & \multicolumn{1}{c|}{14.51}          & 43.87          & 72.73          & 82.51          & 2.0          & 10.20          \\
                            & + IS                           & \underline{47.00}          & \underline{72.80}          & \underline{82.10}          & 2.0           & \multicolumn{1}{c|}{\underline{13.91}}          & \underline{46.15}          & \textbf{72.63} & \textbf{81.92} & 2.0          & 11.15          \\
                            & + DIS                          & \underline{47.00}          & \underline{72.80} & \underline{82.10}          & 2.0           & \multicolumn{1}{c|}{\underline{13.91}}          & \underline{46.15}          & \underline{72.53} & \textbf{81.92} & 2.0          & 11.14          \\\cmidrule{2-12}
                            \rowcolor{green!10}\cellcolor{white}& + DualIS                 & \textbf{47.20} & \textbf{73.20} & \textbf{82.30} & 2.0           & \multicolumn{1}{c|}{\textbf{13.90}} & \textbf{46.74} & \underline{72.53}          & \underline{81.82}          & 2.0          & \underline{10.93} \\
                            \rowcolor{green!10}\cellcolor{white}& + DualDIS                & \textbf{47.20} & 72.70          & \textbf{82.30} & 2.0           & \multicolumn{1}{c|}{\textbf{13.90}} & \textbf{46.74} & 72.43          & \underline{81.82}          & 2.0          & \textbf{10.90} \\ \midrule
\multirow{5}{*}{X-CLIP}    &                                & 46.30          & 74.00          & \underline{83.40}          & 2.0           & \multicolumn{1}{c|}{\textbf{12.80}} & 44.81          & 73.69          & 82.39          & 2.0          & 10.99          \\
                           & + IS                           & 48.60          & \underline{74.10}          & \textbf{84.10} & 2.0           & \multicolumn{1}{c|}{13.35}          & \underline{46.69}          & 73.99          & \underline{83.28}          & 2.0          & \underline{10.52}          \\
                           & + DIS                          & 48.60          & \underline{74.10}          & \textbf{84.10} & 2.0           & \multicolumn{1}{c|}{13.35}          & \underline{46.69}          & 73.89          & \underline{83.28}          & 2.0          & \underline{10.52}          \\\cmidrule{2-12}
                           \rowcolor{green!10}\cellcolor{white}& + DualIS                 & \textbf{48.80} & \textbf{74.30} & \textbf{84.10} & 2.0           & \multicolumn{1}{c|}{\underline{13.30}}          & \textbf{46.88} & \textbf{74.38} & \textbf{83.38} & 2.0          & \textbf{10.44} \\
                           \rowcolor{green!10}\cellcolor{white}& + DualDIS                & \underline{48.70} & \textbf{74.30} & \textbf{84.10} & 2.0           & \multicolumn{1}{c|}{\underline{13.30}}          & \textbf{46.88} & \underline{74.28} & \textbf{83.38} & 2.0          & \textbf{10.44}\\
\bottomrule
\end{tabular}%
}
\caption{Retrieval performance on MSR-VTT (full split and 1k split). 
Best in \textbf{Bold} and the second best is \underline{underlined}. }
\label{tab: quan: msrvtt appendix}
\end{table*}

\textbf{Hubness and similarity concentration.} 
It is known that the problem of hubness is related to concentration, the tendency of pairwise similarities between elements in a set to converge to a constant as the dimensionality of the space increases~\cite{JMLR:v11:radovanovic10a}. 
Later, \citet{radovanovic_existence_2010} show that it also holds for the widely employed cosine similarity as the expectation of pairwise similarities becomes constant and the standard deviation converges to $0$.

\section{Experiments}

To demonstrate the empirical efficiency of our \ours, we test it alongside DualIS and DualDIS on five video-text retrieval methods: CE+~\cite{DBLP:conf/bmvc/LiuANZ19}, TT-CE+~\cite{DBLP:conf/iccv/CroitoruBLJZAL21}, CLIP4Clip~\cite{DBLP:journals/ijon/LuoJZCLDL22}, X-CLIP~\cite{DBLP:conf/mm/MaXSYZJ22}, and CLIP2Video~\cite{park-etal-2022-exposing}, two image-text retrieval methods: CLIP~\cite{DBLP:conf/icml/RadfordKHRGASAM21} and Oscar~\cite{DBLP:conf/eccv/Li0LZHZWH0WCG20}, and one audio-text retrieval method: AR-CE~\cite{koepke_audio_2022}.

\subsection{Dataset details}
Experiments are conducted on eight cross-modal benchmarks, including four video-text retrieval benchmarks (MSR-VTT~\cite{DBLP:conf/cvpr/XuMYR16}, MSVD~\cite{chen-dolan-2011-collecting}, ActivityNet~\cite{caba2015activitynet}, and DiDemo~\cite{hendricks_localizing_2017}), two image-text retrieval benchmarks (MSCOCO~\cite{DBLP:conf/eccv/LinMBHPRDZ14} and Flickr30k~\cite{DBLP:journals/ijcv/PlummerWCCHL17}), and two audio-text retrieval benchmarks (AudioCaps~\cite{kim-etal-2019-audiocaps} and CLOTHO~\cite{drossos_clotho_2020}). 
The details of the datasets are shown below,
\begin{itemize}
    \item 
    \textbf{MSR-VTT}~\cite{DBLP:conf/cvpr/XuMYR16} contains around 10k videos, each with 20 captions. 
    For text-video retrieval, following prior works~\cite{DBLP:conf/bmvc/LiuANZ19,DBLP:conf/iccv/CroitoruBLJZAL21,DBLP:journals/ijon/LuoJZCLDL22,DBLP:conf/mm/MaXSYZJ22,park-etal-2022-exposing}, we use the official split (full) and the 1k-A split. 
    The full split contains 2,990 videos for testing and 497 for validation while the 1k-A split contains 1,000 videos for testing and around 9,000 for training. 

    \item 
    \textbf{MSVD}~\cite{chen-dolan-2011-collecting} has 1,970 videos and around 80k captions. 
    The results are reported on the standard split used in prior works~\cite{DBLP:conf/bmvc/LiuANZ19,DBLP:conf/iccv/CroitoruBLJZAL21,DBLP:journals/ijon/LuoJZCLDL22,park-etal-2022-exposing} which consists of 1,200 videos for training, 100 for validation and 670 for testing.

    \item
    \textbf{ActivityNet}~\cite{caba2015activitynet} contains 20k videos and has around 100K descriptive sentences. 
    The videos are extracted from YouTube. 
    We use a paragraph video retrieval as defined in prior works~\cite{DBLP:conf/bmvc/LiuANZ19,DBLP:conf/iccv/CroitoruBLJZAL21,DBLP:journals/ijon/LuoJZCLDL22,park-etal-2022-exposing}.
    We report results on the val1 split while the training split consists of 10,009 videos, while there are 4,917 videos for testing.

    \item 
    \textbf{DiDemo}~\cite{hendricks_localizing_2017} includes over 10,000 25-30 second long per- sonal videos with over 40,000 localized text descriptions. 
    Videos are split into training (8,395), validation (1,065), and testing (1,004) sets.

    \item 
    \textbf{MSCOCO}~\cite{DBLP:conf/eccv/LinMBHPRDZ14} consists of 123k images with 5 captions for each sentence. 
    We use the 5k split for evaluation.

    \item
    \textbf{Flickr30k}~\cite{DBLP:journals/ijcv/PlummerWCCHL17} dataset contains 31,000 images collected from Flickr, together with 5 reference sentences provided by human annotators.

    \item
    \textbf{AudioCaps}~\cite{kim-etal-2019-audiocaps} dataset which comprises sounds with event descriptions. We use the same setup as prior work~\cite{koepke_audio_2022} where 49,291 samples are used for training, 428 for validation, and 816 for testing.

    \item
    \textbf{CLOTHO}~\cite{drossos_clotho_2020} consists of 4,981 audio samples of 15 to 30 seconds in duration and 24,905 captions of eight to 20 words in length (five captions for each).
\end{itemize}

\begin{table}[h!]
\centering
\resizebox{\columnwidth}{!}{%
\begin{tabular}{ll|ccccc}
\toprule
                      \multirow{2}{*}{Methods}        & \multirow{2}{*}{Normalization} & \multicolumn{5}{c}{Video-to-Text Retrieval}                                                                               \\
                                       &                                    & R@1 $\uparrow$    & R@5 $\uparrow$    & R@10 $\uparrow$   & MdR $\downarrow$   & MnR  $\downarrow$        \\ \midrule
\textcolor{gray}{FSE}         &                                & \textcolor{gray}{12.6} & \textcolor{gray}{33/2} & \textcolor{gray}{77.6} & \textcolor{gray}{12.0} & \textcolor{gray}{-}   \\
\textcolor{gray}{VCM}         &                                & \textcolor{gray}{42.6} & \textcolor{gray}{74.9} & \textcolor{gray}{86.2} & \textcolor{gray}{2.0}  & \textcolor{gray}{6.4} \\
\textcolor{gray}{CenterCLIP}  &                                & \textcolor{gray}{44.5} & \textcolor{gray}{75.7} & \textcolor{gray}{86.2} & \textcolor{gray}{2.0}  & \textcolor{gray}{6.5} \\
\textcolor{gray}{Align\&Tell} &                                & \textcolor{gray}{43.5} & \textcolor{gray}{73.6} & \textcolor{gray}{98.3} & \textcolor{gray}{2.0}  & \textcolor{gray}{-}   \\
\textcolor{gray}{CAMoE}       &                                & \textcolor{gray}{49.9} & \textcolor{gray}{77.4} & \textcolor{gray}{-}    & \textcolor{gray}{-}    & \textcolor{gray}{-}   \\ \midrule
\multirow{5}{*}{CE+}          &                                & 18.51                  & 47.85                  & 63.94                  & 6.0                    & 23.06                 \\
                              & + IS                           & 19.36                  & 50.13                  & 65.87                  & 5.0                    & 20.81                 \\
                              & + DIS                          & 19.52                  & 49.87                  & 65.81                  & 5.0                    & 21.01                 \\\cmidrule{2-7}
                           \rowcolor{green!10}\cellcolor{white}   & + DualIS                       & \textbf{21.92}         & \textbf{53.49}         & \textbf{68.25}         & 5.0                    & \textbf{17.57}        \\
                            \rowcolor{green!10}\cellcolor{white}  & + DualDIS                      & \underline{21.72}      & \underline{52.69}      & \underline{67.62}      & 5.0                    & \underline{18.16}     \\ \midrule
\multirow{5}{*}{TT-CE+}       &                                & 22.49                  & 56.38                  & 72.67                  & 4.0                    & 13.90                 \\
                              & + IS                           & 23.18                  & 57.01                  & 73.60                  & 4.0                    & 13.30                 \\
                              & + DIS                          & 23.43                  & 57.05                  & 73.48                  & 4.0                    & 13.48                 \\\cmidrule{2-7}
                            \rowcolor{green!10}\cellcolor{white}  & + DualIS                       & \textbf{27.46}         & \textbf{61.13}         & \textbf{76.71}         & 4.0                    & \textbf{11.00}        \\
                           \rowcolor{green!10}\cellcolor{white}   & + DualDIS                      & \underline{27.11}      & \underline{60.79}      & \underline{75.86}      & 4.0                    & \underline{11.42}     \\ \midrule
\multirow{5}{*}{CLIP4Clip}    &                                & 41.62                  & 74.11                  & 86.12                  & 2.0                    & 6.81                  \\
                              & + IS                           & 46.23                  & 76.72                  & 87.26                  & 2.0                    & \underline{6.46}      \\
                              & + DIS                          & 46.26                  & 76.48                  & 87.16                  & 2.0                    & 6.48                  \\\cmidrule{2-7}
                         \rowcolor{green!10}\cellcolor{white}     & + DualIS                       & \underline{46.59}      & \textbf{78.04}         & \textbf{88.15}         & 2.0                    & \textbf{6.05}         \\
                      \rowcolor{green!10}\cellcolor{white}        & + DualIS                       & \textbf{46.73}         & \underline{77.90}      & \underline{88.06}      & 2.0                    & \textbf{6.05}         \\ \midrule
\multirow{5}{*}{X-CLIP}       &                                & 45.20                  & 76.07                  & 86.57                  & 2.0                    & 6.40                  \\
                              & + IS                           & \textbf{51.13}         & 78.95                  & 88.14                  & 1.0                    & 5.62                  \\
                              & + DIS                          & {50.52}                & 78.60                  & 87.84                  & 1.0                    & 5.71                  \\\cmidrule{2-7}
                            \rowcolor{green!10}\cellcolor{white}  & + DualIS                       & \underline{50.92}      & \textbf{79.42}         & \textbf{89.36}         & 1.0                    & \textbf{5.32}         \\
                          \rowcolor{green!10}\cellcolor{white}    & + DualDIS                      & 50.22                  & \underline{79.12}      & \underline{88.62}      & 1.0                    & \underline{5.44}      \\\bottomrule
\end{tabular}%
}
\caption{Retrieval performance on ActivityNet. Best in \textbf{Bold} and the second best is \underline{underlined}.}
\label{tab: quan: actnet appendix}
\end{table}

\begin{table}[h!]
\centering
\resizebox{\columnwidth}{!}{%
\begin{tabular}{ll|ccccc}
\toprule
                      \multirow{2}{*}{Methods}        & \multirow{2}{*}{Normalization} & \multicolumn{5}{c}{Video-to-Text Retrieval}                                                                              \\
                                       &                                    & R@1 $\uparrow$    & R@5 $\uparrow$    & R@10 $\uparrow$   & MdR $\downarrow$   & MnR  $\downarrow$   \\ \midrule
\textcolor{gray}{FSE}         &                                & \textcolor{gray}{16.7} & \textcolor{gray}{43.1} & \textcolor{gray}{88.4} & \textcolor{gray}{7.0} & \textcolor{gray}{-}   \\
\textcolor{gray}{CenterCLIP}  &                                & \textcolor{gray}{63.5} & \textcolor{gray}{86.4} & \textcolor{gray}{92.6} & \textcolor{gray}{1.0} & \textcolor{gray}{3.8} \\
\textcolor{gray}{Align\&Tell} &                                & \textcolor{gray}{61.8} & \textcolor{gray}{87.5} & \textcolor{gray}{92.7} & \textcolor{gray}{1.0} & \textcolor{gray}{-}   \\
\midrule
\multirow{5}{*}{CE+}          &                                & 22.84                  & 49.85                  & 61.49                  & 6.0                   & 33.96                 \\
                              & + IS                           & \underline{26.27}      & \underline{54.48}      & \underline{65.97}      & 4.0                   & \underline{27.69}     \\
                              & + DIS                          & 24.63                  & 52.99                  & 64.63                  & 5.0                   & 32.57                 \\\cmidrule{2-7}
                           \rowcolor{green!10}\cellcolor{white}   & + DualIS                       & \textbf{29.85}         & \textbf{59.40}         & \textbf{67.46}         & 4.0                   & \textbf{25.41}        \\
                          \rowcolor{green!10}\cellcolor{white}    & + DualDIS                      & {25.67}                & {53.58}                & {65.07}                & 5.0                   & {32.26}               \\ \midrule
\multirow{5}{*}{TT-CE+}       &                                & \underline{25.22}      & 55.07                  & 64.63                  & 4.0                   & 29.94                 \\
                              & + IS                           & 24.63                  & 52.39                  & 65.07                  & 4.0                   & \underline{27.31}     \\
                              & + DIS                          & 23.43                  & 55.22                  & 65.37                  & 4.0                   & 28.25                 \\\cmidrule{2-7}
                           \rowcolor{green!10}\cellcolor{white}   & + DualIS                       & \textbf{28.06}         & \textbf{57.46}         & \textbf{67.61}         & 4.0                   & \textbf{25.64}        \\
                         \rowcolor{green!10}\cellcolor{white}     & + DualDIS                      & 24.18                  & \underline{55.97}      & \underline{65.82}      & 4.0                   & {28.16}               \\ \midrule
\multirow{5}{*}{CLIP4Clip}    &                                & 63.13                  & 79.40                  & 85.37                  & 1.0                   & 11.02                 \\
                              & + IS                           & \underline{67.61}      & \textbf{84.48}         & \textbf{89.70}         & 1.0                   & \textbf{7.61}         \\
                              & + DIS                          & 64.48                  & 81.49                  & {87.16}                & 1.0                   & {11.10}               \\\cmidrule{2-7}
                         \rowcolor{green!10}\cellcolor{white}     & + DualIS                       & \textbf{67.76}         & \underline{84.18}      & \underline{89.25}      & 1.0                   & \underline{8.07}      \\
                         \rowcolor{green!10}\cellcolor{white}     & + DualDIS                      & {64.78}                & {81.64}                & {87.16}                & 1.0                   & 11.12                 \\ \midrule
\multirow{5}{*}{CLIP2Video}   &                                & 62.09                  & 83.13                  & \underline{89.40}      & 1.0                   & 7.73                  \\
                              & + IS                           & \underline{66.57}      & 82.84                  & 88.36                  & 1.0                   & 7.94                  \\
                              & + DIS                          & 62.39                  & 82.84                  & 88.51                  & 1.0                   & 8.73                  \\\cmidrule{2-7}
                          \rowcolor{green!10}\cellcolor{white}    & + DualIS                       & \textbf{69.70}         & \textbf{86.27}         & \textbf{89.70}         & 1.0                   & \textbf{6.35}         \\
                           \rowcolor{green!10}\cellcolor{white}   & + DualDIS                      & {63.13}                & \underline{83.58}      & \underline{89.40}      & 1.0                   & \underline{7.47}      \\ \midrule
\multirow{5}{*}{X-CLIP}       &                                & 65.67                  & 83.73                  & 89.85                  & 1.0                   & \textbf{8.15}         \\
                              & + IS                           & 64.63                  & 81.64                  & 87.16                  & 1.0                   & 9.84                  \\
                              & + DIS                          & 64.78                  & 82.84                  & 88.51                  & 1.0                   & 8.46                  \\\cmidrule{2-7}
                           \rowcolor{green!10}\cellcolor{white}   & + DualIS                       & \textbf{67.91}         & \underline{83.58}      & \textbf{88.96}         & 1.0                   & 8.33                  \\
                             \rowcolor{green!10}\cellcolor{white} & + DualDIS                      & \underline{66.27}      & \textbf{83.73}         & \underline{88.81}      & 1.0                   & \underline{8.27}      \\  \bottomrule
\end{tabular}%
}
\caption{Retrieval performance on MSVD. Best in \textbf{Bold} and the second best is \underline{underlined}. }
\label{tab: quan: msvd appendix}
\end{table}

\begin{table}[h!]
\centering
\resizebox{\columnwidth}{!}{%
\begin{tabular}{ll|ccccc}
\toprule
           \multirow{2}{*}{Methods}             & \multirow{2}{*}{Normalization} & \multicolumn{5}{c}{Image-to-Text Retrieval}   \\ 
                         &                                    & R@1 $\uparrow$    & R@5 $\uparrow$    & R@10 $\uparrow$   & MdR $\downarrow$   & MnR  $\downarrow$ \\\midrule
                        \multicolumn{7}{c}{MSCOCO}\\\midrule
\textcolor{gray}{ViLT}  &               & \textcolor{gray}{56.5} & \textcolor{gray}{82.6} & \textcolor{gray}{89.6} & \textcolor{gray}{-} & \textcolor{gray}{-} \\
\textcolor{gray}{ALIGN} &               & \textcolor{gray}{58.6} & \textcolor{gray}{83.0} & \textcolor{gray}{89.7} & \textcolor{gray}{-} & \textcolor{gray}{-} \\
\textcolor{gray}{CODIS} &               & \textcolor{gray}{71.5} & \textcolor{gray}{91.1} & \textcolor{gray}{95.5} & \textcolor{gray}{-} & \textcolor{gray}{-} \\
\textcolor{gray}{ALBEF} &               & \textcolor{gray}{77.6} & \textcolor{gray}{94.3} & \textcolor{gray}{97.2} & \textcolor{gray}{-} & \textcolor{gray}{-} \\ \midrule
\multirow{5}{*}{CLIP}   &               & 50.02                  & 74.82                  & 83.34                  & 1.0                 & 9.23                \\
                        & + IS          & 50.68                  & 74.80                  & 83.10                  & 1.0                 & 9.39                \\
                        & + DIS         & 50.68                  & 74.80                  & 83.10                  & 1.0                 & 9.39                \\\cmidrule{2-7}
                     \rowcolor{green!10}\cellcolor{white}   & + DualIS      & \underline{53.26}      & \textbf{77.00}         & \underline{85.26}      & 1.0                 & \underline{8.44}    \\
                      \rowcolor{green!10}\cellcolor{white}  & + DualDIS     & \textbf{53.32}         & \underline{76.96}      & \textbf{85.30}         & 1.0                 & \textbf{8.32}       \\ \midrule
\multirow{5}{*}{Oscar}  &               & 66.74                  & 89.98                  & 94.98                  & 1.0                 & 2.98                \\
                       & + IS          & 68.36                  & 90.16                  & 95.32                  & 1.0                 & 2.91                \\
                       & + DIS         & 69.48                  & 90.54                  & 95.04                  & 1.0                 & 2.99                \\\cmidrule{2-7}
                     \rowcolor{green!10}\cellcolor{white}   & + DualIS      & \underline{70.72}      & \underline{91.06}      & \underline{95.66}      & 1.0                 & \underline{2.81}    \\
                     \rowcolor{green!10}\cellcolor{white}   & + DualDIS     & \textbf{70.93}         & \textbf{91.44}         & \textbf{95.84}         & 1.0                 & \textbf{2.76}       \\ \midrule
                      \multicolumn{7}{c}{Flickr30k}\\\midrule
                      \textcolor{gray}{ViLT}   &           & \textcolor{gray}{73.2} & \textcolor{gray}{93.6} & \textcolor{gray}{96.5}  & \textcolor{gray}{-} & \textcolor{gray}{-} \\
\textcolor{gray}{UNITER} &           & \textcolor{gray}{83.6} & \textcolor{gray}{95.7} & \textcolor{gray}{97.7}  & \textcolor{gray}{-} & \textcolor{gray}{-} \\
\textcolor{gray}{ALIGN}  &           & \textcolor{gray}{88.6} & \textcolor{gray}{98.7} & \textcolor{gray}{99.7}  & \textcolor{gray}{-} & \textcolor{gray}{-} \\
\textcolor{gray}{CODIS}  &           & \textcolor{gray}{91.7} & \textcolor{gray}{99.3} & \textcolor{gray}{99.8}  & \textcolor{gray}{-} & \textcolor{gray}{-} \\
\textcolor{gray}{ALBEF}  &           & \textcolor{gray}{95.9} & \textcolor{gray}{99.8} & \textcolor{gray}{100.0} & \textcolor{gray}{-} & \textcolor{gray}{-} \\ \hline
\multirow{5}{*}{CLIP}    &           & 78.10                  & \underline{94.90}      & \textbf{98.10}          & 1.0                 & \underline{1.98}    \\
                         & + IS      & 77.80                  & 92.90                  & 97.40                   & 1.0                 & 2.15                \\
                         & + DIS     & 77.90                  & 92.90                  & 97.40                   & 1.0                 & 2.14                \\\cmidrule{2-7}
                         \rowcolor{green!10}\cellcolor{white}& + DualIS  & \textbf{81.60}         & \textbf{95.40}         & \underline{97.90}       & 1.0                 & \textbf{1.92}       \\
                         \rowcolor{green!10}\cellcolor{white}& + DualDIS & \underline{81.50}      & \textbf{95.40}         & \underline{97.90}       & 1.0                 & \textbf{1.92}       \\ \hline
\multirow{5}{*}{Oscar}   &           & \underline{86.30}      & 96.80                  & 98.60                   & 1.0                 & \underline{1.58}    \\
                         & + IS      & \textbf{87.10}         & \textbf{97.70}         & \underline{99.10}       & 1.0                 & \textbf{1.49}       \\
                         & + DIS     & \textbf{87.10}         & \textbf{97.70}         & \underline{99.10}       & 1.0                 & \textbf{1.49}       \\\cmidrule{2-7}
                        \rowcolor{green!10}\cellcolor{white} & + DualIS  & \textbf{87.10}         & \underline{97.60}      & \textbf{99.30}          & 1.0                 & \textbf{1.49}       \\
                       \rowcolor{green!10}\cellcolor{white}  & + DualDIS & \textbf{87.10}         & \underline{97.60}      & \textbf{99.30}          & 1.0                 & \textbf{1.49}       \\ 
\bottomrule
\end{tabular}%
}
\caption{Image-to-Text Retrieval performance on MSCOCO (5k split) and Flickr30k. 
Best in \textbf{Bold} and the second best is \underline{underlined}. 
Due to the limitation of the computational resources, we only use 20\% of data from the training data to construct the dual banks.}
\label{tab: quan: coco and f30k appendix}
\end{table}
\begin{table}[h!]
\centering
\resizebox{\columnwidth}{!}{%
\begin{tabular}{ll|ccccc}
\toprule
           \multirow{2}{*}{Methods}             & \multirow{2}{*}{Normalization} & \multicolumn{5}{c}{Audio-to-Text Retrieval}   \\ 
                         &                                    & R@1 $\uparrow$    & R@5 $\uparrow$    & R@10 $\uparrow$   & MdR $\downarrow$   & MnR  $\downarrow$ \\\midrule\multicolumn{7}{c}{AudioCaps}\\\midrule
\textcolor{gray}{MoEE}* &               & \textcolor{gray}{26.60} & \textcolor{gray}{59.30} & \textcolor{gray}{73.50} & \textcolor{gray}{4.0} & \textcolor{gray}{15.60} \\
\textcolor{gray}{MMT}*  &               & \textcolor{gray}{39.60} & \textcolor{gray}{76.80} & \textcolor{gray}{86.70} & \textcolor{gray}{2.0} & \textcolor{gray}{6.50}  \\ \midrule
\multirow{5}{*}{AR-CE}  &               & 24.02                   & 56.00                   & 71.81                   & 4.0                   & 16.91                   \\
                        & + IS          & 29.90                   & 61.64                   & \underline{74.75}       & \textbf{4.0}          & \textbf{13.95}          \\
                        & + DIS         & 29.90                   & 61.27                   & \underline{74.75}       & \textbf{4.0}          & 15.01                   \\\cmidrule{2-7}
                     \rowcolor{green!10}\cellcolor{white}   & + DualIS      & \underline{30.02}       & \textbf{62.13}          & \textbf{75.12}          & \textbf{4.0}          & \underline{14.80}       \\
                       \rowcolor{green!10}\cellcolor{white} & + DualDIS     & \textbf{30.15}          & \underline{61.76}       & \textbf{75.12}          & \textbf{4.0}          & {14.85}                 \\\midrule
                        \multicolumn{7}{c}{CLOTHO}\\\midrule
\textcolor{gray}{MoEE}* &               & \textcolor{gray}{7.20} & \textcolor{gray}{22.10} & \textcolor{gray}{33.20} & \textcolor{gray}{22.7} & \textcolor{gray}{71.80} \\
\textcolor{gray}{MMT}*  &               & \textcolor{gray}{6.30} & \textcolor{gray}{22.80} & \textcolor{gray}{33.30} & \textcolor{gray}{22.3} & \textcolor{gray}{67.30} \\ \midrule
\multirow{5}{*}{AR-CE}  &               & 7.27                   & 23.35                   & 35.12                   & 21.0                   & 74.68                   \\
                        & + IS          & \underline{8.52}       & \underline{23.64}       & \underline{37.70}       & \underline{20.0}       & 72.99                   \\
                        & + DIS         & {8.33}                 & 23.35                   & 36.84                   & \underline{20.0}       & 73.35                   \\\cmidrule{2-7}
                      \rowcolor{green!10}\cellcolor{white}  & + DualIS      & \textbf{9.09}          & \textbf{25.55}          & \textbf{38.28}          & \textbf{19.0}          & \textbf{71.33}          \\
                      \rowcolor{green!10}\cellcolor{white}  & + DualDIS     & 8.13                   & \underline{23.64}       & {37.04}                 & \underline{20.0}       & \underline{72.71}       \\ 
\bottomrule
\end{tabular}%
}
\caption{
Audio-to-Text Retrieval performance on AudioCaps and CLOTHO. Best in \textbf{Bold} and the second best is \underline{underlined}. 
}
\label{tab: quan: audiocaps and clotho appendix}
\end{table}

\begin{table*}[h!]
\centering
\resizebox{\textwidth}{!}{%
\begin{tabular}{ll|ccccc|ccccc}
\toprule
\multirow{2}{*}{Methods} & \multirow{2}{*}{Normalization}     & \multicolumn{5}{c|}{Text-to-Video Retrieval} & \multicolumn{5}{c}{Video-to-Text Retrieval} \\
                        &                                    & R@1 $\uparrow$    & R@5 $\uparrow$    & R@10 $\uparrow$   & MdR $\downarrow$   & MnR  $\downarrow$  & R@1 $\uparrow$   & R@5  $\uparrow$   & R@10 $\uparrow$  & MdR  $\downarrow$  & MnR $\downarrow$   \\\midrule
\textcolor{gray}{S2VT} &  &  \textcolor{gray}{11.9} & \textcolor{gray}{33.6} & \textcolor{gray}{76.5} & \textcolor{gray}{13.0} & \textcolor{gray}{-} & \textcolor{gray}{13.2} & \textcolor{gray}{33.6} & \textcolor{gray}{76.5} & \textcolor{gray}{15.0} & \textcolor{gray}{-} \\
\textcolor{gray}{FSE} & {} & \textcolor{gray}{13.9} & \textcolor{gray}{36.0} &\textcolor{gray}{78.9} &\textcolor{gray}{11.0} & \textcolor{gray}{-} &\textcolor{gray}{13.1} & \textcolor{gray}{33.9} & \textcolor{gray}{78.0} & \textcolor{gray}{12.0} & \textcolor{gray}{-}\\
                        \midrule
\multirow{5}{*}{CE+}    &                    & 18.22         & 42.63          & 56.08          & 8.0           & 42.05          & 18.82         & 42.73          & 55,78          & 8.0           & 37.27          \\
                        & + IS               & 21.22          & \textbf{46.31} & \underline{59.56}          & \textbf{7.0} & \textbf{37.06} & \underline{20.72}          & \underline{45.72} & \textbf{59.86} & \textbf{7.0} & \textbf{33.84} \\ 
                        & + DIS              & 21.02          & \textbf{46.31} & \underline{59.56} & \textbf{7.0} & {37.37} & \underline{20.72}          & \underline{45.72}          & \underline{59.76} & \textbf{7.0} & \underline{33.87} \\\cmidrule{2-12} 
                        \rowcolor{green!10}\cellcolor{white}& + DualIS  & \textbf{21.81} & \underline{46.12}          & \textbf{59.66} & \textbf{7.0} & \underline{37.07}          & \textbf{20.92} & 45.62          & 59.66          & \textbf{7.0} & \textbf{33.84} \\
                        \rowcolor{green!10}\cellcolor{white}& + DualDIS & \underline{21.71} & \textbf{46.31} & \underline{59.56} & \textbf{7.0} & {37.37} & \underline{20.72} & \textbf{45.82} & 59.56          & \textbf{7.0} & 33.90       \\\midrule
\multirow{5}{*}{TT-CE+} &                    & 22.31         & 49.20          & 62.15          & 6.0           & 31.18          & 21.41         & 47.11          & 60.66          & 6.0           & 29.67          \\
                        & + IS               & \underline{23.80}          & \textbf{51.29} & \textbf{65.04} & {5.0}  & \textbf{28.21} & 24.20          & 51.59          & 65.54          & {5.0}  & 26.09          \\ 
                        & + DIS              & 23.71          & \textbf{51.29} & \underline{64.84} & {5.0}  & \underline{28.40} & 24.30          & 51.49          & 65.54          & {5.0}  & 26.23          \\\cmidrule{2-12}
                        \rowcolor{green!10}\cellcolor{white}& + DualIS  & \textbf{25.60} & 50.30          & 64.44          & {5.0}  & 28.49          & \underline{24.40} & \textbf{52.59} & \textbf{66.33} & {5.0}  & \textbf{25.01} \\
                        \rowcolor{green!10}\cellcolor{white}& + DualDIS & \textbf{25.60} & \underline{50.40}          & 64.44          & {5.0}  & 28.88          & \textbf{24.60} & \underline{52.49} & \underline{66.24} & {5.0}  & \underline{25.08}   \\\bottomrule
\end{tabular}%
}
\caption{Retrieval performance on DiDeMo. Best in \textbf{Bold} and the second best is \underline{underlined}.
S2VT~\cite{venugopalan_translating_2015} and FSE~\cite{ferrari_cross-modal_2018} are two representative methods on DiDeMo. 
}
\label{tab: quan: didemo}
\end{table*}

\subsection{Experimental Details}

The experiments are conducted using the PyTorch framework~\cite{paszkePyTorchImperativeStyle2019}. 
We utilize the following models and their respective weights: \href{https://github.com/albanie/collaborative-experts}{CE+} (Model Weights: \href{http://www.robots.ox.ac.uk/~vgg/research/teachtext/data-hq/models/msrvtt-train-gpt2-xl-finetuned-adam/244af891/seed-0/2020-10-01_12-22-00/trained_model.pth}{MSR-VTT}, \href{http://www.robots.ox.ac.uk/~vgg/research/teachtext/data-hq/models/msvd-train-gpt2-xl-finetuned-adam/db396303/seed-0/2020-10-01_13-17-33/trained_model.pth}{MSVD}, \href{http://www.robots.ox.ac.uk/~vgg/research/teachtext/data-hq/models/didemo-train-gpt2-xl-finetuned-adam/616cf11b/seed-0/2020-10-01_13-31-57/trained_model.pth}{DiDeMo}, and \href{http://www.robots.ox.ac.uk/~vgg/research/teachtext/data-hq/models/activity-net-train-gpt2-xl-finetuned-adam/a791f27d/seed-0/2020-10-01_13-42-29/trained_model.pth}{ActivityNet}); \href{https://github.com/albanie/collaborative-experts}{TT-CE+} (Model Weights: \href{http://www.robots.ox.ac.uk/~vgg/research/teachtext/data-hq/models/msrvtt-train-gpt2-xl-finetuned-mte-adam/6427fd41/seed-0/2020-09-30_20-34-12/trained_model.pth}{MSR-VTT}, \href{http://www.robots.ox.ac.uk/~vgg/research/teachtext/data-hq/models/msvd-train-gpt2-xl-finetuned-mte-adam/0af2a1ed/seed-0/2020-09-30_21-30-15/trained_model.pth}{MSVD}, \href{http://www.robots.ox.ac.uk/~vgg/research/teachtext/data-hq/models/didemo-train-ce-intra-mte/1a5a249f/seed-0/2020-11-06_19-12-39/trained_model.pth}{DiDeMo}, and \href{http://www.robots.ox.ac.uk/~vgg/research/teachtext/data-hq/models/activity-net-train-gpt2-xl-finetuned-mte-adam/87d04a50/seed-0/2020-10-01_08-48-36/trained_model.pth}{ActivityNet});
\href{https://github.com/CryhanFang/CLIP2Video}{CLIP2Video} (Model Weights: \href{https://drive.google.com/drive/folders/1a5Dcg8wNh88Z-bxb0ZMV3IJFjtSe7X2A?usp=sharing}{MSR-VTT} and \href{https://drive.google.com/drive/folders/1LKMUZFf9EAxFbGShlA22eUCeGKC8DWx4?usp=sharing}{MSVD});
\href{https://github.com/openai/CLIP}{CLIP} (Model Weights: \href{https://github.com/openai/CLIP}{MSCOCO and Flickr30k});
\href{https://github.com/microsoft/Oscar}{Oscar} (Model Weights: \href{https://github.com/microsoft/Oscar}{MSCOCO and Flickr30k});
\href{https://github.com/oncescuandreea/audio-retrieval}{AR-CE} (Model Weights: \href{http://www.robots.ox.ac.uk/~vgg/research/collaborative-experts/data/models/audiocaps-train-vggish-vggsound/7e2eda12/seed-0/2021-06-09_17-06-26/trained_model.pth}{AudioCaps} and \href{http://www.robots.ox.ac.uk/~vgg/research/collaborative-experts/data/models/clotho-train-vggish-vggsound/dec0c820/seed-0/2021-06-10_14-45-51/trained_model.pth}{CLOTHO}) as they have released official code and weights. 
However, since \href{https://github.com/ArrowLuo/CLIP4Clip}{CLIP4Clip} and \href{https://github.com/xuguohai/X-CLIP}{X-CLIP} do not provide their trained models, we train the models on a single A100 card using the hyperparameters recommended in their papers. 
We set $k=1$ in constructing activation sets.

\subsection{More Quantitative Results}\label{sec: quan results}

\textbf{Text-video retrieval}.
The video-to-text results on MSR-VTT, ActivityNet, and MSVD are presented in \Cref{tab: quan: msrvtt appendix,tab: quan: actnet appendix,tab: quan: msvd appendix}, the results on DiDeMo are in \Cref{tab: quan: didemo}. 
Similar results can be observed. 
Our proposed DualIS and DualDIS outperform the previous methods, \ie, IS and DIS, by a large margin across different benchmarks.

\textbf{Text-image retrieval}. 
The image-to-text results on MSCOCO and Flickr30k are shown in \Cref{tab: quan: coco and f30k appendix}. 

\textbf{Text-audio retrieval}.
The audio-to-text results on AudioCaps and CLOTHO are presented in \Cref{tab: quan: audiocaps and clotho appendix}.

\subsection{More Qualitative Results}
This section presents retrieval results for text-to-video and video-to-text retrieval tasks, as illustrated in \Cref{fig: qualitative,fig: qualitative appendix}, respectively. 
Upon observing both text-to-video and video-to-text retrieval scenarios, a notable observation emerges regarding the susceptibility of IS and DIS to be misled by the query bank as shown in \Cref{fig: qualitative appendix}. 
In contrast, our proposed methods, DualIS and DualDIS, capitalize on the utilization of both query and gallery banks, leading to an improved retrieval performance achieved by effectively reducing the similarity of hubs and preserving the similarity of non-hubs.

\begin{figure*}[t!]
    \centering
    \subfloat[Top-3 Text-to-video.]{
        \centering
        \includegraphics[width=0.48\textwidth]{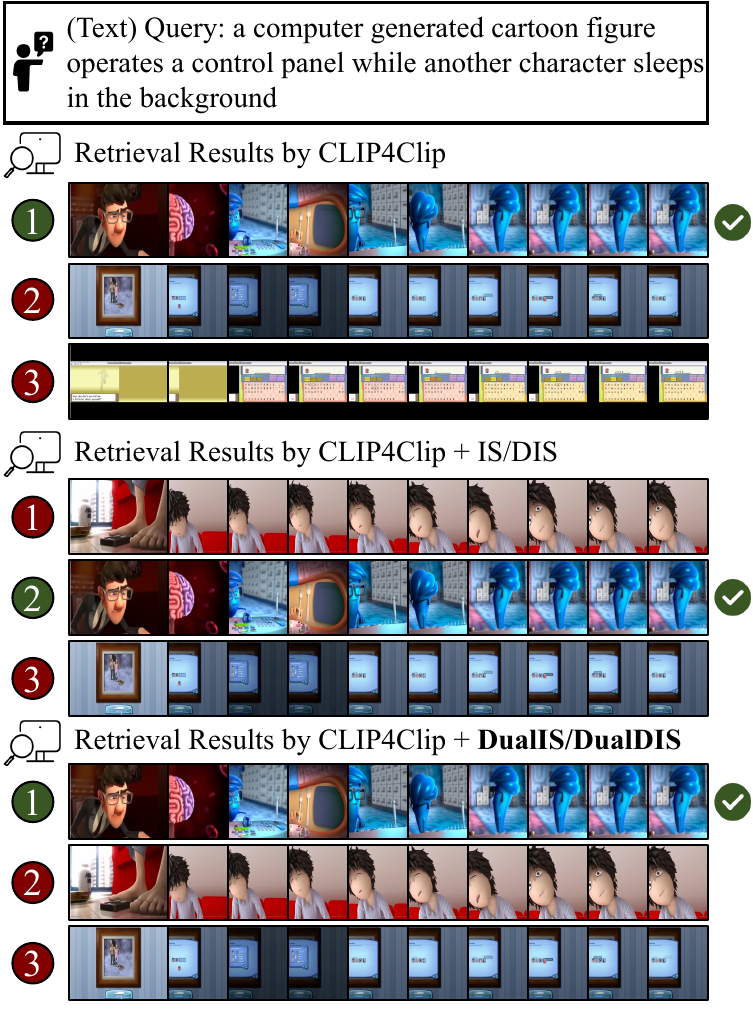}
    }
    \subfloat[Top-3 Text-to-video.]{
        \centering
        \includegraphics[width=0.48\textwidth]{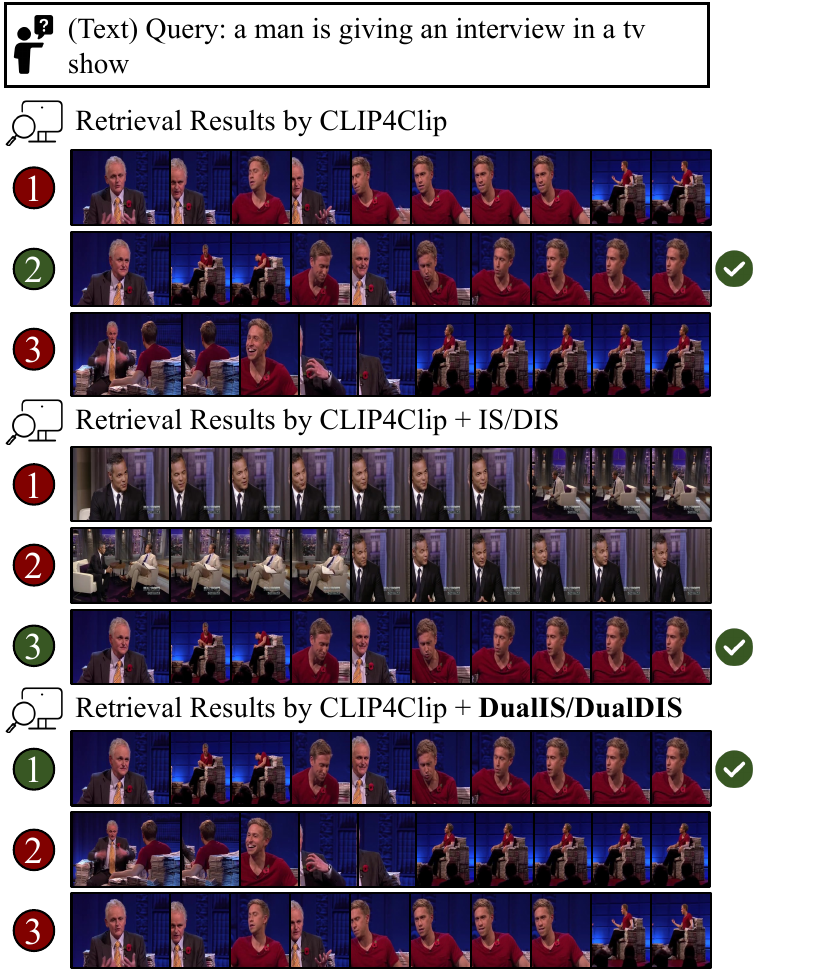}
    }
    
    \subfloat[Top-3 Video-to-text.]{
        \centering
        \includegraphics[width=0.48\textwidth]{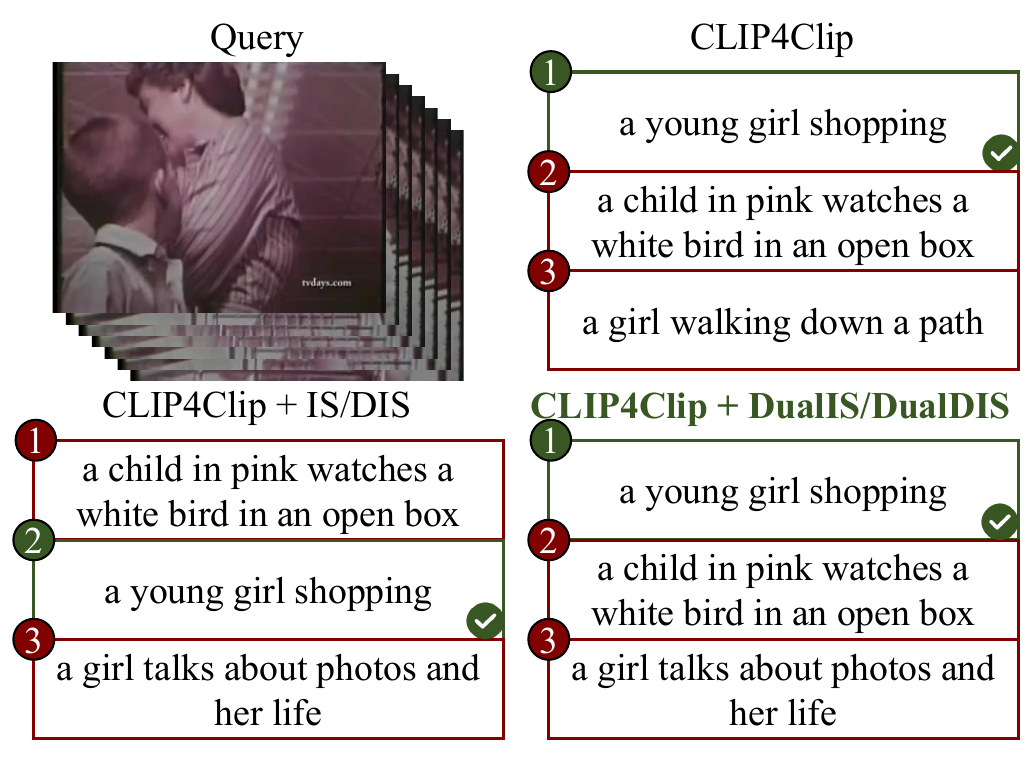}
    }
    \subfloat[Top-3 Video-to-text.]{
        \centering
        \includegraphics[width=0.48\textwidth]{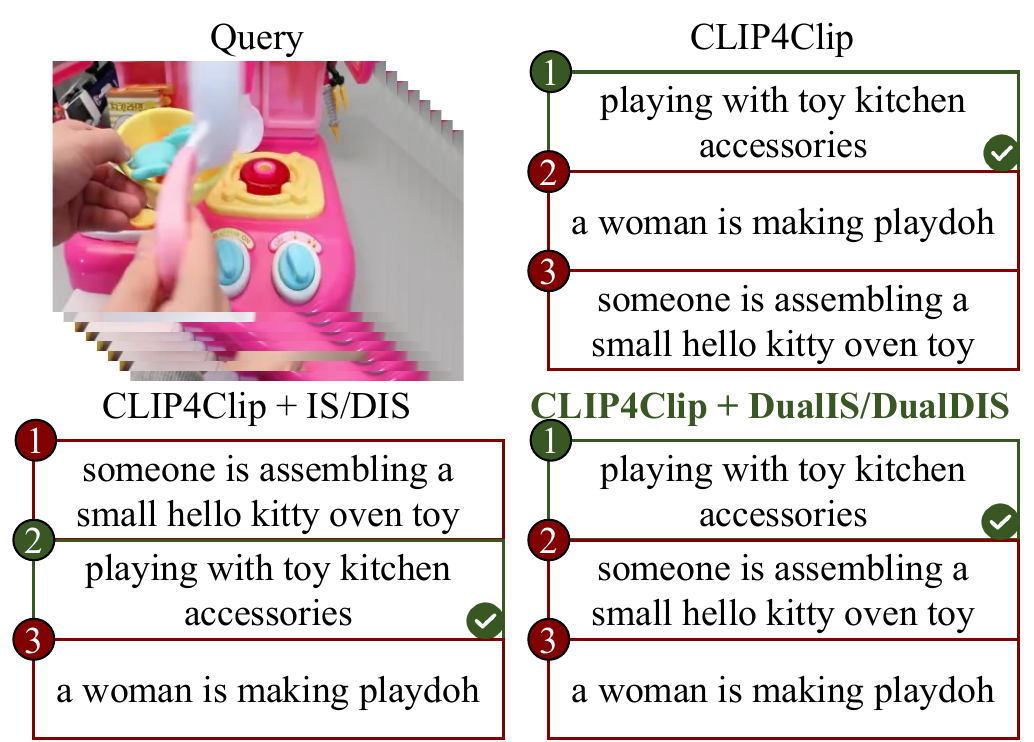}
    }
    
    \caption{Top-3 Text-to-video and Video-to-text retrieval results on MSR-VTT.}
    \label{fig: qualitative appendix}
\end{figure*}

\subsection{\ours} 

\textbf{Continued RQ1: Can \ours\ alleviate hubness? (Empirical Observation of Hubness in Cross-Modal Retrieval)}\label{Sec: hubness skewness}
We first illustrate the hubness phenomenon across several benchmarks and methods of cross-modal retrieval in \Cref{fig: showing 1 occurrence} and \Cref{fig: showing 1 occurrence appendix}. 
We notice that, hubness is prevalent across different methods, datasets, and tasks as the queries frequently retrieve a small number of gallery data. 

\begin{figure*}[t!]
    \begin{center}

    \subfloat[MSVD (CE+).]{
        \centering
        \includegraphics[width=0.24\textwidth]{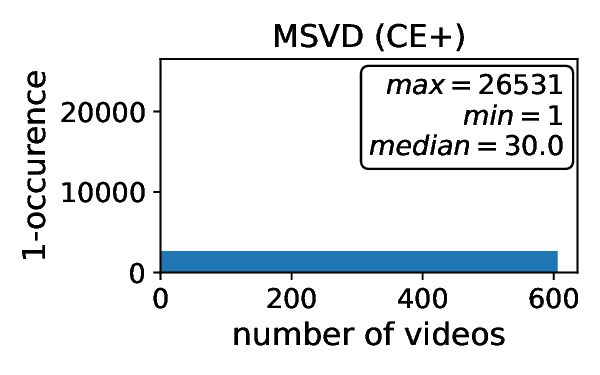}
    }
    \subfloat[MSVD (TT-CE+).]{
        \centering
        \includegraphics[width=0.24\textwidth]{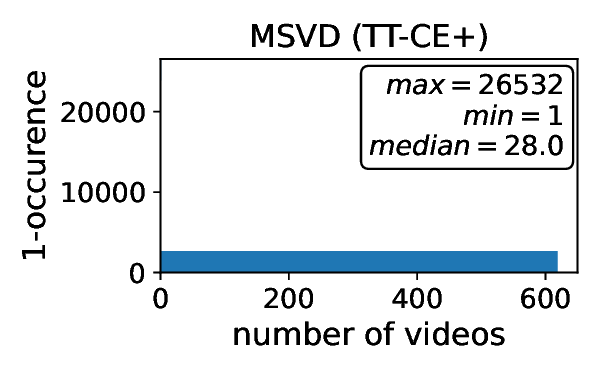}
    }
    \subfloat[MSVD (CLIP4Clip).]{
        \centering
        \includegraphics[width=0.24\textwidth]{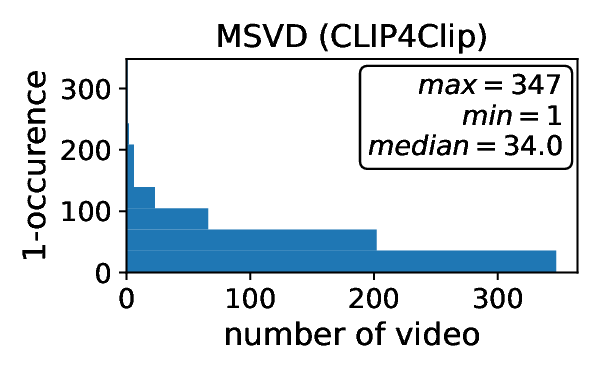}
    }
    \subfloat[MSVD (X-CLIP).]{
        \centering
        \includegraphics[width=0.24\textwidth]{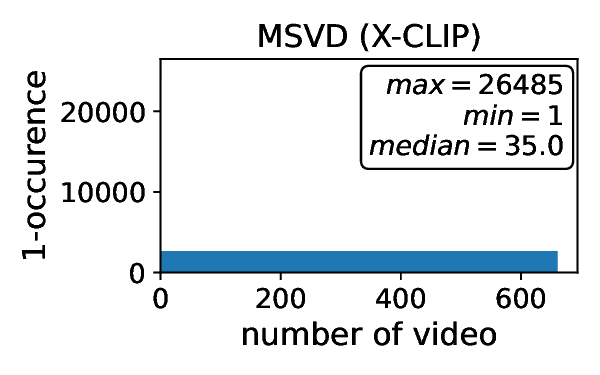}
    }
        
    \subfloat[MSR-VTT (CLIP2Video).]{
        \centering
        \includegraphics[width=0.24\textwidth]{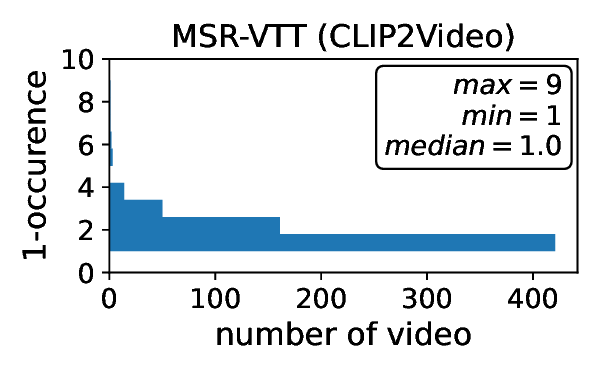}
    }
    \subfloat[MSVD (CLIP2Video).]{
        \centering
        \includegraphics[width=0.24\textwidth]{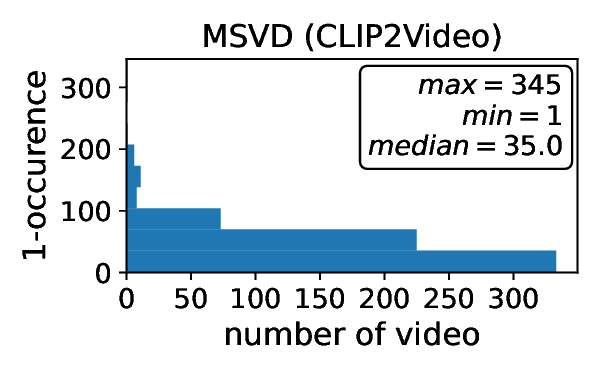}
    }
    \subfloat[DiDeMo (CE+).]{
        \centering
        \includegraphics[width=0.24\textwidth]{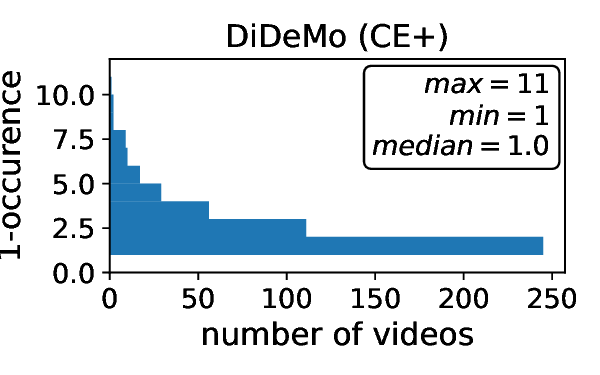}
    }
    \subfloat[DiDeMo (TT-CE+).]{
        \centering
        \includegraphics[width=0.24\textwidth]{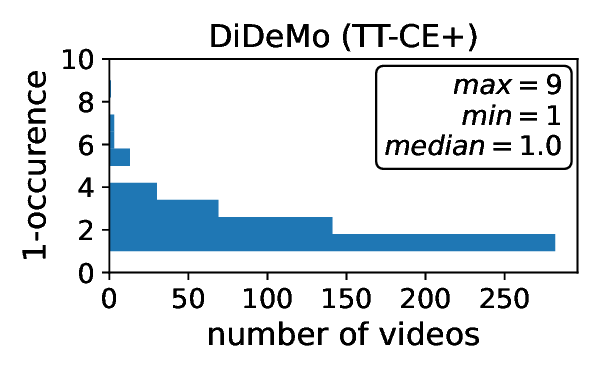}
    }

    \subfloat[MSCOCO (CLIP).]{
        \centering
        \includegraphics[width=0.24\textwidth]{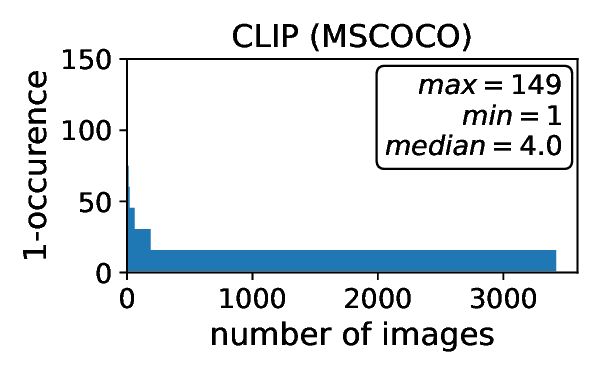}
    }
    \subfloat[MSCOCO (Oscar).]{
        \centering
        \includegraphics[width=0.24\textwidth]{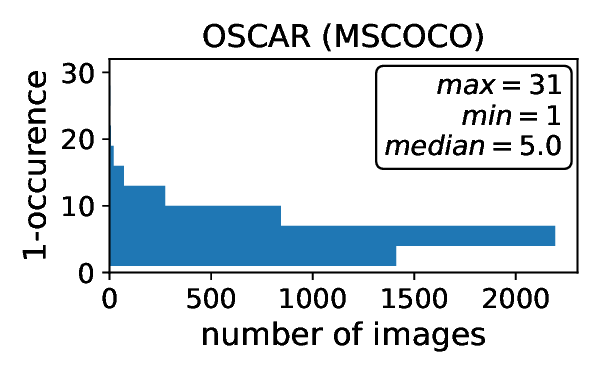}
    }
    \subfloat[Flickr30k (CLIP).]{
        \centering
        \includegraphics[width=0.24\textwidth]{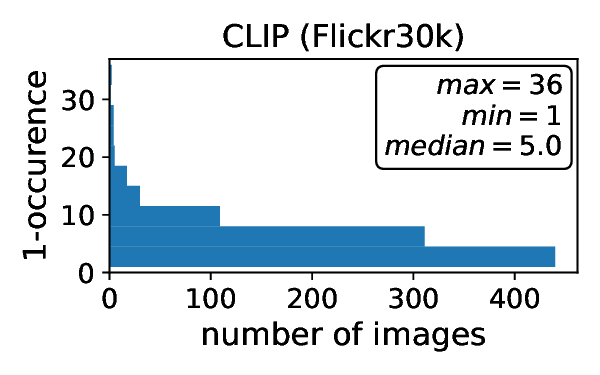}
    }
    \subfloat[Flickr30k (Oscar).]{
        \centering
        \includegraphics[width=0.24\textwidth]{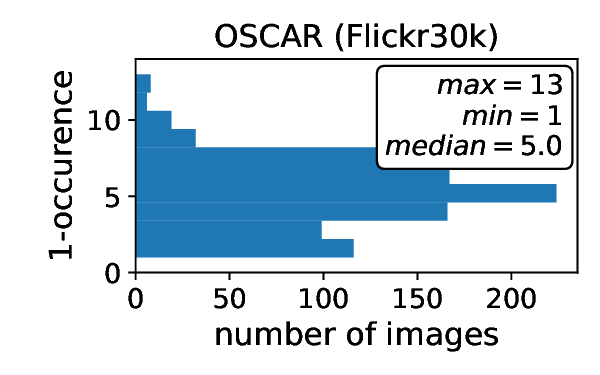}
    }
    
    \subfloat[AudioCaps (AR-CE).]{
        \centering
        \includegraphics[width=0.24\textwidth]{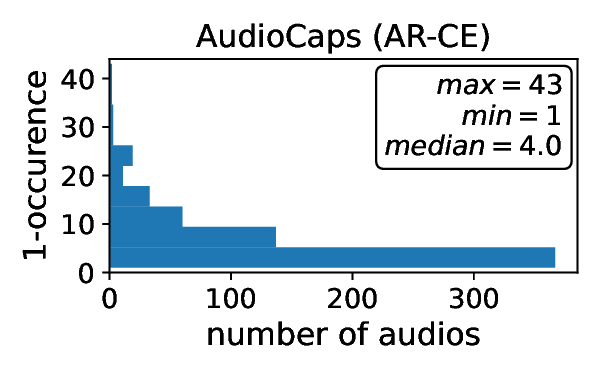}
    }
    \subfloat[CLOTHO (AR-CE).]{
        \centering
        \includegraphics[width=0.24\textwidth]{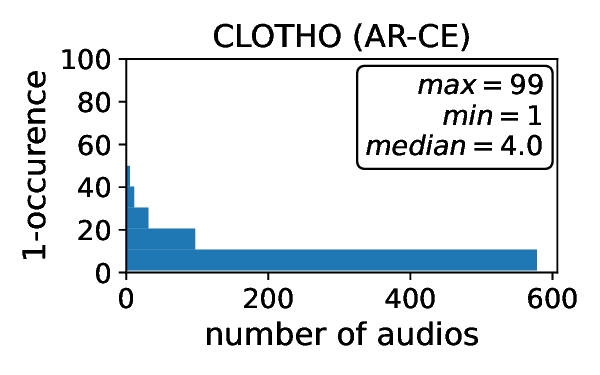}
    }

      \caption{
      \textbf{Hubness is prevalent across different methods, datasets, and tasks}. 
        These figures illustrate the distribution of the number of times each (test) gallery video/image/audio was retrieved by (test) set queries. 
      In all visualization, severe hubness can be observed, as a small number of galleries are retrieved disproportionately often, which might reduce the retrieval performance. 
      }
      \label{fig: showing 1 occurrence appendix}
   \end{center}
\end{figure*}

\begin{table*}[h!]
\resizebox{\textwidth}{!}{%
\begin{tabular}{l|ccccc|cccc|cc|c}
\toprule
\multicolumn{1}{l|}{\multirow{2}{*}{Normalization}} & \multicolumn{5}{c|}{MSRVTT}                                         & \multicolumn{4}{c|}{ActivityNet}                                                                      & \multicolumn{2}{c|}{MSCOCO}  &   \multicolumn{1}{c}{\multirow{2}{*}{Best}} \\
                               & CE+ & TT-CE+ & CLIP4Clip & CLIP2Video & \multicolumn{1}{c|}{X-CLIP} & CE+           & TT-CE+                             & CLIP4Clip & \multicolumn{1}{c|}{X-CLIP} & CLIP                               & Oscar   &   \\ \midrule
\multicolumn{13}{c}{Text-to-Video/Image/Audio}                                                                                                                                                                                                                                                               \\ \midrule
                               & 1.38          & 1.28          & 1.13          & 0.84          & \multicolumn{1}{c|}{1.24}          & 0.94          & 0.76                        & 0.83          & \multicolumn{1}{c|}{0.98}          & 2.71                           & 0.55        &0  \\
+ IS                           & \underline{0.82}          & 0.34          & \textbf{0.18} & \underline{0.32}          & \multicolumn{1}{c|}{\underline{0.74}}          & 0.67          & 0.51                        & 0.55          & \multicolumn{1}{c|}{0.57}          & 0.90                           & \underline{0.24} & 1 \\
+ DIS                          & 0.83          & 0.34          & \textbf{0.18} & 0.33          & \multicolumn{1}{c|}{\underline{0.74}}          & 0.68          & 0.52                        & \underline{0.42}          & \multicolumn{1}{c|}{0.44}          & 0.90                           & \textbf{0.22}  & 1\\
\rowcolor{green!10}+ DualIS                & \textbf{0.37} & \underline{0.33} & \underline{0.57}          & \textbf{0.26} & \multicolumn{1}{c|}{\textbf{0.70}} & \underline{0.54} & \textbf{0.37}               & \textbf{0.36} & \multicolumn{1}{c|}{\textbf{0.42}} & \textbf{0.42}                  & 0.31       & \textbf{7}    \\
\rowcolor{green!10}+ DualDIS               & \textbf{0.37} & \textbf{0.28} & \underline{0.57}          & \textbf{0.26} & \multicolumn{1}{c|}{\textbf{0.70}} & \textbf{0.50} & \underline{0.40}               & {0.46} & \multicolumn{1}{c|}{\underline{{0.43}}} & \underline{0.43}                  & 0.29      & \underline{5}    \\ \midrule
\multicolumn{13}{c}{Video/Image/Audio-to-Text}                                                                                                                                                                                                                                                               \\ \midrule
                               & 3.65          & 4.02          & 2.13          & 2.13          & \multicolumn{1}{c|}{2.66}          & 1.06          & 0.70                        & 1.11          & \multicolumn{1}{c|}{1.58}          & 3.78                           & 1.45      &0    \\
+ IS                           & \textbf{2.29} & \textbf{2.34} & \textbf{0.43} & \textbf{0.28} & \multicolumn{1}{c|}{1.58}          & 0.63          & 0.56                        & 0.77          & \multicolumn{1}{c|}{\underline{0.44}}          & 2.21                           & 1.08         &\underline{4} \\
+ DIS                          & 2.54          & 2.50          & \underline{0.47} & \underline{0.31} & \multicolumn{1}{c|}{1.61}          & 0.65          & 0.57                        & 0.58          & \multicolumn{1}{c|}{0.56}          & 2.21                           & 1.17       & 0   \\
\rowcolor{green!10}+ DualIS                & \underline{2.48}          & \underline{2.38}          & 1.12          & \textbf{0.28} & \multicolumn{1}{c|}{\textbf{1.46}} & \textbf{0.30} & \textbf{0.27}               & \textbf{0.32} & \multicolumn{1}{c|}{\textbf{0.37}} & \underline{1.53}                  & \textbf{0.98} & \textbf{7}\\
\rowcolor{green!10}+ DualDIS               & {2.51} & {2.47} & 1.16          & \underline{0.31} & \multicolumn{1}{c|}{\underline{1.48}} & \underline{0.34} & \underline{0.29}               & \underline{0.40} & \multicolumn{1}{c|}{\textbf{0.37}} & \textbf{1.50}                  & \underline{1.06}  & 2\\  
\midrule\midrule
\multirow{2}{*}{Normalization} & \multicolumn{5}{c|}{MSVD}                                           & \multicolumn{2}{c|}{DiDeMo}                        & \multicolumn{2}{c|}{Flickr30K}          & \multicolumn{1}{c|}{AudioCaps}     & CLOTHO     & \multicolumn{1}{c}{\multirow{2}{*}{Best}}   \\
                               & CE+ & TT-CE+ & CLIP4Clip & CLIP2Video & \multicolumn{1}{c|}{X-CLIP} & CE+           & \multicolumn{1}{c|}{TT-CE+}        & CLIP      & \multicolumn{1}{c|}{Oscar}  & \multicolumn{1}{c|}{AR-CE}         & AR-CE     &    \\ \midrule
\multicolumn{13}{c}{Text-to-Video/Image/Audio}                                                                                                                                                                                                                                                        \\ \midrule
                               & 7.95          & 7.95          & 0.83          & 0.84          & \multicolumn{1}{c|}{\textbf{0.65}} & 1.23          & \multicolumn{1}{c|}{0.86}   & 2.78          & \multicolumn{1}{c|}{0.32}   & \multicolumn{1}{c|}{0.22}      & 1.09     & 1     \\
+ IS                           & \textbf{7.88} & \textbf{7.92} & \underline{0.42}          & 0.80          & \multicolumn{1}{c|}{1.87}          & \textbf{0.44} & \multicolumn{1}{c|}{0.39}   & \underline{4.06} & \multicolumn{1}{c|}{\textbf{0.01}}   & \multicolumn{1}{c|}{\underline{0.02}}      & 0.68      & \underline{4}    \\
+ DIS                          & \underline{7.92} & \textbf{7.92} & \underline{0.42}          & 0.80          & \multicolumn{1}{c|}{1.87}          & \underline{0.46} & \multicolumn{1}{c|}{0.40}   & \textbf{2.81}          & \multicolumn{1}{c|}{\textbf{0.01}}   & \multicolumn{1}{c|}{\textbf{0.01}}      & 0.68   & \underline{4}       \\
\rowcolor{green!10}+ DualIS                & 7.93          & \textbf{7.92} & \textbf{0.37} & \textbf{0.20} & \multicolumn{1}{c|}{\underline{0.68}}          & 0.53          & \multicolumn{1}{c|}{\underline{0.33}}   & 4.14          & \multicolumn{1}{c|}{\textbf{0.01}}   & \multicolumn{1}{c|}{0.15}      & \textbf{0.46} & \textbf{5}\\
\rowcolor{green!10}+ DualDIS               & 7.93          & \textbf{7.92} & \textbf{0.37} & \underline{0.21} & \multicolumn{1}{c|}{\underline{0.68}}          & 1.13          & \multicolumn{1}{c|}{\textbf{0.32}}   & \textbf{2.81} & \multicolumn{1}{c|}{\textbf{0.01}}   & \multicolumn{1}{c|}{0.15}      & \underline{0.55} & \textbf{5}\\ \midrule
\multicolumn{13}{c}{Video/Image/Audio-to-Text}                                                                                                                                                                                                                                                        \\ \midrule
                               & 3.18          & 4.18          & 4.54          & 6.02          & \multicolumn{1}{c|}{4.47}          & 1.01          & \multicolumn{1}{c|}{0.98}   & 2.23          & \multicolumn{1}{c|}{1.16}   & \multicolumn{1}{c|}{1.65}      & 2.43      &0    \\
+ IS                           & 3.45          & \textbf{3.06} & \textbf{2.97} & \textbf{2.78} & \multicolumn{1}{c|}{\textbf{3.84}} & \underline{0.53} & \multicolumn{1}{c|}{\underline{0.72}}   & 1.62          & \multicolumn{1}{c|}{\underline{1.02}}   & \multicolumn{1}{c|}{\underline{1.00}}      & 2.03   &\textbf{4}       \\
+ DIS                          & \underline{3.42} & 4.05          & {3.47} & 4.39          & \multicolumn{1}{c|}{4.64}          & \textbf{0.52} & \multicolumn{1}{c|}{\underline{0.72}}   & 1.62          & \multicolumn{1}{c|}{1.03}   & \multicolumn{1}{c|}{\textbf{0.99}}      & 2.02         &2 \\
\rowcolor{green!10}+ DualIS                & \textbf{2.60} & \underline{3.42}          & \underline{3.01}          & \underline{3.00}          & \multicolumn{1}{c|}{\underline{3.92}}          & 0.54          & \multicolumn{1}{c|}{\textbf{0.59}}   & \underline{1.08} & \multicolumn{1}{c|}{\textbf{0.99}}   & \multicolumn{1}{c|}{1.06}      & \textbf{1.53} &\textbf{4}\\
\rowcolor{green!10}+ DualDIS               & 3.43          & 4.04 & 3.60          & {4.23} & \multicolumn{1}{c|}{4.57}          & 0.56          & \multicolumn{1}{c|}{\textbf{0.59}}   & \textbf{1.06} & \multicolumn{1}{c|}{\textbf{0.99}}   & \multicolumn{1}{c|}{1.04}      & \underline{1.60} &\underline{3}\\  
\bottomrule
\end{tabular}
}
\caption{The hubness (skewness score) is significantly reduced after applying our proposed DualIS and DualDIS than IS and DIS Best in \textbf{Bold} and the second best is \underline{underlined}.}
\label{tab: hubness after ours appendix}
\end{table*}

\begin{figure*}[h!]
    \begin{center}

    \includegraphics[width=0.24\textwidth]{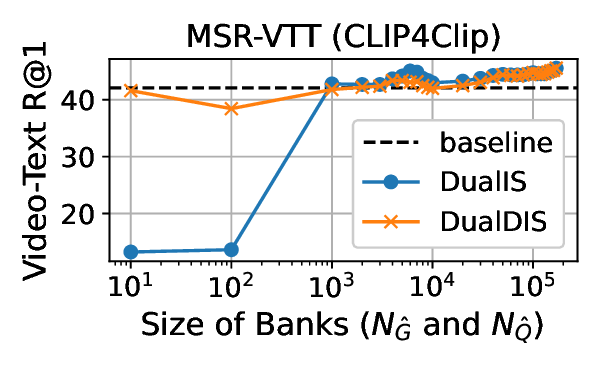}
    \includegraphics[width=0.24\textwidth]{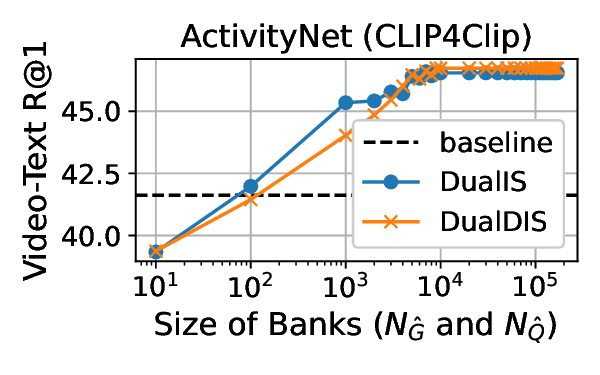}
    \includegraphics[width=0.24\textwidth]{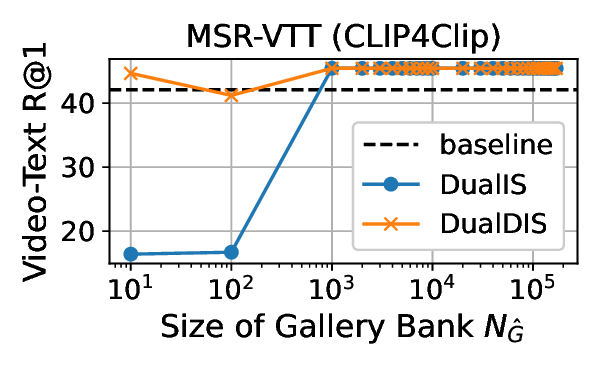}
    \includegraphics[width=0.24\textwidth]{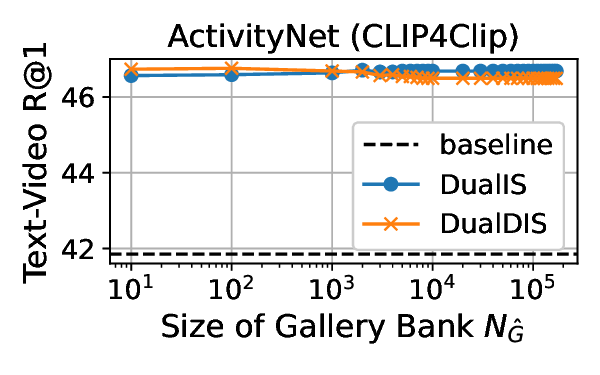}
    
    \includegraphics[width=0.24\textwidth]{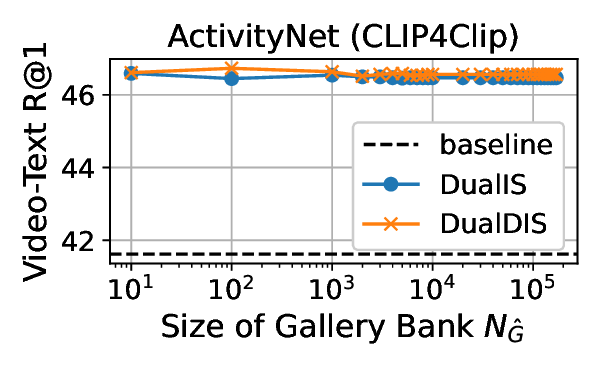}
    \includegraphics[width=0.24\textwidth]{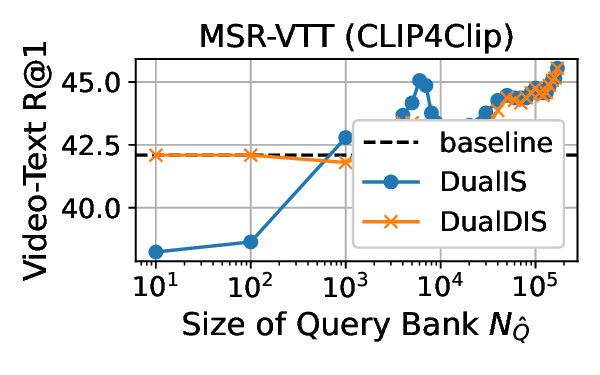}
    \includegraphics[width=0.24\textwidth]{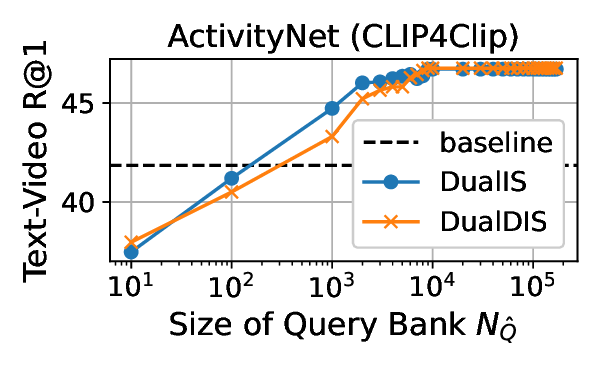}
    \includegraphics[width=0.24\textwidth]{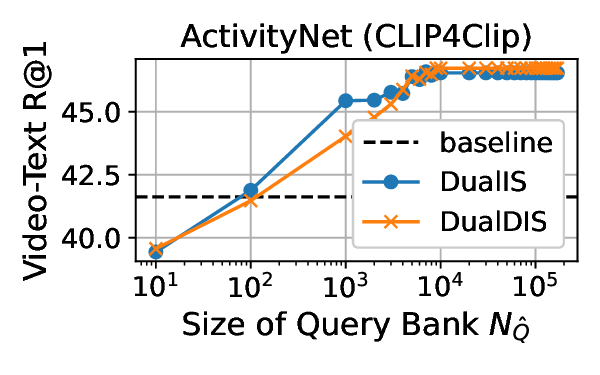}
    
      \caption{Text-Video Retrieval R@1 w.r.t the size of gallery and query banks on MSR-VTT and ActivityNet using DualIs and DualDIS with CLIP4Clip.}
      \label{fig: ablation: size of banks appendix}
   \end{center}
\end{figure*}

\begin{figure*}[h!]
    \begin{center}
    \subfloat[V2T R@1 w.r.t $\beta_1$.]{
        \centering
        \includegraphics[width=0.24\textwidth]{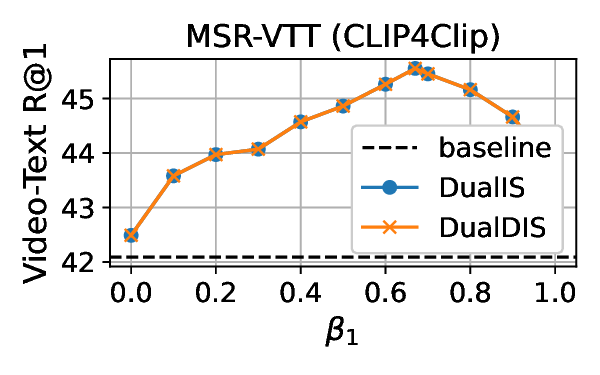}
    }
    \subfloat[V2T R@1 w.r.t $\beta_1$.]{
        \centering
        \includegraphics[width=0.24\textwidth]{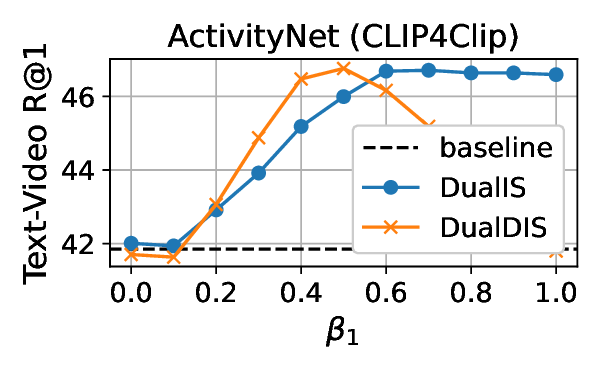}
    }
    \subfloat[V2T R@1 w.r.t $\beta_2$.]{
        \centering
        \includegraphics[width=0.24\textwidth]{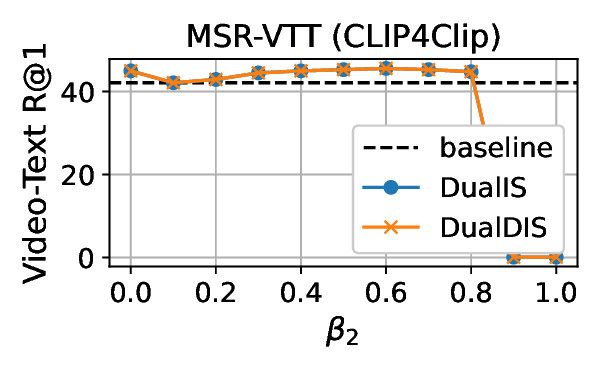}
    }
    \subfloat[V2T R@1 w.r.t $\beta_2$.]{
        \centering
        \includegraphics[width=0.24\textwidth]{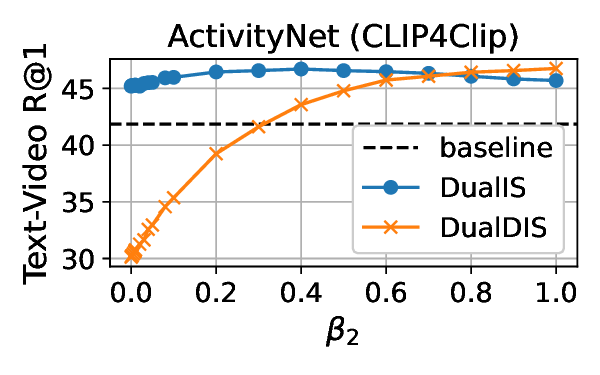}
    }
      \caption{Video-to-Text Retrieval recall at 1 w.r.t $\beta_1$ and $\beta_2$ on CLIP4Clip and ActivityNet using DualIS and DualDIS.}
      \label{fig: ablation: beta1 and beta2 appendix}
   \end{center}
\end{figure*}

\begin{figure*}[h!]
    \begin{center}
    \subfloat[T2V R@1 w.r.t Top-$k$ on MSR-VTT.]{
        \centering
        \includegraphics[width=0.24\textwidth]{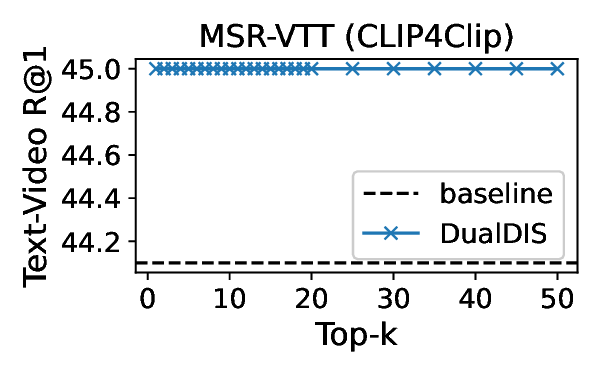}
    }
    \subfloat[T2V R@1 w.r.t Top-$k$ on MSR-VTT.]{
        \centering
        \includegraphics[width=0.24\textwidth]{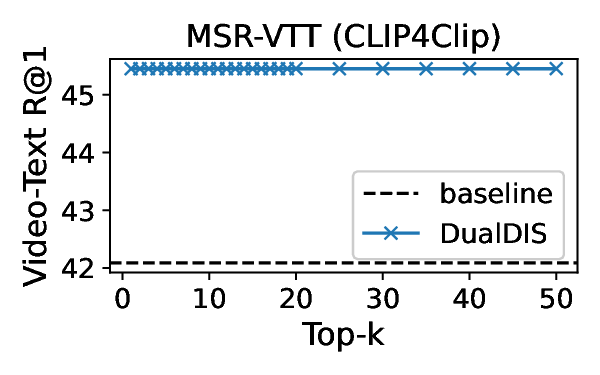}
    }
    \subfloat[T2V R@1 w.r.t Top-$k$ on ActivityNet.]{
        \centering
        \includegraphics[width=0.24\textwidth]{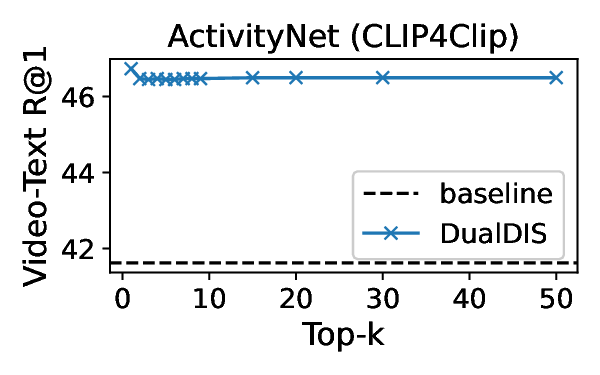}
    }
    \subfloat[V2T R@1 w.r.t Top-$k$ on ActivityNet.]{
        \centering
        \includegraphics[width=0.24\textwidth]{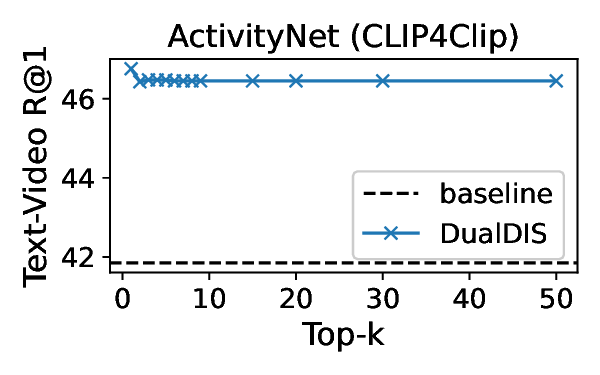}
    }

      \caption{Video-Text Retrieval performance with respect to top-k hyperparameter in DualDIS.}
      \label{fig: ablation: topk appendix}
   \end{center}
\end{figure*}

Moreover, to demonstrate how severe the hubness problem is in a specific benchmark and method, following \citet{JMLR:v11:radovanovic10a}, we employ skewness of the k-occurrences distribution, with $k = 10$. 
Specifically, the skewness of the k-occurrences distribution is defined as
\begin{equation}
    S_{N_k} = \frac{\mathbb{E}_{\mathbf{x}}[N_k (\mathbf{x}) - \mu_{N_k}]^3}{\sigma^{3}_{N_k}}\,,
\end{equation}
where $\mu_{N_k}$ and $\sigma_{N_k}$ are the mean and standard deviation of $N_k$. $N_{k}(\mathbf{x})$ refers to the $k$-occurrence distribution, given by $\sum_{i\in[N_{g}]} p_{i,k}(\mathbf{x}) = \sum_{i\in[N_{g}]} \mathbf{I}(\mathbf{x} \text{ is in the }k$ nearest neighbours of the $i$-th gallery data) and $\mathbf{I}(cond)$ is the indicator function. 
Positive skewness indicates that the distribution is right-tailed and higher skewness scores mean a severer hubness problem occurs. 
As shown in \Cref{tab: hubness after ours,tab: hubness after ours appendix}, the hubness score consistently exhibits a relatively high value across various methods and benchmarks. 
However, upon employing our DualIS and DualDIS, we observed a notable reduction in hubness scores across different benchmarks. 
This reduction serves as empirical evidence supporting the effectiveness and efficiency of our proposed methods in addressing the hubness issue.

\textbf{Continued RQ2: How much data is desired in the banks?}
The results of R@1 for text-video retrieval using DualIS and DualDIS are presented in \Cref{fig: ablation: size of banks,fig: ablation: size of banks appendix}. 
Our observations indicate that as the size of the two banks increases, the performance improves. 
owever, even with a relatively small number of samples in the query and gallery banks, we still achieve satisfactory performance. 
Moreover, we examined the individual impact of the query and gallery bank sizes by independently sampling them at different scales. 
The results demonstrate that the size of the query bank has a greater influence on performance compared to the gallery bank, although a bigger gallery bank also leads to better performance. 

\textbf{RQ3: Is \ours\ sensitive to $\beta_1$ and $\beta_2$?}
In order to investigate the sensitivity of \ours\ to $\beta_1$ and $\beta_2$, we conducted an evaluation as shown in Figure \ref{fig: ablation: beta1 and beta2 appendix}.
Our findings indicate that the performance of DualDIS demonstrates limited sensitivity to changes in both $\beta_1$ and $\beta_2$. 
Adjusting these hyperparameters within a reasonable range does not result in significant variations in the overall performance.

\textbf{RQ4: Is DualDIS sensitive to the Top-k hyperparameter?}
We conducted experiments to observe the influence of k in the Top-k selection used to construct the gallery activation sets. 
The results are shown in \Cref{fig: ablation: topk appendix}. 
We observed that choosing $k=1$ offers a good trade-off between performance and computational efficiency. 
Therefore, we used $k=1$ for all reported experiments.

\begin{table*}[ht!]
\centering
\resizebox{\textwidth}{!}{%
\begin{tabular}{lc|ccccc|ccccc}
\toprule
                              & \multicolumn{1}{c|}{\multirow{2}{*}{Agg}} & \multicolumn{5}{c|}{Text-to-Video Retrieval}                    & \multicolumn{5}{c}{Video-to-Text Retrieval} \\
                              & \multicolumn{1}{c|}{}                     &R@1 $\uparrow$    & R@5 $\uparrow$    & R@10 $\uparrow$   & MdR $\downarrow$   & MnR  $\downarrow$  & R@1 $\uparrow$   & R@5  $\uparrow$   & R@10 $\uparrow$  & MdR  $\downarrow$  & MnR $\downarrow$   \\ \bottomrule
\multicolumn{12}{c}{MSR-VTT (1k)}                                                                                                                                                         \\ \bottomrule
Clip4Clip                     & \multicolumn{1}{c|}{}                     & 44.10   & 71.70   & 81.40   & 2.0 & \multicolumn{1}{c|}{15.51}  & 42.09   & 71.24   & 81.23   & 2.0 & 12.01   \\
+ DualIS                          & \multicolumn{1}{c|}{$+$}                  & 44.20   & 71.70   & 81.60   & 2.0 & \multicolumn{1}{c|}{15.64}  & 44.86   & 72.04   & 82.02   & 2.0 & {11.56} \\
+ DualDIS                         & \multicolumn{1}{c|}{$+$}                  & \textbf{45.00} & \textbf{72.50} & \textbf{82.10} & 2.0 & \multicolumn{1}{c|}{\textbf{15.32}}  & \textbf{45.45} & \textbf{73.02} & \textbf{81.42}   & 2.0 & {11.59} \\
\rowcolor{green!10} + DualIS  & \multicolumn{1}{c|}{$\times$}             & \textbf{45.00} & \textbf{72.50} & \textbf{82.10} & 2.0 & \multicolumn{1}{c|}{\textbf{15.32}}  & \textbf{45.45} & \textbf{73.02} & \textbf{81.42}   & 2.0 & \textbf{11.56} \\
\rowcolor{green!10} + DualDIS & \multicolumn{1}{c|}{$\times$}             & \textbf{45.00} & \textbf{72.50} & \textbf{82.10} & 2.0 & \multicolumn{1}{c|}{\textbf{15.32}}  & \textbf{45.45} & \textbf{73.02} & \textbf{81.42}   & 2.0 & {11.59} \\ \bottomrule
\multicolumn{12}{c}{ActivityNet}                                                                                                                                                          \\ \bottomrule
CLIP4Clip                     & \multicolumn{1}{c|}{}                     & 41.85   & 74.44   & 84.84   & 2.0 & \multicolumn{1}{c|}{6.84}   & 41.62   & 74.11   & 86.12   & 2.0 & 6.81    \\
+ DualIS                          & \multicolumn{1}{c|}{$+$}                  & 46.40   & \textbf{77.85}   & 87.21   & 2.0 & \multicolumn{1}{c|}{6.14}   & 47.02   & 77.59   & 87.49   & 2.0 & 6.16    \\
+ DualDIS                         & \multicolumn{1}{c|}{$+$}                  & 46.26   & {77.78}   & 86.90   & 2.0 & \multicolumn{1}{c|}{6.11}   & \textbf{47.09}   & 77.21   & 87.38   & 2.0 & 6.31    \\
\rowcolor{green!10} + DualIS  & \multicolumn{1}{c|}{$\times$}             & {46.71} & {77.47} & \textbf{87.35} & 2.0 & \multicolumn{1}{c|}{\textbf{6.01}} & {46.59} & \textbf{78.04} & \textbf{88.15} & 2.0 & \textbf{6.05}  \\
\rowcolor{green!10} + DualDIS  & \multicolumn{1}{c|}{$\times$}             & \textbf{46.76} & 77.48   & {87.28} & 2.0 & \multicolumn{1}{c|}{\textbf{6.01}} & {46.73} & {77.90} & {88.06} & 2.0 & \textbf{6.05}  \\\bottomrule
\end{tabular}%
}
\caption{Retrieval results on MSR-VTT and ActivityNet with \ours (DualIS) and \ours (DualDIS). ``Agg'' refers to the aggregation methods. ``$\times$'' and ``$+$'' represent multiplying and addition.}
\label{tab: abla +}
\end{table*}

\begin{table*}[ht!]
\centering
\resizebox{\textwidth}{!}{%
\begin{tabular}{l|ccc|cccccc|cccccc}
\toprule
                     & \multirow{2}{*}{QB} & \multirow{2}{*}{GB} & \multirow{2}{*}{w/o test} & \multicolumn{6}{c|}{Text-to-Video Retrieval}                               & \multicolumn{6}{c}{Video-to-Text Retrieval}                                        \\
                     &                             &                               &                               &R@1 $\uparrow$    & R@5 $\uparrow$    & R@10 $\uparrow$   & MdR $\downarrow$   & MnR  $\downarrow$  & Skewness $\downarrow$& R@1 $\uparrow$    & R@5 $\uparrow$    & R@10 $\uparrow$   & MdR $\downarrow$   & MnR  $\downarrow$ & Skewness$\downarrow$ \\
\midrule
\multicolumn{16}{c}{MSR-VTT (1k)}\\
\midrule                     
Clip4Clip            &                             &                               &                               & 44.10          & 71.70          & 81.40          & 2.0 & 15.51 & 1.38     & 42.09          & 71.24          & 81.23          & 2.0 & 12.01          & 3.65     \\
\textcolor{gray}{+ GC (upper bound)}   &                             & \textcolor{gray}{$\checkmark$}                  &                               & \textcolor{gray}{45.60}          & \textcolor{gray}{71.30}          & \textcolor{gray}{81.10}          & \textcolor{gray}{2.0} & \textcolor{gray}{16.50} & \textcolor{gray}{0.45}     & \textcolor{gray}{46.84}          & \textcolor{gray}{70.85}          & \textcolor{gray}{80.34}          & \textcolor{gray}{2.0} & \textcolor{gray}{18.71}          & \textcolor{gray}{0.37}     \\
+ GC                 &                             & $\checkmark$                  & $\checkmark$                  & 40.20          & 65.10          & 75.60          & 2.0 & 24.52 & 3.33     & 40.14          & 67.33          & 76.59          & 2.0 & 15.58          & 4.71     \\
\textcolor{gray}{+ CSLS (upper bound, $k=1000$))} & \textcolor{gray}{$\checkmark$}                & \textcolor{gray}{$\checkmark$}                  &                               & \textcolor{gray}{42.70}          & \textcolor{gray}{70.70}          & \textcolor{gray}{80.70}          & \textcolor{gray}{2.0} & \textcolor{gray}{15.93} & \textcolor{gray}{0.81}     & \textcolor{gray}{42.39}          & \textcolor{gray}{70.85}          & \textcolor{gray}{80.43}          & \textcolor{gray}{2.0} & \textcolor{gray}{12.84}          & \textcolor{gray}{0.88}     \\
+ CSLS with $k=10$                     & $\checkmark$                   & $\checkmark$                   & $\checkmark$              & 41.70                   & 69.20                   & 79.20                   & 2.0                   & 16.55                   & 1.33                   & 5.57                    & 63.04                   & 73.42                   & 3.0                   & 19.11                   & 2.71                   \\
+ CSLS with $k=100$                    & $\checkmark$                   & $\checkmark$                   & $\checkmark$              & 43.10                   & 70.60                   & 80.40                   & 2.0                   & 16.02                   & 0.92                   & 41.11                   & 69.86                   & 79.25                   & 2.0                   & 13.93                   & 1.43                   \\
+ CSLS with $k=1000$                   & $\checkmark$                   & $\checkmark$                   & $\checkmark$              & 42.70                   & 71.00                   & 80.70                   & 2.0                   & 16.01                   & 0.90                   & 42.19                   & 70.36                   & 80.43                   & 2.0                   & 13.11                   & \textbf{0.92}                   \\
+ IS                 & $\checkmark$                &                               & $\checkmark$                  & 44.20          & 71.70          & 81.60          & 2.0 & 15.64 & 0.82     & 44.86          & 72.04          & \textbf{82.02} & 2.0 & \textbf{11.56} & 2.29     \\
+ DIS                & $\checkmark$                &                               & $\checkmark$                  & 44.20          & 71.70          & 81.60          & 2.0 & 15.64 & 0.83     & 44.86          & 72.04          & \textbf{82.11} & 2.0 & 11.61          & 2.54     \\
\rowcolor{green!10} + DualIS       & $\checkmark$                & $\checkmark$                  & $\checkmark$                  & \textbf{45.00} & \textbf{72.50} & \textbf{82.10} & 2.0 & \textbf{15.32} & \textbf{0.37}     & \textbf{45.45} & \textbf{73.02} & 81.42          & 2.0 & \textbf{11.56} & 2.48     \\
\rowcolor{green!10} + DualDIS      & $\checkmark$                & $\checkmark$                  & $\checkmark$                  & \textbf{45.00} & \textbf{72.50} & \textbf{82.10} & 2.0 & \textbf{15.32} & \textbf{0.37}     & \textbf{45.45} & \textbf{73.02} & 81.42          & 2.0 & {11.59} & 2.51    
\\
\midrule
\multicolumn{16}{c}{ActivityNet}\\
\midrule
CLIP4Clip            &                                &              &              & 41.85             & 74.44             & 84.84             & 2.0 & 6.84             & 0.94 & 41.62             & 74.11             & 86.12             & 2.0 & 6.81             & 1.06 \\
\textcolor{gray}{+ GC (upper bound)}   &                                & \textcolor{gray}{$\checkmark$} &              & \textcolor{gray}{48.03}             & \textcolor{gray}{77.57}             & \textcolor{gray}{87.42}             & \textcolor{gray}{2.0} & \textcolor{gray}{6.59}             & \textcolor{gray}{0.13} & \textcolor{gray}{47.09}             & \textcolor{gray}{77.48}             & \textcolor{gray}{86.71}             & \textcolor{gray}{2.0} & \textcolor{gray}{6.52}             & \textcolor{gray}{0.20} \\
+ GC                 &                                & $\checkmark$ & $\checkmark$ & 33.89             & 63.22             & 76.74             & 3.0 & 9.91             & 2.19 &        32.28           &    61.18               &       75.39            &   3.0  &      10.11            &   56.88   \\
\textcolor{gray}{+ CSLS (upper bound, $k=4222$)} & \textcolor{gray}{$\checkmark$} & \textcolor{gray}{$\checkmark$} &                           & \textcolor{gray}{40.29}                   & \textcolor{gray}{73.00}                   & \textcolor{gray}{84.46}                   & \textcolor{gray}{2.0}                   & \textcolor{gray}{7.32}                    & \textcolor{gray}{0.79}                   & \textcolor{gray}{40.90}                   & \textcolor{gray}{73.38}                   & \textcolor{gray}{85.53}                   & \textcolor{gray}{2.0}                   & \textcolor{gray}{7.04}                    & \textcolor{gray}{0.92}                   \\
+ CSLS $k=10$                                    & $\checkmark$                   & $\checkmark$                   & $\checkmark$              & 37.54                   & 70.46                   & 82.35                   & 2.0                   & 7.89                    & 1.18                   & 38.42                   & 71.06                   & 83.37                   & 2.0                   & 7.70                     & 1.30                    \\
+ CSLS $k=100$                                   & $\checkmark$                   & $\checkmark$                   & $\checkmark$              & 40.34                   & 72.50                   & 84.23                   & 2.0                   & 7.36                    & 0.90                    & 40.31                   & 73.05                   & 84.98                   & 2.0                   & 7.11                    & 0.99                   \\
+ CSLS $k=1000$                                  & $\checkmark$                   & $\checkmark$                   & $\checkmark$              & 40.15                   & 72.93                   & 84.41                   & 2.0                   & 7.34                    & 0.81                   & 40.74                   & 73.38                   & 85.53                   & 2.0                   & 7.04                    & 0.92                   \\
+ CSLS $k=4222$                                  & $\checkmark$                   & $\checkmark$                   & $\checkmark$              & 40.29                   & 72.93                   & 84.49                   & 2.0                   & 7.32                    & 0.79                   & 40.83                   & 73.33                   & 85.41                   & 2.0                   & 7.05                    & 0.94                   \\
 IS                 & $\checkmark$                   &              & $\checkmark$ & 45.93             & 77.52             & 87.07             & 2.0 & 6.39             & 0.67 & 46.23             & 76.72             & 87.26             & 2.0 & {6.46} & 0.63 \\
+ DIS                & $\checkmark$                   &              & $\checkmark$ & 46.02             & 77.29             & 86.83             & 2.0 & {6.29} & 0.68 & 46.26             & 76.48             & 87.16             & 2.0 & 6.48             & 0.65 \\
\rowcolor{green!10} + DualIS             & $\checkmark$                   & $\checkmark$ & $\checkmark$ & {46.71} & {77.47} & \textbf{87.35}    & 2.0 & \textbf{6.01}    & 0.54 & {46.59} & \textbf{78.04}    & \textbf{88.15}    & 2.0 & \textbf{6.05}    & \textbf{0.30} \\
\rowcolor{green!10} + DualIS             & $\checkmark$                   & $\checkmark$ & $\checkmark$ & \textbf{46.76}    & \textbf{77.48}    & {87.28} & 2.0 & \textbf{6.01}    & \textbf{0.50} & \textbf{46.73}    & {77.90} & {88.06} & 2.0 & \textbf{6.05}    & 0.34\\
\bottomrule
\end{tabular}%
}
\caption{Retrieval results on MSR-VTT and ActivityNet with GC, CSLS, IS, DIS, \ours(DualIS), and \ours(DualDIS). 
``QB'', ``GB'', and ``w/o test'' refers to the query bank, gallery bank, and without the access to all test gallery data (only observing one gallery data at a time).}
\label{tab: ablation with GC and CSLS}
\end{table*}

\textbf{RQ5: Is hubness score related to performance?}
Our current findings reveal an inverse relationship between the hubness score and performance. 
Specifically, we observed that as the hubness score decreases, the performance tends to improve. 
This aligns with the expectation that lower hubness scores indicate a reduced concentration of nearest neighbors, leading to improved retrieval performance. 
However, it is important to note that in certain cases (X-CLIP on MSVD), the relationship between the hubness score and retrieval performance may differ from this trend. 
This can be attributed to potential biases present in the dataset and the models, which can influence the observed patterns. 
Therefore, we should carefully interpret the hubness score as the sole indicator of retrieval performance, particularly when dataset biases are significant. 
Further investigations are needed to gain a comprehensive understanding of the interplay between the hubness score and retrieval performance under different conditions and datasets.

\label{sec: agg}
\textbf{RQ6: How about aggreagating normalization results by adding instead of multiplying?}
In DualIS (\Cref{eq: dualis}) and DualDIS (\Cref{eq: dualdis}), multiplication is employed to aggregate normalization results from the query and gallery banks. 
To explore alternative aggregation methods, we conducted experiments by replacing multiplication with addition. 
The retrieval results are presented in \Cref{tab: abla +}. 
It is observed that, for MSR-VTT, both multiplication and addition yield comparable retrieval performance, whereas for ActivityNet, multiplication outperforms addition, providing empirical evidence for the superiority of multiplication.

\subsection{Comparison with GC and CSLS}\label{sec: GC CSLS}

Globally-corrected (\textbf{GC})~\cite{dinu_improving_2015} and Cross-Domain Similarity Local Scaling (\textbf{CSLS})~\cite{lample2018word} are two representative post-processing methods specifically designed to address hubness in Natural Language Processing. 
Note that, GC and CSLS requires the access to all queries. 

However, in real-world settings, queries do not always occur simultaneously, and it is common for methods not to have access to all queries at once. 
This aligns with the practical conditions where queries are generated in a sequential or asynchronous manner. 
To test the compatibility of GC and CSLS with such scenarios, we adapt GC and CSLS accordingly and conduct experiments on the MSR-VTT and ActivityNet datasets.

GC returns the gallery element $\mathbf{g}$ that has the highest rank for query $\mathbf{q}$. 
The similarity is normalized as follows,
\begin{equation}
    \hat{s}_{\mathbf{q}, \mathbf{g}_i} = s_{\mathbf{q}, \mathbf{g}_i} - Rank(\mathbf{g}_i, Q , \mathbf{q})\,,
\end{equation}
where $Rank(\mathbf{g}_i, Q, \mathbf{q})$ returns the rank of $\mathbf{q}$ in $Q$ when the query is $\mathbf{g}_i$ and $Q$ contains all the queries. 
To investigate the applicability of GC without accessing other test queries, we modify GC as follows,
\begin{equation}
    \hat{s}_{\mathbf{q}, \mathbf{g}_i} = s_{\mathbf{q}, \mathbf{g}_i} - Rank(\mathbf{g}_i, \hat{Q} \cup \{\mathbf{q}\}, \mathbf{q})\,,
\end{equation}
where $Rank(\mathbf{g}_i, \hat{Q}\cup \mathbf{q}, \mathbf{q})$ returns the rank of $\mathbf{q}$ in $\hat{Q}\cup \mathbf{q}$ when the query is $\mathbf{g}_i$. 

CSLS utilizes a query bank that consists of all the test queries and a gallery bank. 
To adapt CSLS for use in our method, we modify it to incorporate (train or validation) queries and normalize the similarity $s_{\mathbf{q}, \mathbf{g}_i}$ between a query $\mathbf{q}$ and a gallery point $\mathbf{g}_i$ as follows,
\begin{equation}
\label{eq: csls}
    \hat{s}_{\mathbf{q}, \mathbf{g}_i} = 2 s_{\mathbf{q}, \mathbf{g}_i} - \frac{1}{K} \sum_{i\in[K]} s_{\mathbf{q}, \bar{\mathbf{g}}_i} - \frac{1}{K} \sum_{i\in[K]} s_{\mathbf{q}, \bar{\mathbf{q}}_i}\,,
\end{equation}
where $\{\bar{\mathbf{g}}_i\}_{i\in[k]}$ and $\{\bar{\mathbf{q}}_i\}_{i\in[k]}$ are the $k$ gallery and query samples in banks that are most similar to query $\mathbf{q}$. 

The quantitative results of GC and CSLS on MSR-VTT and ActivityNet are shown in \Cref{tab: ablation with GC and CSLS}. 
We denote GC and CSLS with access to test queries as GC (upperbound) and CSLS (upperbound) respectively. 
It is noteworthy that the retrieval performance significantly drops without access to all test queries, and the performance of CSLS is proportional to the hyperparamter $k$. 

Considering the unsatisfactory retrieval performance of GC and CSLS when lacking access to all test queries, we exclude these two methods from our approach.

\section{Proofs}\label{Sec: proof}

\subsection{Proof of \Cref{the: 1}}

Consider two data points $\mathbf{x}_1$ and $\mathbf{x}_2$ sampled from $\mathcal{N}(\boldsymbol{\mu})$, which satisfy
\begin{equation}
    \|\mathbf{x}_2 - \boldsymbol{\mu}\| - \|\mathbf{x}_1 - \boldsymbol{\mu}\| > 0\,.
\end{equation}

The expected difference between the squared Euclidean distances from $\mathbf{x}_1$ and $\mathbf{x}_2$ to a point $\mathbf{x}$ sampled from the same distribution, is defined as 
\begin{equation}
\begin{aligned}
    \Delta =& \mathbb{E}_{\mathbf{x}}\left[ \|\mathbf{x}_2 - \mathbf{x}\|^2 - \|\mathbf{x}_1 - \mathbf{x}\|^2 \right]\\
    =&\mathbb{E}_{\mathbf{x}}\left[ \|\mathbf{x}_2 - \mathbf{x}\|^2 \right] - \mathbb{E}_{\mathbf{x}}\left[ \|\mathbf{x}_1 - \mathbf{x}\|^2 \right]\,.
\end{aligned}
\end{equation}

We notice that, 
\begin{align*}
    & \mathbb{E}_{\mathbf{x}}\left[ \|\mathbf{x}_2 - \mathbf{x}\|^2 \right] \\
    =& \mathbb{E}_{\mathbf{x}}\left[ \|\mathbf{x}_2 - \boldsymbol{\mu} - (\mathbf{x} - \boldsymbol{\mu}) \|^2 \right]\\
    = & \mathbb{E}_{\mathbf{x}}\left[ \|\mathbf{x}_2 - \boldsymbol{\mu} \|^2 \right] + \mathbb{E}\left[ \|\mathbf{x} - \boldsymbol{\mu}\|^2 \right] \\
    & - 2 \mathbb{E}\left[(\mathbf{x}_2 - \boldsymbol{\mu})^{\top}(\mathbf{x} - \boldsymbol{\mu})\right] \\
    = & \mathbb{E}\left[ \|\mathbf{x}_2 - \boldsymbol{\mu} \|^2 \right] + \mathbb{E}\left[ \|\mathbf{x} - \boldsymbol{\mu}\|^2 \right]\,.
\end{align*}
The last equality is because the mean of $\mathbf{x}_2$ and $\mathbf{x}$ is $\boldsymbol{\mu}$. 

Similarly, we have 
\begin{equation}
    \begin{aligned}
        &\mathbb{E}\left[ \|\mathbf{x}_1 - \mathbf{x}\|^2 \right] \\
        =&  \mathbb{E}\left[ \|\mathbf{x}_1 - \boldsymbol{\mu} \|^2 \right] + \mathbb{E}\left[ \|\boldsymbol{\mu} - \mathbf{x}\|^2 \right]\,.
    \end{aligned}
\end{equation}

Next, we have,
\begin{equation}
    \begin{aligned}
        \Delta = & (\mathbb{E}\left[ \|\mathbf{x}_2 - \boldsymbol{\mu} \|^2 \right] + \mathbb{E}\left[ \|\boldsymbol{\mu} - \mathbf{x}\|^2 \right]) \\
        & - (\mathbb{E}\left[ \|\mathbf{x}_1 - \boldsymbol{\mu} \|^2 \right] + \mathbb{E}\left[ \|\boldsymbol{\mu} - \mathbf{x}\|^2 \right]) \\
        = & \|\mathbf{x}_2 - \boldsymbol{\mu} \|^2 - \|\mathbf{x}_1 - \boldsymbol{\mu} \|^2> 0\,.
    \end{aligned}   
\end{equation}

Now, we have completed the proof.

\subsection{Proof of \Cref{thm: 2}}

The difference $\Delta$ is defined as,
\begin{equation}
\label{eq: delta in cross-modal}
\begin{aligned}
    \Delta = &\mathbb{E}_{\mathbf{y}, \mathbf{x}_1 \sim \mathcal{X}, \mathbf{x}_2 \sim \mathcal{X}, \|\mathbf{x}_2 - \boldsymbol{\mu}_x\|^2 - \|\mathbf{x}_1 - \boldsymbol{\mu}_x\|^2 > 0}\left[\|\mathbf{x}_2 -\right.\\
    &\left.\mathbf{y}\|^2 - \|\mathbf{x}_1 - \mathbf{y}\|^2 \right]\,.
\end{aligned}
\end{equation}

We have, 
\begin{align*}
    &\mathbb{E}\left[\|\mathbf{x}_2 - \mathbf{y}\|^2\right]\\
    =& \mathbb{E}\left[\|\mathbf{x}_2 - \boldsymbol{\mu}_x -( \mathbf{y} - \boldsymbol{\mu}_x)\|^2\right]\\
    =& \mathbb{E} \|\mathbf{x}_2 - \boldsymbol{\mu}_x\|^2 + \mathbb{E}\|\mathbf{y} - \boldsymbol{\mu}_x\|^2 \\
    &- 2 \mathbb{E}\left[ (\mathbf{x}_2 - \boldsymbol{\mu}_x)^\top (\mathbf{y} - \boldsymbol{\mu}_x) \right] \\
    =& \mathbb{E} \|\mathbf{x}_2 - \boldsymbol{\mu}_x\|^2 + \mathbb{E} \|\mathbf{y} - \boldsymbol{\mu}_x\|^2
\end{align*}

Similarly, we have,
\begin{align*}
    &\mathbb{E}\left[\|\mathbf{x}_1 - \mathbf{y}\|^2\right]= \mathbb{E}    \|\mathbf{x}_1 - \boldsymbol{\mu}_x\|^2 + \mathbb{E}\|\mathbf{y} - \boldsymbol{\mu}_x\|^2\,.
\end{align*}

Inserting the above two equality into Eq.~(\ref{eq: delta in cross-modal}), we have
\begin{align*}
    \Delta = & \mathbb{E}\left[\|\mathbf{x}_2 - \mathbf{y}\|^2 - \|\mathbf{x}_1 - \mathbf{y}\|^2 \right]\\
    = & \|\mathbf{x}_2 - \boldsymbol{\mu}_x\|^2 - \|\mathbf{x}_1 - \boldsymbol{\mu}_x\|^2 > 0\,.
\end{align*}

\subsection{Proof of Corollary~\ref{coro: l2}}

We only need to prove that if $ \mathbb{E}\left[\|\mathbf{x}_2 - \mathbf{y}\|^2 - \|\mathbf{x}_1 - \mathbf{y}\|^2 \right] > 0$, we have $\mathbb{E}\left[ \|\mathbf{x}_2 - \mathbf{x}\|^2 - \|\mathbf{x}_1 - \mathbf{x}\|^2 \right] > 0$, where $\mathbf{y} \sim \mathcal{Y}, \mathbf{x} \sim \mathcal{X}$. 

As $ \mathbb{E}\left[\|\mathbf{x}_2 - \mathbf{y}\|^2 - \|\mathbf{x}_1 - \mathbf{y}\|^2 \right] > 0$, with the proof of \Cref{thm: 2}, we have 
\begin{align*}
    \|\mathbf{x}_2 - \boldsymbol{\mu}_x\|^2 - \|\mathbf{x}_1 - \boldsymbol{\mu}_x\|^2 > 0\,.
\end{align*}

In other words, as $\|\mathbf{x}_2 - \boldsymbol{\mu}_x\| + \|\mathbf{x}_1 - \boldsymbol{\mu}_x\| > 0$, we have 
\begin{align*}
    \|\mathbf{x}_2 - \boldsymbol{\mu}_x\| - \|\mathbf{x}_1 - \boldsymbol{\mu}_x\| > 0\,.
\end{align*}

With \Cref{the: 1}, we have $\mathbb{E}\left[ \|\mathbf{x}_2 - \mathbf{x}\|^2 - \|\mathbf{x}_1 - \mathbf{x}\|^2 \right] > 0$.

Now, we complete the proof.

\subsection{Proof of \Cref{thm: 3}}

Similarly, we first present the following theorems.

\begin{theorem}\label{thm:4}
    Assuming that $\mathbf{x}_1$ and $\mathbf{x}_2$ are sampled from a symmetric distribution $\mathcal{X}$ on a hypersphere with mean $\boldsymbol{\mu}$, if $\mathbf{x}_1$ is closer to $\boldsymbol{\mu}$ than $\mathbf{x}_2$ (\ie, $\cos(\mathbf{x}_1, \boldsymbol{\mu})-\cos(\mathbf{x}_2, \boldsymbol{\mu})$), then $\mathbf{x}_1$ is more likely to be a hub than $\mathbf{x}_2$ on the space $\mathcal{X}$.
\end{theorem}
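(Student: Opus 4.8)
The plan is to mirror the argument used for \Cref{the: 1}, replacing the squared $\ell_2$ distance by cosine similarity and exploiting the fact that on a hypersphere every sample has a fixed norm. First I would fix the meaning of ``more likely to be a hub'': exactly as in \Cref{the: 1}, I take it to mean that $\mathbf{x}_1$ has strictly larger expected cosine similarity to a random point than $\mathbf{x}_2$, i.e.\ $\mathbb{E}_{\mathbf{x}}[\cos(\mathbf{x}_1, \mathbf{x})] > \mathbb{E}_{\mathbf{x}}[\cos(\mathbf{x}_2, \mathbf{x})]$ for $\mathbf{x} \sim \mathcal{X}$. The hypothesis is read as $\cos(\mathbf{x}_1, \boldsymbol{\mu}) > \cos(\mathbf{x}_2, \boldsymbol{\mu})$ (the displayed ``\ie'' in the statement being an abbreviated version of this inequality).

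The key simplification is that, after normalizing the hypersphere to unit radius without loss of generality, $\|\mathbf{x}_1\| = \|\mathbf{x}_2\| = \|\mathbf{x}\| = 1$, so cosine similarity reduces to the plain inner product, $\cos(\mathbf{x}_i, \mathbf{x}) = \mathbf{x}_i^{\top} \mathbf{x}$. Then linearity of expectation collapses everything to a single inner product with the mean: since $\mathbb{E}[\mathbf{x}] = \boldsymbol{\mu}$,
\[
\mathbb{E}_{\mathbf{x}}\!\left[\cos(\mathbf{x}_i, \mathbf{x})\right] = \mathbb{E}_{\mathbf{x}}\!\left[\mathbf{x}_i^{\top} \mathbf{x}\right] = \mathbf{x}_i^{\top} \boldsymbol{\mu}.
\]
Because $\|\mathbf{x}_i\| = 1$, we have $\mathbf{x}_i^{\top} \boldsymbol{\mu} = \|\boldsymbol{\mu}\|\,\cos(\mathbf{x}_i, \boldsymbol{\mu})$, so the gap in expected similarities factors as
\[
\mathbb{E}_{\mathbf{x}}\!\left[\cos(\mathbf{x}_1, \mathbf{x})\right] - \mathbb{E}_{\mathbf{x}}\!\left[\cos(\mathbf{x}_2, \mathbf{x})\right] = \|\boldsymbol{\mu}\|\left(\cos(\mathbf{x}_1, \boldsymbol{\mu}) - \cos(\mathbf{x}_2, \boldsymbol{\mu})\right).
\]
The hypothesis then immediately forces this quantity to be positive, which is the claimed conclusion.

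The main obstacle is the degenerate case $\boldsymbol{\mu} = \mathbf{0}$: if the spherical distribution has zero mean (as for the uniform distribution on the sphere), then $\|\boldsymbol{\mu}\| = 0$ and $\cos(\mathbf{x}_i, \boldsymbol{\mu})$ is undefined, so both hypothesis and conclusion become vacuous. I would therefore make explicit that the ``symmetric distribution on a hypersphere with mean $\boldsymbol{\mu}$'' is taken to be concentrated about a genuine mean direction, i.e.\ $\|\boldsymbol{\mu}\| > 0$ --- precisely the regime in which hubness arises. A secondary point to record is that the symmetry assumption is what ties the expectation $\mathbb{E}[\mathbf{x}]$ to the centre $\boldsymbol{\mu}$ appearing in the hypothesis, so that the cosine-to-the-mean in the assumption and the $\boldsymbol{\mu}$ emerging after taking expectations refer to the same vector; I would state this compatibility rather than leave it implicit. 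With these two caveats handled, the remainder is the short factorization above, and this theorem then serves as the spherical counterpart of \Cref{the: 1} feeding into the proof of \Cref{thm: 3}.
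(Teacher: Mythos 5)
Your proof is correct and follows essentially the same route as the paper's: reduce cosine similarity to an inner product on the sphere, use linearity of expectation with $\mathbb{E}[\mathbf{x}]=\boldsymbol{\mu}$, and read off the sign of the gap from the hypothesis. You are in fact slightly more careful than the paper, which silently equates $\frac{1}{r^2}(\mathbf{x}_1-\mathbf{x}_2)^{\top}\boldsymbol{\mu}$ with $\cos(\mathbf{x}_1,\boldsymbol{\mu})-\cos(\mathbf{x}_2,\boldsymbol{\mu})$, dropping the positive factor $\|\boldsymbol{\mu}\|/r$ that you make explicit, and never addresses the degenerate case $\boldsymbol{\mu}=\mathbf{0}$ that you correctly flag as rendering the statement vacuous.
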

\begin{proof}
Consider two data points $\mathbf{x}_1$ and $\mathbf{x}_2$ sampled from a symmetric distribution on the surface of a hypersphere satisfying $\|\mathbf{x}_1\|_2 = \|\mathbf{x}_2\|_2 = r$, and the following holds true
\begin{equation}
    \cos(\mathbf{x}_1, \boldsymbol{\mu}) - \cos(\mathbf{x}_2, \boldsymbol{\mu}) > 0\,.
\end{equation}
where $\boldsymbol{\mu}$ is the center of the distribution. 
That means under the cosine metric, $\mathbf{x}_1$ is closer to the mean point than $\mathbf{x}_2$.

The expected difference between the cosine distances from $\mathbf{x}_1$ and $\mathbf{x}_2$ to a random point $\mathbf{x}$ sampled from the same distribution, is defined as 
\begin{equation}
\begin{aligned}
    \Delta =& \mathbb{E}\left[\cos( \mathbf{x}_1, \mathbf{x}) - \cos(\mathbf{x}_2 - \mathbf{x}) \right]\\
    =&\frac{1}{r^2} \mathbb{E}\left[ \mathbf{x}_1 \mathbf{x}^\top - \mathbf{x}_2 \mathbf{x}^\top \right]\\
    =&\frac{1}{r^2}\mathbb{E}[\mathbf{x}_1 - \mathbf{x}_2]\boldsymbol{\mu}^{\top}\\
    =& \cos(\mathbf{x}_1, \boldsymbol{\mu}) - \cos(\mathbf{x}_2, \boldsymbol{\mu}) > 0\,.
\end{aligned}
\end{equation}

Now, we have completed the proof.
\end{proof}

\begin{theorem}\label{thm:5}
    Assuming that $\mathbf{x}_1$ and $\mathbf{x}_2$ are sampled from a symmetric distribution with mean $\boldsymbol{\mu}_x$, if $\mathbf{x}_1$ is closer to $\boldsymbol{\mu}$ than $\mathbf{x}_2$ (\ie, $\cos(\mathbf{x}_1, \boldsymbol{\mu})-\cos(\mathbf{x}_2, \boldsymbol{\mu})$), then $\mathbf{x}_1$ is more likely to be a hub than $\mathbf{x}_2$ on another space $\mathcal{Y}$.
\end{theorem}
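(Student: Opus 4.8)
The plan is to mirror the proof of \Cref{thm:4}, replacing the random comparison point drawn from $\mathcal{X}$ with a point $\mathbf{y} \sim \mathcal{Y}$, exactly as \Cref{thm: 2} extends \Cref{the: 1} to the cross-space setting. I assume, as in \Cref{thm:4}, that $\mathcal{X}$ is supported on a hypersphere of radius $r$ (so $\|\mathbf{x}_1\| = \|\mathbf{x}_2\| = r$) and that $\mathcal{Y}$ is supported on a hypersphere of radius $r_y$ with mean $\boldsymbol{\mu}_y$; this keeps the cosine normalizers constant and turns the cosine into a rescaled inner product. I fix $\mathbf{x}_1, \mathbf{x}_2$ obeying the hypothesis and take the expectation only over $\mathbf{y}$.

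First I would form the quantity that decides hubness,
\begin{equation*}
\Delta = \mathbb{E}_{\mathbf{y}}\left[\cos(\mathbf{x}_1, \mathbf{y}) - \cos(\mathbf{x}_2, \mathbf{y})\right] = \frac{1}{r r_y}\mathbb{E}_{\mathbf{y}}\left[(\mathbf{x}_1 - \mathbf{x}_2)^\top \mathbf{y}\right].
\end{equation*}
Because $\mathbf{x}_1$ and $\mathbf{x}_2$ are fixed and $\mathbb{E}[\mathbf{y}] = \boldsymbol{\mu}_y$, linearity of expectation pulls the expectation through to give $\Delta = \frac{1}{r r_y}(\mathbf{x}_1 - \mathbf{x}_2)^\top \boldsymbol{\mu}_y$. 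In parallel, the hypothesis $\cos(\mathbf{x}_1, \boldsymbol{\mu}) - \cos(\mathbf{x}_2, \boldsymbol{\mu}) > 0$ rewrites, after clearing the positive factor $1/(r\|\boldsymbol{\mu}_x\|)$, as $(\mathbf{x}_1 - \mathbf{x}_2)^\top \boldsymbol{\mu}_x > 0$, which is precisely the inequality already established in \Cref{thm:4}.

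The step I expect to be the main obstacle is bridging the two centroids $\boldsymbol{\mu}_x$ and $\boldsymbol{\mu}_y$. Unlike the $\ell_2$ case of \Cref{thm: 2}, where centering the squared distance at $\boldsymbol{\mu}_x$ makes the cross term with $\mathbf{y}$ vanish and removes $\boldsymbol{\mu}_y$ from the final expression entirely, the cosine admits no such cancellation: $\Delta$ depends genuinely on the \emph{direction} of $\boldsymbol{\mu}_y$, and for an arbitrary $\boldsymbol{\mu}_y$ the sign of $(\mathbf{x}_1 - \mathbf{x}_2)^\top \boldsymbol{\mu}_y$ is uncontrolled. I would close the gap by appealing to the cross-modal alignment that is implicit throughout \Cref{Sec: theory}: the encoders map both modalities into one shared space whose global centroids are (approximately) colinear, i.e.\ $\boldsymbol{\mu}_y = c\,\boldsymbol{\mu}_x$ for some $c > 0$. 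Under this assumption $\Delta = \frac{c}{r r_y}(\mathbf{x}_1 - \mathbf{x}_2)^\top \boldsymbol{\mu}_x > 0$, so $\mathbf{x}_1$ has strictly higher expected cosine similarity to points of $\mathcal{Y}$ than $\mathbf{x}_2$ and is therefore more likely to be a hub in $\mathcal{Y}$. I would state this colinearity hypothesis explicitly, since it is the honest analogue of the symmetry and shared-space assumptions already used for the $\ell_2$ results, and it is exactly what the cosine geometry requires to transport closeness-to-mean in $\mathcal{X}$ into hubness on $\mathcal{Y}$.
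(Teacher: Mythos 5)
Your proposal is correct under its stated hypothesis, and it follows the same skeleton as the paper's proof of \Cref{thm:5}: fix $\mathbf{x}_1,\mathbf{x}_2$, use linearity of expectation to reduce $\Delta=\mathbb{E}_{\mathbf{y}}[\cos(\mathbf{x}_1,\mathbf{y})-\cos(\mathbf{x}_2,\mathbf{y})]$ to a positive multiple of $(\mathbf{x}_1-\mathbf{x}_2)^{\top}\boldsymbol{\mu}_y$, and compare with the hypothesis $(\mathbf{x}_1-\mathbf{x}_2)^{\top}\boldsymbol{\mu}_x>0$. Where the two differ is precisely the bridging of the two centroids, and there your version is the more careful one. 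The paper's proof writes
\begin{align*}
\Delta &= \frac{1}{r^2}(\mathbf{x}_1-\mathbf{x}_2)(\boldsymbol{\mu}_y-\boldsymbol{\mu}_x)^{\top} + \frac{1}{r^2}(\mathbf{x}_1-\mathbf{x}_2)\boldsymbol{\mu}_x^{\top} = \cos(\mathbf{x}_1,\boldsymbol{\mu}_x)-\cos(\mathbf{x}_2,\boldsymbol{\mu}_x)\,,
\end{align*}
i.e., it splits $\boldsymbol{\mu}_y=(\boldsymbol{\mu}_y-\boldsymbol{\mu}_x)+\boldsymbol{\mu}_x$ and then silently drops the term $(\mathbf{x}_1-\mathbf{x}_2)(\boldsymbol{\mu}_y-\boldsymbol{\mu}_x)^{\top}$. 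That is exactly the obstacle you flagged: the dropped term vanishes only if $(\mathbf{x}_1-\mathbf{x}_2)^{\top}(\boldsymbol{\mu}_y-\boldsymbol{\mu}_x)=0$, and since $\mathbf{x}_1,\mathbf{x}_2$ are arbitrary (subject only to the cosine ordering), this effectively requires $\boldsymbol{\mu}_y=\boldsymbol{\mu}_x$; for a general $\boldsymbol{\mu}_y$ (e.g., one with a large component along $\mathbf{x}_2-\mathbf{x}_1$) the sign of $\Delta$ can flip, so the statement as written needs some cross-modal alignment assumption. Your explicit colinearity hypothesis $\boldsymbol{\mu}_y=c\,\boldsymbol{\mu}_x$ with $c>0$ is a strictly weaker sufficient condition than $\boldsymbol{\mu}_y=\boldsymbol{\mu}_x$ and makes the sign argument go through cleanly; stating it is the honest repair, and it is consistent with how \Cref{thm: 2} avoids the issue in the $\ell_2$ case (there the cross term genuinely cancels, as you note). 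Two minor further points in your favor: allowing distinct radii $r$ and $r_y$ is a harmless generalization, and the paper's final equality also conflates $\frac{1}{r^2}(\mathbf{x}_1-\mathbf{x}_2)^{\top}\boldsymbol{\mu}_x$ with the cosine difference (they differ by the positive factor $r/\|\boldsymbol{\mu}_x\|$), which is immaterial for the sign but confirms that your bookkeeping is tighter.
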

\begin{proof}
Consider two data points $\mathbf{x}_1$ and $\mathbf{x}_2$ sampled from a symmetric distribution $\mathcal{X}$ and a random point $\mathbf{y}$ sampled from a symmetric distribution $\mathcal{Y}$ on the surface of a hypersphere satisfying $\|\mathbf{x}_1\|_2 = \|\mathbf{x}_2\|_2 = \|\mathbf{y}\|_2 = r$, and the following holds true
\begin{equation}
    \cos(\mathbf{x}_1, \boldsymbol{\mu}_x) - \cos(\mathbf{x}_2, \boldsymbol{\mu}_x) > 0\,.
\end{equation}
where $\boldsymbol{\mu}_x$ is the center of the distribution $\mathcal{X}$. 
That means under the cosine metric, $\mathbf{x}_1$ is closer to the mean point than $\mathbf{x}_2$.

The expected difference between the cosine distances from $\mathbf{x}_1$ and $\mathbf{x}_2$ to the random point $\mathbf{y}$, is defined as 
\begin{equation}
\begin{aligned}
    \Delta =& \mathbb{E}\left[\cos( \mathbf{x}_1, \mathbf{y}) - \cos(\mathbf{x}_2 - \mathbf{y}) \right]\\
    =&\frac{1}{r^2} \mathbb{E}\left[ \mathbf{x}_1 \mathbf{y}^\top - \mathbf{x}_2 \mathbf{y}^\top \right]\\
    =&\frac{1}{r^2}\mathbb{E}[\mathbf{x}_1 - \mathbf{x}_2](\boldsymbol{\mu}_y - \boldsymbol{\mu}_x)^{\top} \\
    &+ \frac{1}{r^2}\mathbb{E}[\mathbf{x}_1 - \mathbf{x}_2](\boldsymbol{\mu}_x)^{\top}\\
    =& \cos(\mathbf{x}_1, \boldsymbol{\mu}_x) - \cos(\mathbf{x}_2, \boldsymbol{\mu}_x) > 0\,.
\end{aligned}
\end{equation}

Now, we have completed the proof.
\end{proof}

Combining the above two theorems and with the proof of Corollary~\ref{coro: l2}, we have completed the proof.

\comment{
\clearpage
\newpage
\section*{all tables}

\begin{figure*}[h!]
    \begin{center}

    \includegraphics[width=0.24\textwidth]{pics/ablation_study/Size_of_Banks_N_Text-Video_R1_MSR-VTT_CLIP4Clip.eps}
    \includegraphics[width=0.24\textwidth]{pics/ablation_study/Size_of_Banks_N_Video-Text_R1_MSR-VTT_CLIP4Clip.eps}
    \includegraphics[width=0.24\textwidth]{pics/ablation_study/Size_of_Banks_N_Video-Text_R1_ActivityNet_CLIP4Clip.eps}
    \includegraphics[width=0.24\textwidth]{pics/ablation_study/Size_of_Banks_N_Text-Video_R1_ActivityNet_CLIP4Clip.eps}

    \includegraphics[width=0.24\textwidth]{pics/ablation_study/Size_of_Gallery_Bank__Text-Video_R1_MSR-VTT_CLIP4Clip.eps}
    \includegraphics[width=0.24\textwidth]{pics/ablation_study/Size_of_Gallery_Bank__Video-Text_R1_MSR-VTT_CLIP4Clip.eps}
    \includegraphics[width=0.24\textwidth]{pics/ablation_study/Size_of_Gallery_Bank__Video-Text_R1_ActivityNet_CLIP4Clip.eps}
    \includegraphics[width=0.24\textwidth]{pics/ablation_study/Size_of_Gallery_Bank__Text-Video_R1_ActivityNet_CLIP4Clip.eps}

    \includegraphics[width=0.24\textwidth]{pics/ablation_study/Size_of_Query_Bank__Text-Video_R1_MSR-VTT_CLIP4Clip.eps}
    \includegraphics[width=0.24\textwidth]{pics/ablation_study/Size_of_Query_Bank__Video-Text_R1_MSR-VTT_CLIP4Clip.eps}
    \includegraphics[width=0.24\textwidth]{pics/ablation_study/Size_of_Query_Bank__Video-Text_R1_ActivityNet_CLIP4Clip.eps}
    \includegraphics[width=0.24\textwidth]{pics/ablation_study/Size_of_Query_Bank__Text-Video_R1_ActivityNet_CLIP4Clip.eps}
    
      \caption{Text-Video Retrieval R@1 w.r.t the size of gallery and query banks on MSR-VTT and ActivityNet using DualIs and DualDIS with CLIP4Clip.}
      \label{fig: ablation: size of banks appendix (full)}
   \end{center}
\end{figure*}

\begin{figure*}[h!]
    \begin{center}
    \subfloat[T2V R@1 w.r.t $\beta_1$.]{
        \centering
        \includegraphics[width=0.24\textwidth]{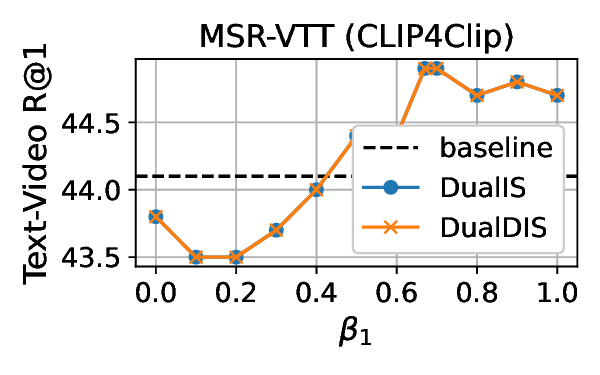}
    }
    \subfloat[V2T R@1 w.r.t $\beta_1$.]{
        \centering
        \includegraphics[width=0.24\textwidth]{pics/ablation_study/beta_1_Video-Text_R1_MSR-VTT_CLIP4Clip.eps}
    }
    \subfloat[T2V R@1 w.r.t $\beta_1$.]{
        \centering
        \includegraphics[width=0.24\textwidth]{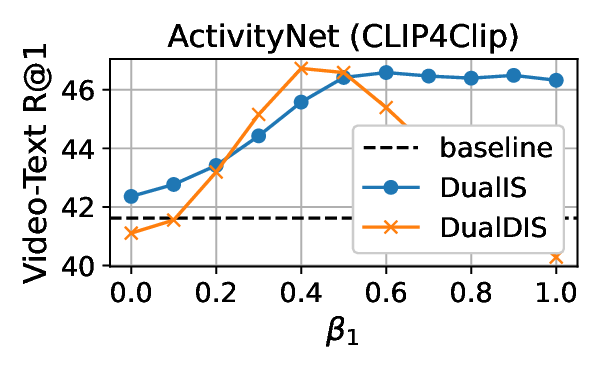}
    }
    \subfloat[V2T R@1 w.r.t $\beta_1$.]{
        \centering
        \includegraphics[width=0.24\textwidth]{pics/ablation_study/beta_1_Text-Video_R1_ActivityNet_CLIP4Clip.eps}
    }

    \subfloat[T2V R@1 w.r.t $\beta_2$.]{
        \centering
        \includegraphics[width=0.24\textwidth]{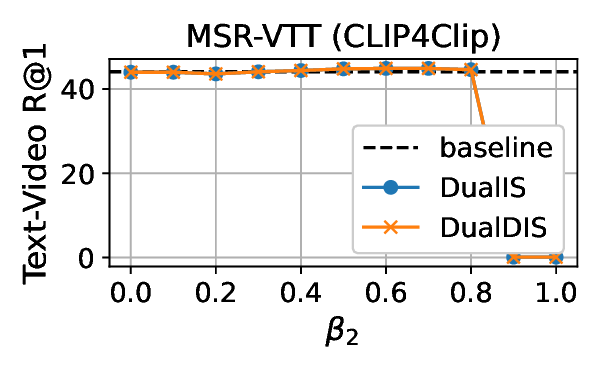}
    }
    \subfloat[V2T R@1 w.r.t $\beta_2$.]{
        \centering
        \includegraphics[width=0.24\textwidth]{pics/ablation_study/beta_2_Video-Text_R1_MSR-VTT_CLIP4Clip.eps}
    }
    \subfloat[T2V R@1 w.r.t $\beta_2$.]{
        \centering
        \includegraphics[width=0.24\textwidth]{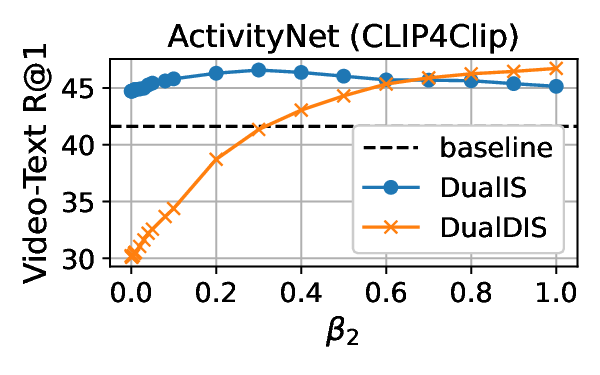}
    }
    \subfloat[V2T R@1 w.r.t $\beta_2$.]{
        \centering
        \includegraphics[width=0.24\textwidth]{pics/ablation_study/beta_2_Text-Video_R1_ActivityNet_CLIP4Clip.eps}
    }
      \caption{Text-Video Retrieval recall at 1 w.r.t $\beta_1$ and $\beta_2$ on CLIP4Clip and ActivityNet using DualIS and DualDIS.}
      \label{fig: ablation: beta1 and beta2 appendix (full)}
   \end{center}
\end{figure*}

\begin{figure*}[h!]
    \begin{center}
    \includegraphics[width=0.24\textwidth]{pics/ablation_study/beta_1_Text-Video_R1_MSR-VTT_CLIP4Clip.eps}
    \includegraphics[width=0.24\textwidth]{pics/ablation_study/beta_1_Video-Text_R1_MSR-VTT_CLIP4Clip.eps}
    \includegraphics[width=0.24\textwidth]{pics/ablation_study/beta_1_Video-Text_R1_ActivityNet_CLIP4Clip.eps}
    \includegraphics[width=0.24\textwidth]{pics/ablation_study/beta_1_Text-Video_R1_ActivityNet_CLIP4Clip.eps}

    \includegraphics[width=0.24\textwidth]{pics/ablation_study/beta_2_Text-Video_R1_MSR-VTT_CLIP4Clip.eps}
    \includegraphics[width=0.24\textwidth]{pics/ablation_study/beta_2_Video-Text_R1_MSR-VTT_CLIP4Clip.eps}
    \includegraphics[width=0.24\textwidth]{pics/ablation_study/beta_2_Video-Text_R1_ActivityNet_CLIP4Clip.eps}
    \includegraphics[width=0.24\textwidth]{pics/ablation_study/beta_2_Text-Video_R1_ActivityNet_CLIP4Clip.eps}

    \includegraphics[width=0.24\textwidth]{pics/ablation_study/Top-k_Text-Video_R1_MSR-VTT_CLIP4Clip.eps}
    \includegraphics[width=0.24\textwidth]{pics/ablation_study/Top-k_Video-Text_R1_MSR-VTT_CLIP4Clip.eps}
    \includegraphics[width=0.24\textwidth]{pics/ablation_study/Top-k_Video-Text_R1_ActivityNet_CLIP4Clip.eps}
    \includegraphics[width=0.24\textwidth]{pics/ablation_study/Top-k_Text-Video_R1_ActivityNet_CLIP4Clip.eps}

    \includegraphics[width=0.24\textwidth]{pics/ablation_study/Size_of_Banks_N_Text-Video_R1_MSR-VTT_CLIP4Clip.eps}
    \includegraphics[width=0.24\textwidth]{pics/ablation_study/Size_of_Banks_N_Video-Text_R1_MSR-VTT_CLIP4Clip.eps}
    \includegraphics[width=0.24\textwidth]{pics/ablation_study/Size_of_Banks_N_Video-Text_R1_ActivityNet_CLIP4Clip.eps}
    \includegraphics[width=0.24\textwidth]{pics/ablation_study/Size_of_Banks_N_Text-Video_R1_ActivityNet_CLIP4Clip.eps}

    \includegraphics[width=0.24\textwidth]{pics/ablation_study/Size_of_Gallery_Bank__Text-Video_R1_MSR-VTT_CLIP4Clip.eps}
    \includegraphics[width=0.24\textwidth]{pics/ablation_study/Size_of_Gallery_Bank__Video-Text_R1_MSR-VTT_CLIP4Clip.eps}
    \includegraphics[width=0.24\textwidth]{pics/ablation_study/Size_of_Gallery_Bank__Video-Text_R1_ActivityNet_CLIP4Clip.eps}
    \includegraphics[width=0.24\textwidth]{pics/ablation_study/Size_of_Gallery_Bank__Text-Video_R1_ActivityNet_CLIP4Clip.eps}

    \includegraphics[width=0.24\textwidth]{pics/ablation_study/Size_of_Query_Bank__Text-Video_R1_MSR-VTT_CLIP4Clip.eps}
    \includegraphics[width=0.24\textwidth]{pics/ablation_study/Size_of_Query_Bank__Video-Text_R1_MSR-VTT_CLIP4Clip.eps}
    \includegraphics[width=0.24\textwidth]{pics/ablation_study/Size_of_Query_Bank__Video-Text_R1_ActivityNet_CLIP4Clip.eps}
    \includegraphics[width=0.24\textwidth]{pics/ablation_study/Size_of_Query_Bank__Text-Video_R1_ActivityNet_CLIP4Clip.eps}
    
      \caption{dsfa  }
      \label{fig: all ablation}
   \end{center}
\end{figure*}

\begin{figure*}[h!]
    \begin{center}
    \includegraphics[width=0.24\textwidth]{pics/ablation_study/Top-k_Text-Video_R1_MSR-VTT_CLIP4Clip.eps}
    \includegraphics[width=0.24\textwidth]{pics/ablation_study/Top-k_Video-Text_R1_MSR-VTT_CLIP4Clip.eps}
    \includegraphics[width=0.24\textwidth]{pics/ablation_study/Top-k_Video-Text_R1_ActivityNet_CLIP4Clip.eps}
    \includegraphics[width=0.24\textwidth]{pics/ablation_study/Top-k_Text-Video_R1_ActivityNet_CLIP4Clip.eps}

      \caption{Top-k  }
      \label{fig: ablation: topk appendix (full)}
   \end{center}
\end{figure*}

\begin{figure*}[h!]
    \begin{center}
    \subfloat[MSR-VTT (CE+).]{
        \centering
        \includegraphics[width=0.24\textwidth]{pics/occur/1-occurence-MSR-VTT_CE+.eps}
    }
    \subfloat[MSR-VTT (TT-CE+).]{
        \centering
        \includegraphics[width=0.24\textwidth]{pics/occur/1-occurence-MSR-VTT_TT-CE+.eps}
    }
    \subfloat[MSR-VTT (CLIP4Clip).]{
        \centering
        \includegraphics[width=0.24\textwidth]{pics/occur/1-occurence-MSR-VTT_CLIP4Clip.eps}
    }
    \subfloat[MSR-VTT (X-CLIP).]{
        \centering
        \includegraphics[width=0.24\textwidth]{pics/occur/1-occurence-MSR-VTT_X-CLIP.eps}
    }

    \subfloat[ActivityNet (CE+).]{
        \centering
        \includegraphics[width=0.24\textwidth]{pics/occur/1-occurence-ActivityNet_CE+.eps}
    }
    \subfloat[ActivityNet (TT-CE+).]{
        \centering
        \includegraphics[width=0.24\textwidth]{pics/occur/1-occurence-ActivityNet_TT-CE+.eps}
    }
    \subfloat[ActivityNet (CLIP4Clip).]{
        \centering
        \includegraphics[width=0.24\textwidth]{pics/occur/1-occurence-ActivityNet_CLIP4Clip.eps}
    }
    \subfloat[ActivityNet (X-CLIP).]{
        \centering
        \includegraphics[width=0.24\textwidth]{pics/occur/1-occurence-ActivityNet_X-CLIP.eps}
    }
    
    \subfloat[MSVD (CE+).]{
        \centering
        \includegraphics[width=0.24\textwidth]{pics/occur/1-occurence-MSVD_CE+.eps}
    }
    \subfloat[MSVD (TT-CE+).]{
        \centering
        \includegraphics[width=0.24\textwidth]{pics/occur/1-occurence-MSVD_TT-CE+.eps}
    }
    \subfloat[MSVD (CLIP4Clip).]{
        \centering
        \includegraphics[width=0.24\textwidth]{pics/occur/1-occurence-MSVD_CLIP4Clip.eps}
    }
    \subfloat[MSVD (X-CLIP).]{
        \centering
        \includegraphics[width=0.24\textwidth]{pics/occur/1-occurence-MSVD_X-CLIP.eps}
    }
        
    \subfloat[MSR-VTT (CLIP2Video).]{
        \centering
        \includegraphics[width=0.24\textwidth]{pics/occur/1-occurence-MSR-VTT_CLIP2Video.eps}
    }
    \subfloat[MSVD (CLIP2Video).]{
        \centering
        \includegraphics[width=0.24\textwidth]{pics/occur/1-occurence-MSVD_CLIP2Video.eps}
    }
    \subfloat[DiDeMo (CE+).]{
        \centering
        \includegraphics[width=0.24\textwidth]{pics/occur/1-occurence-DiDeMo_CE+.eps}
    }
    \subfloat[DiDeMo (TT-CE+).]{
        \centering
        \includegraphics[width=0.24\textwidth]{pics/occur/1-occurence-DiDeMo_TT-CE+.eps}
    }

    \subfloat[MSCOCO (CLIP).]{
        \centering
        \includegraphics[width=0.24\textwidth]{pics/occur/1-occurence-CLIP_MSCOCO.eps}
    }
    \subfloat[MSCOCO (Oscar).]{
        \centering
        \includegraphics[width=0.24\textwidth]{pics/occur/1-occurence-OSCAR_MSCOCO.eps}
    }
    \subfloat[Flickr30k (CLIP).]{
        \centering
        \includegraphics[width=0.24\textwidth]{pics/occur/1-occurence-CLIP_Flickr30k.eps}
    }
    \subfloat[Flickr30k (Oscar).]{
        \centering
        \includegraphics[width=0.24\textwidth]{pics/occur/1-occurence-OSCAR_Flickr30k.eps}
    }
    
    \subfloat[AudioCaps (AR-CE).]{
        \centering
        \includegraphics[width=0.24\textwidth]{pics/occur/1-occurence-AudioCaps_AR-CE.eps}
    }
    \subfloat[CLOTHO (AR-CE).]{
        \centering
        \includegraphics[width=0.24\textwidth]{pics/occur/1-occurence-CLOTHO_AR-CE.eps}
    }

      \caption{
      \textbf{Hubness is prevalent across different methods, datasets, and tasks}. 
        These figures illustrate the distribution of the number of times each (test) gallery video/image/audio was retrieved by (test) set queries. 
      In all visualization, severe hubness can be observed, as a small number of videos are retrieved disproportionately often, which might reduce the retrieval performance. 
      We also illustrate the statistics (maximum, minimum, and median values) of 1-occurrence on the figures.
      }
      \label{fig: showing 1 occurrence appendix (full)}
   \end{center}
\end{figure*}

\begin{table*}[ht!]
\centering
\caption{Retrieval performance on MSR-VTT (full split and 1k split). ``*'' refers to our implementation following the public code. Best in \textbf{Bold} and the second best is \underline{underlined}. }
\label{tab: quan: msrvtt appendix (full)}
\resizebox{\textwidth}{!}{%
\begin{tabular}{ll|ccccc|ccccc}
\toprule
          &   \multirow{2}{*}{Normalization}          & \multicolumn{5}{c|}{Text-to-Video Retrieval}             & \multicolumn{5}{c}{Video-to-Text Retrieval} \\
            &          & R@1   & R@5   & R@10  & MdR & MnR   & R@1     & R@5     & R@10    & MdR  & MnR    \\ \midrule
\multicolumn{12}{c}{MSR-VTT (full split)}\\\midrule
\textcolor{gray}{RoME}   & & \textcolor{gray}{10.7}          & \textcolor{gray}{29.6}          & \textcolor{gray}{41.2}           & \textcolor{gray}{17.0}             & \textcolor{gray}{-}            & \textcolor{gray}{-}          & \textcolor{gray}{-}          & \textcolor{gray}{-}           & \textcolor{gray}{-}             & \textcolor{gray}{-}             \\
\textcolor{gray}{Frozen}   & & \textcolor{gray}{32.5}          & \textcolor{gray}{61.5}          & \textcolor{gray}{71.2}           & \textcolor{gray}{-}             & \textcolor{gray}{-}            & \textcolor{gray}{-}          & \textcolor{gray}{-}          & \textcolor{gray}{-}           & \textcolor{gray}{-}             & \textcolor{gray}{-}             \\
\midrule
\multirow{5}{*}{CE+}                     &                                & 12.62                                        & 33.87          & 46.38          & 12.0          & 74.99          & 19.83          & 46.72          & 60.94          & 6.0          & 29.42          \\
                                         & + IS                           & \underline{13.31}                                        & \underline{35.00}          & \underline{47.46}          & 12.0          & \underline{74.16}          & \underline{23.68}          & \underline{52.25}          & \underline{65.59}          & 5.0          & \underline{24.75}          \\
                                         & + DIS                          & \underline{13.31}                                        & \underline{35.00}          & \underline{47.46}          & 12.0          & \underline{74.16}          & \underline{23.68}          & \underline{52.25}          & \underline{65.59}          & 5.0          & \underline{24.75}          \\ \cmidrule{2-12} 
                                         \rowcolor{green!10}\cellcolor{white}& + DualIS                 & \textbf{14.88}                               & \textbf{37.36} & \textbf{50.00} & \textbf{11.0} & \textbf{70.13} & \textbf{25.48} & \textbf{56.68} & \textbf{69.89} & \textbf{4.0} & 2\textbf{1.76} \\
                                         \rowcolor{green!10}\cellcolor{white}& + DualDIS                & \textbf{14.88}                               & \textbf{37.36} & \textbf{50.00} & \textbf{11.0} & \textbf{70.13} & \textbf{25.48} & \textbf{56.68} & \textbf{69.89} & \textbf{4.0} & \textbf{21.76} \\ \midrule
\multirow{5}{*}{TT-CE+}                          &                 & 14.61                                        & 37.81          & 50.78          & 10.0          & 63.31          & 24.48          & 54.11          & 67.59          & 5.0          & 20.22          \\
                                                 & + IS            & 16.58                                        & 40.75          & 53.44          & 9.0           & 60.52          & \underline{29.40}          & \underline{60.10}          & \underline{72.11}          & 3.0          & \underline{16.56}          \\
                                                 & + DIS           & 16.59                                        & 40.73          & 53.44          & 9.0           & 60.51          & 26.69          & 56.49          & 69.06          & 4.0          & 18.90          \\ \cmidrule{2-12} 
                                                 \rowcolor{green!10}\cellcolor{white}& + DualIS  & \textbf{17.06}                               & \textbf{41.63} & \textbf{54.25} & 8.0           & \underline{60.10} & \textbf{29.83} & \textbf{62.04} & \textbf{73.68} & 3.0          & \textbf{15.70} \\
                                          \rowcolor{green!10}\cellcolor{white}& + DualDIS & \underline{17.02}                               & \underline{41.60} & \underline{54.23} & 8.0           & \textbf{60.01} & {27.53} & {57.39} & {69.77} & 4.0          & {18.63} \\ 
\midrule
\multicolumn{12}{c}{MSR-VTT (1k split)}\\\midrule
\textcolor{gray}{DiscreteCodebook}   &   & \textcolor{gray}{43.4}          & \textcolor{gray}{72.3}          & \textcolor{gray}{81.2}           & \textcolor{gray}{-}               & \textcolor{gray}{14.8}            & \textcolor{gray}{42.5}          & \textcolor{gray}{71.2}          & \textcolor{gray}{81.1}           & \textcolor{gray}{-}               & \textcolor{gray}{12.0}            \\
\textcolor{gray}{VCM}      &         & \textcolor{gray}{43.8}          & \textcolor{gray}{71.0}          & \textcolor{gray}{-}           & \textcolor{gray}{2.0}             & \textcolor{gray}{14.3}            & \textcolor{gray}{45.1}          & \textcolor{gray}{72.3}          & \textcolor{gray}{82.3}           & \textcolor{gray}{2.0}             & \textcolor{gray}{10.7}            \\
\textcolor{gray}{CenterCLIP}   &   & \textcolor{gray}{44.2}          & \textcolor{gray}{71.6}          & \textcolor{gray}{82.1}           & \textcolor{gray}{2.0}               & \textcolor{gray}{15.1}            & \textcolor{gray}{42.8}          & \textcolor{gray}{71.7}          & \textcolor{gray}{82.2}           & \textcolor{gray}{2.0}               & \textcolor{gray}{10.9}            \\
\textcolor{gray}{Align\&Tell}    &   & \textcolor{gray}{45.2}          & \textcolor{gray}{73.0}          & \textcolor{gray}{82.9}           & \textcolor{gray}{2.0}             & \textcolor{gray}{-}               & \textcolor{gray}{43.4}          & \textcolor{gray}{70.9}          & \textcolor{gray}{81.8}           & \textcolor{gray}{2.0}             & \textcolor{gray}{-}               \\
\textcolor{gray}{CLIP2TV}        &    & \textcolor{gray}{45.6}          & \textcolor{gray}{71.1}          & \textcolor{gray}{80.8}           & \textcolor{gray}{2.0}             & \textcolor{gray}{15.0}            & \textcolor{gray}{43.9}          & \textcolor{gray}{70.9}          & \textcolor{gray}{82.2}           & \textcolor{gray}{2.0}             & \textcolor{gray}{12.0}            \\
\textcolor{gray}{X-Pool} &    & \textcolor{gray}{46.9}          & \textcolor{gray}{72.8}          & \textcolor{gray}{82.2}           & \textcolor{gray}{2.0}               & \textcolor{gray}{14.3}            & \textcolor{gray}{-}             & \textcolor{gray}{-}             & \textcolor{gray}{-}              & \textcolor{gray}{-}               & \textcolor{gray}{-}               \\

\textcolor{gray}{TS2-Net}   & & \textcolor{gray}{47.0}          & \textcolor{gray}{74.5}          & \textcolor{gray}{83.8}           & \textcolor{gray}{2.0}             & \textcolor{gray}{13.0}            & \textcolor{gray}{45.3}          & \textcolor{gray}{74.1}          & \textcolor{gray}{83.7}           & \textcolor{gray}{2.0}             & \textcolor{gray}{9.2}             \\
\textcolor{gray}{CAMoE}       &         & \textcolor{gray}{47.3}          & \textcolor{gray}{74.2}          & \textcolor{gray}{84.5}           & \textcolor{gray}{2.0}             & \textcolor{gray}{11.9}            & \textcolor{gray}{49.1}          & \textcolor{gray}{74.3}          & \textcolor{gray}{84.3}           & \textcolor{gray}{2.0}             & \textcolor{gray}{9.9}             \\
\midrule
\multirow{5}{*}{CLIP4Clip} &                                & 44.10          & \underline{71.70}          & 81.40          & 2.0           & \multicolumn{1}{c|}{\underline{15.51}}          & 42.09          & 71.24          & 81.23          & 2.0          & 12.01          \\
                           & + IS                           & \underline{44.20}          & \underline{71.70}          & \underline{81.60}          & 2.0           & \multicolumn{1}{c|}{{15.64}}          &\underline{44.86}          & \underline{72.04}          & \underline{82.02} & 2.0          & \textbf{11.56} \\
                           & + DIS                          & \underline{44.20}          & \underline{71.70}          & \underline{81.60}          & 2.0           & \multicolumn{1}{c|}{{15.64}}          & \underline{44.86}          & \underline{72.04}          & \textbf{82.11} & 2.0          & 11.61          \\\cmidrule{2-12}
                           \rowcolor{green!10}\cellcolor{white}& + DualIS                 & \textbf{45.00} & \textbf{72.50} & \textbf{82.10} & 2.0           & \multicolumn{1}{c|}{\textbf{15.32}} & \textbf{45.45} & \textbf{73.02} & 81.42          & 2.0          & \textbf{11.56} \\
                           \rowcolor{green!10}\cellcolor{white}& + DualDIS                & \textbf{45.00} & \textbf{72.50} & \textbf{82.10} & 2.0           & \multicolumn{1}{c|}{\textbf{15.32}} & \textbf{45.45} & \textbf{73.02} & 81.42          & 2.0          & \underline{11.59}\\\midrule
\multirow{5}{*}{CLIP2Video} &                                & 46.00          & 71.60          & 81.60          & 2.0           & \multicolumn{1}{c|}{14.51}          & 43.87          & 72.73          & 82.51          & 2.0          & 10.20          \\
                            & + IS                           & \underline{47.00}          & \underline{72.80}          & \underline{82.10}          & 2.0           & \multicolumn{1}{c|}{\underline{13.91}}          & \underline{46.15}          & \textbf{72.63} & \textbf{81.92} & 2.0          & 11.15          \\
                            & + DIS                          & \underline{47.00}          & \underline{72.80} & \underline{82.10}          & 2.0           & \multicolumn{1}{c|}{\underline{13.91}}          & \underline{46.15}          & \underline{72.53} & \textbf{81.92} & 2.0          & 11.14          \\\cmidrule{2-12}
                            \rowcolor{green!10}\cellcolor{white}& + DualIS                 & \textbf{47.20} & \textbf{73.20} & \textbf{82.30} & 2.0           & \multicolumn{1}{c|}{\textbf{13.90}} & \textbf{46.74} & \underline{72.53}          & \underline{81.82}          & 2.0          & \underline{10.93} \\
                            \rowcolor{green!10}\cellcolor{white}& + DualDIS                & \textbf{47.20} & 72.70          & \textbf{82.30} & 2.0           & \multicolumn{1}{c|}{\textbf{13.90}} & \textbf{46.74} & 72.43          & \underline{81.82}          & 2.0          & \textbf{10.90} \\ \midrule
\multirow{5}{*}{X-CLIP}    &                                & 46.30          & 74.00          & \underline{83.40}          & 2.0           & \multicolumn{1}{c|}{\textbf{12.80}} & 44.81          & 73.69          & 82.39          & 2.0          & 10.99          \\
                           & + IS                           & 48.60          & \underline{74.10}          & \textbf{84.10} & 2.0           & \multicolumn{1}{c|}{13.35}          & \underline{46.69}          & 73.99          & \underline{83.28}          & 2.0          & \underline{10.52}          \\
                           & + DIS                          & 48.60          & \underline{74.10}          & \textbf{84.10} & 2.0           & \multicolumn{1}{c|}{13.35}          & \underline{46.69}          & 73.89          & \underline{83.28}          & 2.0          & \underline{10.52}          \\\cmidrule{2-12}
                           \rowcolor{green!10}\cellcolor{white}& + DualIS                 & \textbf{48.80} & \textbf{74.30} & \textbf{84.10} & 2.0           & \multicolumn{1}{c|}{\underline{13.30}}          & \textbf{46.88} & \textbf{74.38} & \textbf{83.38} & 2.0          & \textbf{10.44} \\
                           \rowcolor{green!10}\cellcolor{white}& + DualDIS                & \underline{48.70} & \textbf{74.30} & \textbf{84.10} & 2.0           & \multicolumn{1}{c|}{\underline{13.30}}          & \textbf{46.88} & \underline{74.28} & \textbf{83.38} & 2.0          & \textbf{10.44}\\
\bottomrule
\end{tabular}%
}
\end{table*}

\begin{table*}[ht!]
\centering
\caption{Retrieval performance on ActivityNet. Best in \textbf{Bold} and the second best is \underline{underlined}.}
\label{tab: quan: actnet appendix (full)}
\resizebox{\textwidth}{!}{%
\begin{tabular}{ll|ccccc|ccccc}
\toprule
                           & \multirow{2}{*}{Normalization} & \multicolumn{5}{c|}{Text-to-Video Retrieval}                            & \multicolumn{5}{c}{Video-to-Text Retrieval}                             \\
                           &                                & R@1            & R@5            & R@10           & MdR & MnR            & R@1            & R@5            & R@10           & MdR & MnR            \\ \midrule
\textcolor{gray}{FSE}       &         & \textcolor{gray}{11.5}          & \textcolor{gray}{31.8}          & \textcolor{gray}{77.7}           & \textcolor{gray}{13.0}             & \textcolor{gray}{-}            & \textcolor{gray}{12.6}          & \textcolor{gray}{33/2}          & \textcolor{gray}{77.6}           & \textcolor{gray}{12.0}             & \textcolor{gray}{-}             \\
\textcolor{gray}{HiT}       &         & \textcolor{gray}{27.7}          & \textcolor{gray}{58.6}          & \textcolor{gray}{94.7}           & \textcolor{gray}{4.0}             & \textcolor{gray}{-}            & \textcolor{gray}{-}          & \textcolor{gray}{-}          & \textcolor{gray}{-}           & \textcolor{gray}{-}             & \textcolor{gray}{-}             \\
\textcolor{gray}{VCM}       &         & \textcolor{gray}{40.8}          & \textcolor{gray}{72.8}          & \textcolor{gray}{98.2}           & \textcolor{gray}{2.0}             & \textcolor{gray}{7.3}            & \textcolor{gray}{42.6}          & \textcolor{gray}{74.9}          & \textcolor{gray}{86.2}           & \textcolor{gray}{2.0}             & \textcolor{gray}{6.4}             \\
\textcolor{gray}{CenterCLIP}       &         & \textcolor{gray}{43.9}          & \textcolor{gray}{74.6}          & \textcolor{gray}{85.8}           & \textcolor{gray}{2.0}             & \textcolor{gray}{6.7}            & \textcolor{gray}{44.5}          & \textcolor{gray}{75.7}          & \textcolor{gray}{86.2}           & \textcolor{gray}{2.0}             & \textcolor{gray}{6.5}             \\
\textcolor{gray}{Align\&Tell}       &         & \textcolor{gray}{42.6}          & \textcolor{gray}{73.8}          & \textcolor{gray}{98.7}           & \textcolor{gray}{2.0}             & \textcolor{gray}{-}            & \textcolor{gray}{43.5}          & \textcolor{gray}{73.6}          & \textcolor{gray}{98.3}           & \textcolor{gray}{2.0}             & \textcolor{gray}{-}             \\
\textcolor{gray}{CLIP2TV}       &         & \textcolor{gray}{44.1}          & \textcolor{gray}{75.2}          & \textcolor{gray}{98.4}           & \textcolor{gray}{2.0}             & \textcolor{gray}{6.5}            & \textcolor{gray}{-}          & \textcolor{gray}{-}          & \textcolor{gray}{-}           & \textcolor{gray}{-}             & \textcolor{gray}{-}             \\
\textcolor{gray}{TS2-Net}       &         & \textcolor{gray}{41.0}          & \textcolor{gray}{73.6}          & \textcolor{gray}{84.5}           & \textcolor{gray}{2.0}             & \textcolor{gray}{8.4}            & \textcolor{gray}{-}          & \textcolor{gray}{-}          & \textcolor{gray}{-}           & \textcolor{gray}{-}             & \textcolor{gray}{-}             \\
\textcolor{gray}{CAMoE}       &         & \textcolor{gray}{51.0}          & \textcolor{gray}{77.7}          & \textcolor{gray}{-}           & \textcolor{gray}{-}             & \textcolor{gray}{-}            & \textcolor{gray}{49.9}          & \textcolor{gray}{77.4}          & \textcolor{gray}{-}           & \textcolor{gray}{-}             & \textcolor{gray}{-}             \\\midrule

\multirow{5}{*}{CE+}       &                               & 19.16          & 47.79          & 65.79          & 6.0 & 21.99          & 18.51          & 47.85          & 63.94          & 6.0 & 23.06          \\
                           & + IS                           & 20.13          & 50.58          & 66.44          & 5.0 & 21.07          & 19.36          & 50.13          & 65.87          & 5.0 & 20.81          \\
                           & + DIS                          & 20.20          & 50.50          & 66.32          & 5.0 & 21.20          & 19.52          & 49.87          & 65.81          & 5.0 & 21.01          \\ \cmidrule{2-12} 
                     \rowcolor{green!10}\cellcolor{white}       & + DualIS                &       \textbf{22.66} & \textbf{52.21} & \underline{68.09} & 5.0 & \textbf{19.02} & \textbf{21.92} & \textbf{53.49} & \textbf{68.25} & 5.0 & \textbf{17.57} \\          
                      \rowcolor{green!10}\cellcolor{white}      & + DualDIS               &    \underline{22.35} & \underline{51.96} & \textbf{68.25} & 5.0 & \underline{19.22} & \underline{21.72} & \underline{52.69} & \underline{67.62} & 5.0 & \underline{18.16}        \\ \midrule
\multirow{5}{*}{TT-CE+}    &                                & 23.29          & 56.42          & 73.78          & 4.0 & 13.59          & 22.49          & 56.38          & 72.67          & 4.0 & 13.90          \\
                           & + IS                           & 24.69          & 57.98          & 74.42          & 4.0 & 13.48          & 23.18          & 57.01          & 73.60          & 4.0 & 13.30          \\
                           & + DIS                          & 24.65          & 57.64          & 74.31          & 4.0 & 13.35          & 23.43          & 57.05          & 73.48          & 4.0 & 13.48              \\ \cmidrule{2-12}
                     \rowcolor{green!10}\cellcolor{white}       & + DualIS                & \textbf{27.15} & \textbf{60.81} & \textbf{76.67} & 4.0 & \textbf{12.10} & \textbf{27.46} & \textbf{61.13} & \textbf{76.71} & 4.0 & \textbf{11.00} \\
                        \rowcolor{green!10}\cellcolor{white}    & + DualDIS               & \underline{26.80} & \underline{60.16} & \underline{76.35} & 4.0 & \underline{12.18} & \underline{27.11} & \underline{60.79} & \underline{75.86} & 4.0 & \underline{11.42} \\ \midrule 
\multirow{5}{*}{CLIP4Clip} &                                & 41.85          & 74.44          & 84.84          & 2.0 & 6.84           & 41.62          & 74.11          & 86.12          & 2.0 & 6.81           \\
                           & + IS                           & 45.93          & 77.52          & 87.07          & 2.0 & 6.39           & 46.23          & 76.72          & 87.26          & 2.0 & \underline{6.46}           \\
                           & + DIS                          & 46.02          & 77.29          & 86.83          & 2.0 & \underline{6.29}           & 46.26          & 76.48          & 87.16          & 2.0 & 6.48           \\\cmidrule{2-12}
                   \rowcolor{green!10}\cellcolor{white}& + DualIS  & \underline{46.71} & \underline{77.47} & \textbf{87.35} & 2.0   & \textbf{6.01} & \underline{46.59} & \textbf{78.04} & \textbf{88.15} & 2.0   & \textbf{6.05} \\
\rowcolor{green!10}\cellcolor{white}&+ DualIS & \textbf{46.76} & \textbf{77.48} & \underline{87.28} & 2.0   & \textbf{6.01}  & \textbf{46.73} & \underline{77.90} & \underline{88.06}  & 2.0   & \textbf{6.05} \\\midrule
\multirow{5}{*}{X-CLIP}    &                                & 46.25          & 76.02          & 86.05          & 2.0           & \multicolumn{1}{c|}{6.37}           & 45.20          & 76.07          & 86.57          & 2.0          & 6.40           \\
                           & + IS                           & 49.36          & \textbf{79.16} & \underline{88.36}          & 2.0           & \multicolumn{1}{c|}{5.71}           & \textbf{51.13} & 78.95          & 88.14          & 1.0          & 5.62           \\
                           & + DIS                          & 49.39          & \underline{78.60} & 87.97          & 2.0           & \multicolumn{1}{c|}{5.80}           & {50.52} & 78.60          & 87.84          & 1.0          & 5.71           \\\cmidrule{2-12}
            \rowcolor{green!10}\cellcolor{white}               & + DualIS                 & \underline{49.96} & 78.51          & \textbf{88.62} & 2.0           & \multicolumn{1}{c|}{\textbf{5.48}}  & \underline{50.92}          & \textbf{79.42} & \textbf{89.36} & 1.0          & \textbf{5.32}  \\
            \rowcolor{green!10}\cellcolor{white}               & + DualDIS                & \textbf{50.17} & 78.03          & {88.06} & \textbf{1.0}  & \multicolumn{1}{c|}{\underline{5.61}}  & 50.22          & \underline{79.12} & \underline{88.62} & 1.0          & \underline{5.44} \\\bottomrule
\end{tabular}%
}
\end{table*}

\begin{table*}[ht!]
\centering
\caption{Retrieval performance on MSVD. Best in \textbf{Bold} and the second best is \underline{underlined}. }
\label{tab: quan: msvd appendix (full)}
\resizebox{\textwidth}{!}{%
\begin{tabular}{ll|ccccc|ccccc}
\toprule
                            & \multirow{2}{*}{Normalization} & \multicolumn{5}{c|}{Text-to-Video Retrieval}                             & \multicolumn{5}{c}{Video-to-Text Retrieval}                              \\
                        &                                    & R@1 $\uparrow$    & R@5 $\uparrow$    & R@10 $\uparrow$   & MdR $\downarrow$   & MnR  $\downarrow$  & R@1 $\uparrow$   & R@5  $\uparrow$   & R@10 $\uparrow$  & MdR  $\downarrow$  & MnR $\downarrow$   \\\midrule
\textcolor{gray}{FSE} & {} & \textcolor{gray}{18.2} & \textcolor{gray}{44.8} &\textcolor{gray}{89.1} &\textcolor{gray}{7.0} & \textcolor{gray}{-} &\textcolor{gray}{16.7} & \textcolor{gray}{43.1} & \textcolor{gray}{88.4} & \textcolor{gray}{7.0} & \textcolor{gray}{-}\\
\textcolor{gray}{Frozen}   & & \textcolor{gray}{33.7}          & \textcolor{gray}{64.7}          & \textcolor{gray}{76.3}           & \textcolor{gray}{3.0}             & \textcolor{gray}{-}            & \textcolor{gray}{-}          & \textcolor{gray}{-}          & \textcolor{gray}{-}           & \textcolor{gray}{-}             & \textcolor{gray}{-}             \\
\textcolor{gray}{CenterCLIP}       &         & \textcolor{gray}{47.3}          & \textcolor{gray}{76.9}          & \textcolor{gray}{86.0}           & \textcolor{gray}{2.0}             & \textcolor{gray}{9.7}            & \textcolor{gray}{63.5}          & \textcolor{gray}{86.4}          & \textcolor{gray}{92.6}           & \textcolor{gray}{1.0}             & \textcolor{gray}{3.8}             \\
\textcolor{gray}{Align\&Tell}       &         & \textcolor{gray}{47.1}          & \textcolor{gray}{77.0}          & \textcolor{gray}{85.6}           & \textcolor{gray}{2.0}             & \textcolor{gray}{-}            & \textcolor{gray}{61.8}          & \textcolor{gray}{87.5}          & \textcolor{gray}{92.7}           & \textcolor{gray}{1.0}             & \textcolor{gray}{-}             \\
\textcolor{gray}{CLIP2TV}       &         & \textcolor{gray}{47.0}          & \textcolor{gray}{76.5}          & \textcolor{gray}{85.1}           & \textcolor{gray}{2.0}             & \textcolor{gray}{10.1}            & \textcolor{gray}{-}          & \textcolor{gray}{-}          & \textcolor{gray}{-}           & \textcolor{gray}{-}             & \textcolor{gray}{-}             \\
\textcolor{gray}{X-Pool}       &         & \textcolor{gray}{47.2}          & \textcolor{gray}{77.4}          & \textcolor{gray}{86.0}           & \textcolor{gray}{2.0}             & \textcolor{gray}{9.3}            & \textcolor{gray}{-}          & \textcolor{gray}{-}          & \textcolor{gray}{-}           & \textcolor{gray}{-}             & \textcolor{gray}{-}             \\
\textcolor{gray}{CAMoE}       &         & \textcolor{gray}{49.8}          & \textcolor{gray}{79.2}          & \textcolor{gray}{87.0}           & \textcolor{gray}{-}             & \textcolor{gray}{9.4}            & \textcolor{gray}{-}          & \textcolor{gray}{-}          & \textcolor{gray}{-}           & \textcolor{gray}{-}             & \textcolor{gray}{-}             \\\midrule
\multirow{5}{*}{CE+}        &                                & 23.94          & 54.98          & 69.00          & 4.0 & 18.46          & 22.84          & 49.85          & 61.49          & 6.0  & 33.96          \\
                            & + IS                           & 24.64          & \textbf{56.64} & \underline{70.45} & 4.0 & \underline{20.43} & \underline{26.27}          & \underline{54.48}          & \underline{65.97}          & 4.0  & \underline{27.69}          \\
                            & + DIS                          & 24.66          & \textbf{56.64} & \textbf{70.46} & 4.0 & \textbf{20.42} & 24.63          & 52.99          & 64.63          & 5.0  & 32.57          \\\cmidrule{2-12}
                       \rowcolor{green!10}\cellcolor{white}     & + DualIS              & \underline{25.04} & 56.01          & 69.59          & 4.0 & 20.70          & \textbf{29.85} & \textbf{59.40} & \textbf{67.46} & 4.0  & \textbf{25.41} \\
                      \rowcolor{green!10}\cellcolor{white}      & + DualDIS             & \textbf{25.06} & \underline{56.02}          & 69.65          & 4.0 & 20.64          & {25.67} & {53.58} & {65.07} & 5.0  & {32.26} \\ \midrule
\multirow{5}{*}{TT-CE+}     &                                & 24.42          & 56.20          & 70.44          & 4.0 & 17.16          & \underline{25.22}          & 55.07          & 64.63          & 4.0  & 29.94          \\
                            & + IS                           & 26.55          & 59.68          & \underline{72.85}          & 4.0 & 17.72          & 24.63          & 52.39          & 65.07          & 4.0  & \underline{27.31}          \\
                            & + DIS                          & 26.57          & 59.68          & 72.83          & 4.0 & 17.71          & 23.43          & 55.22          & 65.37          & 4.0  & 28.25          \\\cmidrule{2-12}
                       \rowcolor{green!10}\cellcolor{white}     & + DualIS              & \underline{27.21} & \textbf{60.23}          & \textbf{73.35}          & 4.0 & \textbf{16.88} & \textbf{28.06}          & \textbf{57.46}          & \textbf{67.61}          & 4.0 & \textbf{25.64} \\
                      \rowcolor{green!10}\cellcolor{white}      & + DualDIS             & \textbf{27.24} & \underline{60.20} & \textbf{73.35}          & 4.0 & \underline{16.91} & 24.18          & \underline{55.97}          & \underline{65.82}          & 4.0  & {28.16} \\\midrule
\multirow{5}{*}{CLIP4Clip}  &                                &   44.64                                        & 74.66          & 83.99          & 2.0  & 10.32          & 63.13          & 79.40          & 85.37          & 1.0 & 11.02      \\
                            & + IS                           & 46.05          & 75.60          & \underline{84.36} & 2.0 & 10.16          & \underline{67.61}          & \textbf{84.48} & \textbf{89.70} & 1.0  & \textbf{7.61}  \\
                            & + DIS                          & 46.05          & 75.60          & \textbf{84.37} & 2.0 & 10.16          & 64.48          & 81.49          & {87.16} & 1.0  & {11.10} \\\cmidrule{2-12}
                         \rowcolor{green!10}\cellcolor{white}   & + DualIS              & \underline{46.33} & \underline{75.91} & 84.35          & 2.0 & \underline{10.14} & \textbf{67.76} & \underline{84.18}          & \underline{89.25}          & 1.0  & \underline{8.07}           \\
                        \rowcolor{green!10}\cellcolor{white}    & + DualDIS             & \textbf{46.34} & \textbf{75.95} & 84.34          & 2.0 & \textbf{10.12} & {64.78} & {81.64} & {87.16} & 1.0  & 11.12          \\\midrule
\multirow{5}{*}{CLIP2Video} &                                & 47.05          & 76.97          & 85.59          & 2.0 & 9.53           & 62.09          & 83.13          & \underline{89.40}          & 1.0  & 7.73           \\
                            & + IS                           & 47.52          & \textbf{77.95} & 86.01          & 2.0 & \underline{9.47}           & \underline{66.57}          & 82.84          & 88.36          & 1.0  & 7.94           \\
                            & + DIS                          & 47.52          & \textbf{77.95} & 86.00          & 2.0 & \underline{9.47}           & 62.39          & 82.84          & 88.51          & 1.0  & 8.73           \\\cmidrule{2-12}
                          \rowcolor{green!10}\cellcolor{white}  & + DualIS              & \underline{47.95} & \textbf{77.95} & \textbf{86.22} & 2.0 & \textbf{9.30}  & \textbf{69.70} & \textbf{86.27} & \textbf{89.70} & 1.0  & \textbf{6.35}  \\
                        \rowcolor{green!10}\cellcolor{white}    & + DualDIS             & \textbf{47.97} & \underline{77.93} & \underline{86.20} & 2.0 & \textbf{9.30}  & {63.13} & \underline{83.58} & \underline{89.40} & 1.0  & \underline{7.47}  \\ \midrule
\multirow{5}{*}{X-CLIP}     &                                & 46.31          & 76.84          & 85.31          & 2.0 & \textbf{9.59}  & 65.67          & 83.73          & 89.85          & 1.0  & \textbf{8.15}  \\
                            & + IS                           & \underline{47.06}          & 77.44          & \underline{85.22}          & 2.0 & 10.34          & 64.63          & 81.64          & 87.16          & 1.0  & 9.84           \\
                            & + DIS                          & \underline{47.06}          & 77.43          & \underline{85.22}          & 2.0 & 10.34          & 64.78          & 82.84          & 88.51          & 1.0  & 8.46           \\\cmidrule{2-12}
                        \rowcolor{green!10}\cellcolor{white}    & + DualIS              & \textbf{47.95} & \textbf{78.36} & \textbf{86.00} & 2.0 & \underline{9.70}           & \textbf{67.91} & \underline{83.58} & \textbf{88.96} & 1.0  & 8.33           \\
                       \rowcolor{green!10}\cellcolor{white}     & + DualDIS             & \textbf{47.95} & \underline{78.35} & \textbf{86.00} & 2.0 & \underline{9.70}           & \underline{66.27} & \textbf{83.73} & \underline{88.81} & 1.0  & \underline{8.27}           \\ \bottomrule
\end{tabular}%
}
\end{table*}

\begin{table*}[ht!]
\centering
\caption{Retrieval performance on DiDeMo. Best in \textbf{Bold} and the second best is \underline{underlined}.
S2VT~\cite{venugopalan_translating_2015} and FSE~\cite{ferrari_cross-modal_2018} are representative methods on DiDeMo. 
}
\label{tab: quan: didemo appendix (full)}
\resizebox{\textwidth}{!}{%
\begin{tabular}{ll|ccccc|ccccc}
\toprule
\multirow{2}{*}{Method} & \multirow{2}{*}{Normalization}     & \multicolumn{5}{c|}{Text-to-Audio Retrieval} & \multicolumn{5}{c}{Audio-to-Text Retrieval} \\
                        &                                    & R@1 $\uparrow$    & R@5 $\uparrow$    & R@10 $\uparrow$   & MdR $\downarrow$   & MnR  $\downarrow$  & R@1 $\uparrow$   & R@5  $\uparrow$   & R@10 $\uparrow$  & MdR  $\downarrow$  & MnR $\downarrow$   \\\midrule
\textcolor{gray}{S2VT} &  &  \textcolor{gray}{11.9} & \textcolor{gray}{33.6} & \textcolor{gray}{76.5} & \textcolor{gray}{13.0} & \textcolor{gray}{-} & \textcolor{gray}{13.2} & \textcolor{gray}{33.6} & \textcolor{gray}{76.5} & \textcolor{gray}{15.0} & \textcolor{gray}{-} \\
\textcolor{gray}{FSE} & {} & \textcolor{gray}{13.9} & \textcolor{gray}{36.0} &\textcolor{gray}{78.9} &\textcolor{gray}{11.0} & \textcolor{gray}{-} &\textcolor{gray}{13.1} & \textcolor{gray}{33.9} & \textcolor{gray}{78.0} & \textcolor{gray}{12.0} & \textcolor{gray}{-}\\
                        \midrule
\multirow{5}{*}{CE+}    &                    & 18.22         & 42.63          & 56.08          & 8.0           & 42.05          & 18.82         & 42.73          & 55,78          & 8.0           & 37.27          \\
                        & + IS               & 21.22          & \textbf{46.31} & \underline{59.56}          & \textbf{7.0} & \textbf{37.06} & \underline{20.72}          & \underline{45.72} & \textbf{59.86} & \textbf{7.0} & \textbf{33.84} \\ 
                        & + DIS              & 21.02          & \textbf{46.31} & \underline{59.56} & \textbf{7.0} & {37.37} & \underline{20.72}          & \underline{45.72}          & \underline{59.76} & \textbf{7.0} & \underline{33.87} \\\cmidrule{2-12} 
                        \rowcolor{green!10}\cellcolor{white}& + DualIS  & \textbf{21.81} & \underline{46.12}          & \textbf{59.66} & \textbf{7.0} & \underline{37.07}          & \textbf{20.92} & 45.62          & 59.66          & \textbf{7.0} & \textbf{33.84} \\
                        \rowcolor{green!10}\cellcolor{white}& + DualDIS & \underline{21.71} & \textbf{46.31} & \underline{59.56} & \textbf{7.0} & {37.37} & \underline{20.72} & \textbf{45.82} & 59.56          & \textbf{7.0} & 33.90       \\\midrule
\multirow{5}{*}{TT-CE+} &                    & 22.31         & 49.20          & 62.15          & 6.0           & 31.18          & 21.41         & 47.11          & 60.66          & 6.0           & 29.67          \\
                        & + IS               & \underline{23.80}          & \textbf{51.29} & \textbf{65.04} & {5.0}  & \textbf{28.21} & 24.20          & 51.59          & 65.54          & {5.0}  & 26.09          \\ 
                        & + DIS              & 23.71          & \textbf{51.29} & \underline{64.84} & {5.0}  & \underline{28.40} & 24.30          & 51.49          & 65.54          & {5.0}  & 26.23          \\\cmidrule{2-12}
                        \rowcolor{green!10}\cellcolor{white}& + DualIS  & \textbf{25.60} & 50.30          & 64.44          & {5.0}  & 28.49          & \underline{24.40} & \textbf{52.59} & \textbf{66.33} & {5.0}  & \textbf{25.01} \\
                        \rowcolor{green!10}\cellcolor{white}& + DualDIS & \textbf{25.60} & \underline{50.40}          & 64.44          & {5.0}  & 28.88          & \textbf{24.60} & \underline{52.49} & \underline{66.24} & {5.0}  & \underline{25.08}   \\\bottomrule
\end{tabular}%
}
\end{table*}

\begin{table*}[ht!]
\centering
\caption{Retrieval performance on MSCOCO (5k split). 
Best in \textbf{Bold} and the second best is \underline{underlined}. 
Due to the limitation of the computational resources, we only use 20\% of data from the training data to construct the dual banks.}
\label{tab: quan: coco appendix (full)}
\resizebox{\textwidth}{!}{%
\begin{tabular}{ll|ccccc|ccccc}
\toprule
                       & \multirow{2}{*}{Normalization}      & \multicolumn{5}{c|}{Text-to-Image Retrieval}                            & \multicolumn{5}{c}{Image-to-Text Retrieval}                           \\
                        &                                & R@1 $\uparrow$ & R@5 $\uparrow$ & R@10 $\uparrow$ & MdR $\downarrow$ & MnR  $\downarrow$ & R@1 $\uparrow$ & R@5  $\uparrow$ & R@10 $\uparrow$ & MdR  $\downarrow$ & MnR $\downarrow$ \\ \midrule
\textcolor{gray}{ViLT} &  & \textcolor{gray}{40.4} & \textcolor{gray}{70.0} & \textcolor{gray}{81.1}&\textcolor{gray}{-}&\textcolor{gray}{-}& \textcolor{gray}{56.5} & \textcolor{gray}{82.6} & \textcolor{gray}{89.6} & \textcolor{gray}{-}&\textcolor{gray}{-} \\
\textcolor{gray}{ALIGN} &  & \textcolor{gray}{45.6} & \textcolor{gray}{69.8} & \textcolor{gray}{78.6}&\textcolor{gray}{-}&\textcolor{gray}{-}& \textcolor{gray}{58.6} & \textcolor{gray}{83.0}& \textcolor{gray}{89.7} & \textcolor{gray}{-}&\textcolor{gray}{-} \\
\textcolor{gray}{CODIS} &  & \textcolor{gray}{53.9} & \textcolor{gray}{79.5} & \textcolor{gray}{87.1}&\textcolor{gray}{-}&\textcolor{gray}{-}& \textcolor{gray}{71.5} & \textcolor{gray}{91.1} & \textcolor{gray}{95.5} & \textcolor{gray}{-}&\textcolor{gray}{-} \\
\textcolor{gray}{ALBEF} &  & \textcolor{gray}{60.7} & \textcolor{gray}{84.3} & \textcolor{gray}{90.5}&\textcolor{gray}{-}&\textcolor{gray}{-}& \textcolor{gray}{77.6} & \textcolor{gray}{94.3} & \textcolor{gray}{97.2} & \textcolor{gray}{-}&\textcolor{gray}{-} \\\midrule
\multirow{5}{*}{CLIP}  &                    & 30.31          & 54.74          & 66.12          & 4.0 & 25.39          & 50.02          & 74.82          & 83.34          & 1.0 & 9.23          \\
                       & + IS               & 35.15          & 60.71          & \underline{71.20}          & 3.0 & 21.69          & 50.68          & 74.80          & 83.10          & 1.0 & 9.39          \\
                       & + DIS              & 35.16          & 60.70          & 71.19          & 3.0 & 21.69          & 50.68          & 74.80          & 83.10          & 1.0 & 9.39          \\ \cmidrule{2-12} 
 \rowcolor{green!10}\cellcolor{white} & + DualIS  & \textbf{37.93} & \textbf{63.36} & \textbf{73.37} & 3.0 & \underline{21.23} & \underline{53.26} & \textbf{77.00} & \underline{85.26} & 1.0 & \underline{8.44} \\
\rowcolor{green!10}\cellcolor{white} & + DualDIS & \underline{37.92} & \underline{63.35} & \textbf{73.37} & 3.0 & \textbf{21.17} & \textbf{53.32} & \underline{76.96} & \textbf{85.30} & 1.0 & \textbf{8.32} \\ \midrule
\multirow{5}{*}{Oscar} &                    &    52.50       & 80.03          & 87.96          & 1.0 & \textbf{10.68}          & 66.74          & 89.98          & 94.98          & 1.0 & 2.98          \\
                       & + IS               & 52.77          & 80.03          & 87.99          & 1.0 & \underline{10.94}          & 68.36          & 90.16          & 95.32          & 1.0 & 2.91          \\
                       & + DIS              & 53.47          & 80.02          & 87.69          & 1.0 & 12.34          & 69.48          & 90.54          & 95.04          & 1.0 & 2.99          \\ \cmidrule{2-12} 
\rowcolor{green!10}\cellcolor{white}& + DualIS  & \underline{53.86} & \underline{80.39} & \underline{88.09} & 1.0 & 11.83          & \underline{70.72} & \underline{91.06} & \underline{95.66} & 1.0 & \underline{2.81} \\
\rowcolor{green!10}\cellcolor{white} & + DualDIS & \textbf{53.91} & \textbf{80.57} & \textbf{88.10} & 1.0 & {11.59} & \textbf{70.93} & \textbf{91.44} & \textbf{95.84} & 1.0 & \textbf{2.76} \\ 
\bottomrule
\end{tabular}%
}
\end{table*}

\begin{table*}[ht!]
\centering
\caption{Retrieval performance on Flickr30k. 
Best in \textbf{Bold} and the second best is \underline{underlined}. 
Due to the limitation of the computational resources, we only use 20\% of data from the training data to construct the dual banks. ALBEF~\cite{li2021align}, ViLT~\cite{kim_vilt_2021}, ALIGN~\cite{jia_scaling_2021}, UNITER~\cite{chenUNITERUNiversalImageTExt2020}, and CODIS~\cite{duan_multi-modal_2022} are representative works}
\label{tab: quan: f30k appendix (full)}
\resizebox{\textwidth}{!}{%
\begin{tabular}{ll|ccccc|ccccc}
\toprule
                       & \multirow{2}{*}{Normalization}      & \multicolumn{5}{c|}{Text-to-Image Retrieval}                            & \multicolumn{5}{c}{Image-to-Text Retrieval}                           \\
                        &                                & R@1 $\uparrow$ & R@5 $\uparrow$ & R@10 $\uparrow$ & MdR $\downarrow$ & MnR  $\downarrow$ & R@1 $\uparrow$ & R@5  $\uparrow$ & R@10 $\uparrow$ & MdR  $\downarrow$ & MnR $\downarrow$ \\ \midrule
\textcolor{gray}{ViLT} &  & \textcolor{gray}{55.0} & \textcolor{gray}{82.5} & \textcolor{gray}{89.8}&\textcolor{gray}{-}&\textcolor{gray}{-}& \textcolor{gray}{73.2} & \textcolor{gray}{93.6} & \textcolor{gray}{96.5} & \textcolor{gray}{-}&\textcolor{gray}{-} \\
\textcolor{gray}{UNITER} &  & \textcolor{gray}{68.7} & \textcolor{gray}{89.2} & \textcolor{gray}{93.9}&\textcolor{gray}{-}&\textcolor{gray}{-}& \textcolor{gray}{83.6} & \textcolor{gray}{95.7}& \textcolor{gray}{97.7} & \textcolor{gray}{-}&\textcolor{gray}{-} \\
\textcolor{gray}{ALIGN} &  & \textcolor{gray}{75.7} & \textcolor{gray}{93.8} & \textcolor{gray}{96.8}&\textcolor{gray}{-}&\textcolor{gray}{-}& \textcolor{gray}{88.6} & \textcolor{gray}{98.7}& \textcolor{gray}{99.7} & \textcolor{gray}{-}&\textcolor{gray}{-} \\
\textcolor{gray}{CODIS} &  & \textcolor{gray}{79.7} & \textcolor{gray}{94.8} & \textcolor{gray}{97.3}&\textcolor{gray}{-}&\textcolor{gray}{-}& \textcolor{gray}{91.7} & \textcolor{gray}{99.3} & \textcolor{gray}{99.8} & \textcolor{gray}{-}&\textcolor{gray}{-} \\
\textcolor{gray}{ALBEF} &  & \textcolor{gray}{85.6} & \textcolor{gray}{97.5} & \textcolor{gray}{98.9}&\textcolor{gray}{-}&\textcolor{gray}{-}& \textcolor{gray}{95.9} & \textcolor{gray}{99.8} & \textcolor{gray}{100.0} & \textcolor{gray}{-}&\textcolor{gray}{-} \\\midrule
\multirow{5}{*}{CLIP}  &                                                     & \underline{58.98}          & \textbf{83.48}         & 90.14          & 1.0 & \textbf{6.04}                               & 78.10          & \underline{94.90}          & \textbf{98.10}          & 1.0 & \underline{1.98}          \\
                       & + IS                                                & 57.78          & {83.40} & \underline{90.16}          & 1.0 & \underline{6.20}                      & 77.80          & 92.90          & 97.40          & 1.0 & 2.15          \\
                       & + DIS                                               & \textbf{59.02} & 83.40          & {90.10} & 1.0 & \textbf{6.04}                      & 77.90          & 92.90          & 97.40          & 1.0 & 2.14          \\\cmidrule{2-12}
                       \rowcolor{green!10}\cellcolor{white}& + DualIS                                            & {58.02} & {83.40} & \textbf{90.22} & 1.0 & \underline{6.20}                      & \textbf{81.60} & \textbf{95.40} & \underline{97.90} & 1.0 & \textbf{1.92} \\
                      \rowcolor{green!10}\cellcolor{white} & + DualDIS                                           & \textbf{59.02} & \underline{83.42} & {90.10} & 1.0 & \textbf{6.04}                      & \underline{81.50} & \textbf{95.40} & \underline{97.90} & 1.0 & \textbf{1.92} \\ \midrule
\multirow{5}{*}{Oscar} &                                & 71.60          & 91.50          & \textbf{94.96}          & 1.0 & \multicolumn{1}{c|}{4.24}          & \underline{86.30}          & 96.80          & 98.60          & 1.0 & \underline{1.58}          \\
                       & + IS                           & \underline{72.26}          & \textbf{91.74}          & \underline{94.88}          & 1.0 & \multicolumn{1}{c|}{4.26}          & \textbf{87.10} & \textbf{97.70} & \underline{99.10}          & 1.0 & \textbf{1.49} \\
                       & + DIS                          & \underline{72.26}          & \textbf{91.74} & \underline{94.88}          & 1.0 & \multicolumn{1}{c|}{4.26}          & \textbf{87.10} & \textbf{97.70} & \underline{99.10}          & 1.0 & \textbf{1.49} \\ \cmidrule{2-12}
                       \rowcolor{green!10}\cellcolor{white}& \multicolumn{1}{l|}{+ DualIS}                       & \textbf{73.00} & \textbf{91.74} & 94.80 & 1.0 & \multicolumn{1}{c|}{\textbf{4.19}} & \textbf{87.10} & \underline{97.60}          & \textbf{99.30} & 1.0 & \textbf{1.49} \\
                      \rowcolor{green!10}\cellcolor{white}& \multicolumn{1}{l|}{+ DualDIS}                      & \textbf{73.00} & \underline{91.70}          & 94.78          & 1.0 & \multicolumn{1}{c|}{\underline{4.21}} & \textbf{87.10} & \underline{97.60}          & \textbf{99.30} & 1.0 & \textbf{1.49}\\
\bottomrule
\end{tabular}%
}
\end{table*}

\begin{table*}[ht!]
\centering
\caption{
Text-audio retrieval performance on AudioCaps. Best in \textbf{Bold} and the second best is \underline{underlined}. 
'*' refers to the results direct copied from \citep{koepke_audio_2022}. 
MoEE~\cite{miech_learning_2020}, MMT~\cite{DBLP:conf/eccv/Gabeur0AS20} are two methods used for audio-text retrieval.
}
\label{tab: quan: audiocaps appendix (full)}
\resizebox{\textwidth}{!}{%
\begin{tabular}{ll|ccccc|ccccc}
\toprule
\multirow{2}{*}{Method} & \multirow{2}{*}{Normalization} & \multicolumn{5}{c|}{Text-to-Audio Retrieval}                                             & \multicolumn{5}{c}{Audio-to-Text Retrieval}                                               \\
                        &                                & R@1 $\uparrow$ & R@5 $\uparrow$ & R@10 $\uparrow$ & MdR $\downarrow$ & MnR  $\downarrow$ & R@1 $\uparrow$ & R@5  $\uparrow$ & R@10 $\uparrow$ & MdR  $\downarrow$ & MnR $\downarrow$ \\ \midrule
\textcolor{gray}{MoEE}*                                                &                                                     & \textcolor{gray}{23.00}          & \textcolor{gray}{55.70}          & \textcolor{gray}{71.00}           & \textcolor{gray}{4.0}              & \textcolor{gray}{16.30}                                  & \textcolor{gray}{26.60}          & \textcolor{gray}{59.30}           & \textcolor{gray}{73.50}           & \textcolor{gray}{4.0}               & \textcolor{gray}{15.60}            \\
\textcolor{gray}{MMT}*                                                 &                                                     & \textcolor{gray}{36.10}          & \textcolor{gray}{72.00}          & \textcolor{gray}{84.50}           & \textcolor{gray}{2.3}              & \textcolor{gray}{7.50}                                   & \textcolor{gray}{39.60}          & \textcolor{gray}{76.80}           & \textcolor{gray}{86.70}           & \textcolor{gray}{2.0}               & \textcolor{gray}{6.50}             \\ \midrule
\multirow{5}{*}{AR-CE}  &                                & 22.23          & 54.49          & \underline{70.54}           & \underline{5.0}              & \underline{15.89}             & 24.02          & 56.00           & 71.81           & 4.0               & 16.91            \\
                        & + IS                             & \underline{23.19}          & 55.96          & 70.05           & \textbf{4.0}     & 20.44             & 29.90          & 61.64           & \underline{74.75}           & \textbf{4.0}      & \textbf{13.95}   \\
                        & + DIS                            & \underline{23.19}          & 55.93          & 70.05           & \textbf{4.0}     & 20.45             & 29.90          & 61.27           & \underline{74.75}           & \textbf{4.0}      & 15.01            \\\cmidrule{2-12}
\rowcolor{green!10}\cellcolor{white}     & + DualIS                & \textbf{24.04} & \textbf{56.84} & \textbf{71.03}  & \textbf{4.0}     & \textbf{18.44}    & \underline{30.02} & \textbf{62.13}  & \textbf{75.12}  & \textbf{4.0}      & \underline{14.80}            \\
\rowcolor{green!10}\cellcolor{white}& + DualDIS               & \textbf{24.04} & \underline{56.81} & \textbf{71.03}  & \textbf{4.0}     & \textbf{18.44}    & \textbf{30.15} & \underline{61.76}  & \textbf{75.12}  & \textbf{4.0}      & {14.85}   \\ 
\bottomrule
\end{tabular}%
}
\end{table*}

\begin{table*}[ht!]
\centering
\caption{Audio-text retrieval performance on CLOTHO. Best in \textbf{Bold} and the second best is \underline{underlined}. 
'*' refers to the results direct copied from \citep{koepke_audio_2022}. 
MoEE~\cite{miech_learning_2020}, MMT~\cite{DBLP:conf/eccv/Gabeur0AS20} are two methods used for audio-text retrieval.
}
\label{tab: quan: clotho appendix (full)}
\resizebox{\textwidth}{!}{%
\begin{tabular}{ll|ccccc|ccccc}
\toprule
\multirow{2}{*}{Method} & \multirow{2}{*}{Normalization}     & \multicolumn{5}{c|}{Text-to-Audio Retrieval} & \multicolumn{5}{c}{Audio-to-Text Retrieval} \\
                        &                                    & R@1 $\uparrow$    & R@5 $\uparrow$    & R@10 $\uparrow$   & MdR $\downarrow$   & MnR  $\downarrow$  & R@1 $\uparrow$   & R@5  $\uparrow$   & R@10 $\uparrow$  & MdR  $\downarrow$  & MnR $\downarrow$   \\\midrule
                        \textcolor{gray}{MoEE}*                                                &                                                     & \textcolor{gray}{6.00}         & \textcolor{gray}{20.80}        & \textcolor{gray}{32.30}         & \textcolor{gray}{23.0}          & \textcolor{gray}{60.20}                               & \textcolor{gray}{7.20}          & \textcolor{gray}{22.10}        & \textcolor{gray}{33.20}        & \textcolor{gray}{22.7}           & \textcolor{gray}{71.80}           \\
\textcolor{gray}{MMT}*                                                 &                                                     & \textcolor{gray}{6.50}      & \textcolor{gray}{21.60}       & \textcolor{gray}{66.90}          & \textcolor{gray}{23.0}             & \textcolor{gray}{67.70}                                  & \textcolor{gray}{6.30}       & \textcolor{gray}{22.80}          & \textcolor{gray}{33.30}       & \textcolor{gray}{22.3}            & \textcolor{gray}{67.30}           \\ \midrule
\multirow{5}{*}{AR-CE}  &                                    & 6.27   & 22.32   & 33.30  & 23.0  & 58.95  & 7.27   & 23.35   & 35.12  & 21.0  & 74.68  \\
                        & + IS                                 & 6.83   & 23.50   & 35.04  & \underline{22.0}  & 56.52  & \underline{8.52}   & \underline{23.64}   & \underline{37.70}  & \underline{20.0}  & 72.99  \\
                        & + DIS                                & 6.83   & 23.46   & 34.97  & \underline{22.0}  & 56.61  & {8.33}   & 23.35   & 36.84  & \underline{20.0}  & 73.35  \\ \cmidrule{2-12} 
\rowcolor{green!10}\cellcolor{white} &  + DualIS  & \underline{7.06}   & \underline{24.42}   & \textbf{36.29}  & \textbf{21.0}  & \textbf{54.12}  & \textbf{9.09}   & \textbf{25.55}   & \textbf{38.28}  & \textbf{19.0}  & \textbf{71.33}  \\
\rowcolor{green!10}\cellcolor{white} &  + DualDIS & \textbf{7.12}   & \textbf{24.50}   & \underline{36.17}  & \textbf{21.0}  & \underline{54.85}  & 8.13   & \underline{23.64}   & {37.04}  & \underline{20.0}  & \underline{72.71} \\ \bottomrule
\end{tabular}%
}
\end{table*}

\begin{table*}[t!]
\caption{The hubness (skewness score) is better reduced after applying our proposed DualIS and DualDIS than IS and DIS Best in \textbf{Bold} and the second best is \underline{underlined}.}
\label{tab: hubness after ours appendix (full)}
\resizebox{\textwidth}{!}{%
\begin{tabular}{l|ccccc|cccc|cc|c}
\toprule
\multicolumn{1}{l|}{\multirow{2}{*}{Normalization}} & \multicolumn{5}{c|}{MSRVTT}                                         & \multicolumn{4}{c|}{ActivityNet}                                                                      & \multicolumn{2}{c|}{MSCOCO}  &   \multicolumn{1}{c}{\multirow{2}{*}{Best}} \\
                               & CE+ & TT-CE+ & CLIP4Clip & CLIP2Video & \multicolumn{1}{c|}{X-CLIP} & CE+           & TT-CE+                             & CLIP4Clip & \multicolumn{1}{c|}{X-CLIP} & CLIP                               & Oscar   &   \\ \midrule
\multicolumn{13}{c}{Text-to-Video/Image/Audio}                                                                                                                                                                                                                                                               \\ \midrule
                               & 1.38          & 1.28          & 1.13          & 0.84          & \multicolumn{1}{c|}{1.24}          & 0.94          & 0.76                        & 0.83          & \multicolumn{1}{c|}{0.98}          & 2.71                           & 0.55        &0  \\
+ IS                           & \underline{0.82}          & 0.34          & \textbf{0.18} & \underline{0.32}          & \multicolumn{1}{c|}{\underline{0.74}}          & 0.67          & 0.51                        & 0.55          & \multicolumn{1}{c|}{0.57}          & 0.90                           & \underline{0.24} & 1 \\
+ DIS                          & 0.83          & 0.34          & \textbf{0.18} & 0.33          & \multicolumn{1}{c|}{\underline{0.74}}          & 0.68          & 0.52                        & \underline{0.42}          & \multicolumn{1}{c|}{0.44}          & 0.90                           & \textbf{0.22}  & 1\\
\rowcolor{green!10}+ DualIS                & \textbf{0.37} & \underline{0.33} & \underline{0.57}          & \textbf{0.26} & \multicolumn{1}{c|}{\textbf{0.70}} & \underline{0.54} & \textbf{0.37}               & \textbf{0.36} & \multicolumn{1}{c|}{\textbf{0.42}} & \textbf{0.42}                  & 0.31       & \textbf{7}    \\
\rowcolor{green!10}+ DualDIS               & \textbf{0.37} & \textbf{0.28} & \underline{0.57}          & \textbf{0.26} & \multicolumn{1}{c|}{\textbf{0.70}} & \textbf{0.50} & \underline{0.40}               & {0.46} & \multicolumn{1}{c|}{\underline{{0.43}}} & \underline{0.43}                  & 0.29      & \underline{5}    \\ \midrule
\multicolumn{13}{c}{Video/Image/Audio-to-Text}                                                                                                                                                                                                                                                               \\ \midrule
                               & 3.65          & 4.02          & 2.13          & 2.13          & \multicolumn{1}{c|}{2.66}          & 1.06          & 0.70                        & 1.11          & \multicolumn{1}{c|}{1.58}          & 3.78                           & 1.45      &0    \\
+ IS                           & \textbf{2.29} & \textbf{2.34} & \textbf{0.43} & \textbf{0.28} & \multicolumn{1}{c|}{1.58}          & 0.63          & 0.56                        & 0.77          & \multicolumn{1}{c|}{\underline{0.44}}          & 2.21                           & 1.08         &\underline{4} \\
+ DIS                          & 2.54          & 2.50          & \underline{0.47} & \underline{0.31} & \multicolumn{1}{c|}{1.61}          & 0.65          & 0.57                        & 0.58          & \multicolumn{1}{c|}{0.56}          & 2.21                           & 1.17       & 0   \\
\rowcolor{green!10}+ DualIS                & \underline{2.48}          & \underline{2.38}          & 1.12          & \textbf{0.28} & \multicolumn{1}{c|}{\textbf{1.46}} & \textbf{0.30} & \textbf{0.27}               & \textbf{0.32} & \multicolumn{1}{c|}{\textbf{0.37}} & \underline{1.53}                  & \textbf{0.98} & \textbf{7}\\
\rowcolor{green!10}+ DualDIS               & {2.51} & {2.47} & 1.16          & \underline{0.31} & \multicolumn{1}{c|}{\underline{1.48}} & \underline{0.34} & \underline{0.29}               & \underline{0.40} & \multicolumn{1}{c|}{\textbf{0.37}} & \textbf{1.50}                  & \underline{1.06}  & 2\\  
\midrule\midrule
\multirow{2}{*}{Normalization} & \multicolumn{5}{c|}{MSVD}                                           & \multicolumn{2}{c|}{DiDeMo}                        & \multicolumn{2}{c|}{Flickr30K}          & \multicolumn{1}{c|}{AudioCaps}     & CLOTHO     & \multicolumn{1}{c}{\multirow{2}{*}{Best}}   \\
                               & CE+ & TT-CE+ & CLIP4Clip & CLIP2Video & \multicolumn{1}{c|}{X-CLIP} & CE+           & \multicolumn{1}{c|}{TT-CE+}        & CLIP      & \multicolumn{1}{c|}{Oscar}  & \multicolumn{1}{c|}{AR-CE}         & AR-CE     &    \\ \midrule
\multicolumn{13}{c}{Text-to-Video/Image/Audio}                                                                                                                                                                                                                                                        \\ \midrule
                               & 7.95          & 7.95          & 0.83          & 0.84          & \multicolumn{1}{c|}{\textbf{0.65}} & 1.23          & \multicolumn{1}{c|}{0.86}   & 2.78          & \multicolumn{1}{c|}{0.32}   & \multicolumn{1}{c|}{0.22}      & 1.09     & 1     \\
+ IS                           & \textbf{7.88} & \textbf{7.92} & \underline{0.42}          & 0.80          & \multicolumn{1}{c|}{1.87}          & \textbf{0.44} & \multicolumn{1}{c|}{0.39}   & \underline{4.06} & \multicolumn{1}{c|}{\textbf{0.01}}   & \multicolumn{1}{c|}{\underline{0.02}}      & 0.68      & \underline{4}    \\
+ DIS                          & \underline{7.92} & \textbf{7.92} & \underline{0.42}          & 0.80          & \multicolumn{1}{c|}{1.87}          & \underline{0.46} & \multicolumn{1}{c|}{0.40}   & \textbf{2.81}          & \multicolumn{1}{c|}{\textbf{0.01}}   & \multicolumn{1}{c|}{\textbf{0.01}}      & 0.68   & \underline{4}       \\
\rowcolor{green!10}+ DualIS                & 7.93          & \textbf{7.92} & \textbf{0.37} & \textbf{0.20} & \multicolumn{1}{c|}{\underline{0.68}}          & 0.53          & \multicolumn{1}{c|}{\underline{0.33}}   & 4.14          & \multicolumn{1}{c|}{\textbf{0.01}}   & \multicolumn{1}{c|}{0.15}      & \textbf{0.46} & \textbf{5}\\
\rowcolor{green!10}+ DualDIS               & 7.93          & \textbf{7.92} & \textbf{0.37} & \underline{0.21} & \multicolumn{1}{c|}{\underline{0.68}}          & 1.13          & \multicolumn{1}{c|}{\textbf{0.32}}   & \textbf{2.81} & \multicolumn{1}{c|}{\textbf{0.01}}   & \multicolumn{1}{c|}{0.15}      & \underline{0.55} & \textbf{5}\\ \midrule
\multicolumn{13}{c}{Video/Image/Audio-to-Text}                                                                                                                                                                                                                                                        \\ \midrule
                               & 3.18          & 4.18          & 4.54          & 6.02          & \multicolumn{1}{c|}{4.47}          & 1.01          & \multicolumn{1}{c|}{0.98}   & 2.23          & \multicolumn{1}{c|}{1.16}   & \multicolumn{1}{c|}{1.65}      & 2.43      &0    \\
+ IS                           & 3.45          & \textbf{3.06} & \textbf{2.97} & \textbf{2.78} & \multicolumn{1}{c|}{\textbf{3.84}} & \underline{0.53} & \multicolumn{1}{c|}{\underline{0.72}}   & 1.62          & \multicolumn{1}{c|}{\underline{1.02}}   & \multicolumn{1}{c|}{\underline{1.00}}      & 2.03   &\textbf{4}       \\
+ DIS                          & \underline{3.42} & 4.05          & {3.47} & 4.39          & \multicolumn{1}{c|}{4.64}          & \textbf{0.52} & \multicolumn{1}{c|}{\underline{0.72}}   & 1.62          & \multicolumn{1}{c|}{1.03}   & \multicolumn{1}{c|}{\textbf{0.99}}      & 2.02         &2 \\
\rowcolor{green!10}+ DualIS                & \textbf{2.60} & \underline{3.42}          & \underline{3.01}          & \underline{3.00}          & \multicolumn{1}{c|}{\underline{3.92}}          & 0.54          & \multicolumn{1}{c|}{\textbf{0.59}}   & \underline{1.08} & \multicolumn{1}{c|}{\textbf{0.99}}   & \multicolumn{1}{c|}{1.06}      & \textbf{1.53} &\textbf{4}\\
\rowcolor{green!10}+ DualDIS               & 3.43          & 4.04 & 3.60          & {4.23} & \multicolumn{1}{c|}{4.57}          & 0.56          & \multicolumn{1}{c|}{\textbf{0.59}}   & \textbf{1.06} & \multicolumn{1}{c|}{\textbf{0.99}}   & \multicolumn{1}{c|}{1.04}      & \underline{1.60} &\underline{3}\\  
\bottomrule
\end{tabular}
}
\end{table*}

\clearpage
\newpage
\section*{previous pics, tables}

\begin{figure}[h!]
   \begin{center}
    \includegraphics[width=0.48\columnwidth]{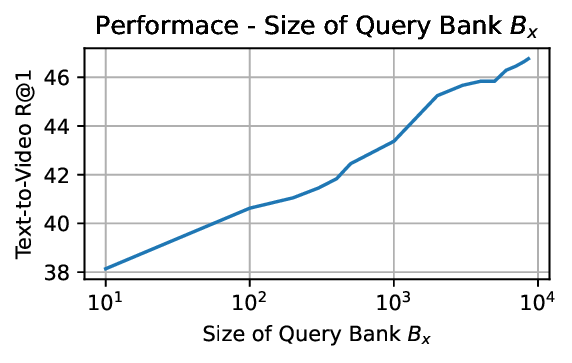}
    \includegraphics[width=0.48\columnwidth]{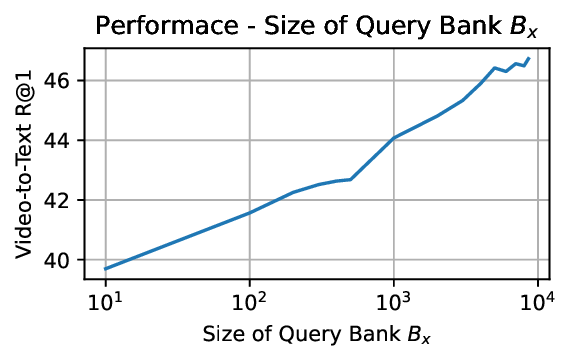}
    \includegraphics[width=0.48\columnwidth]{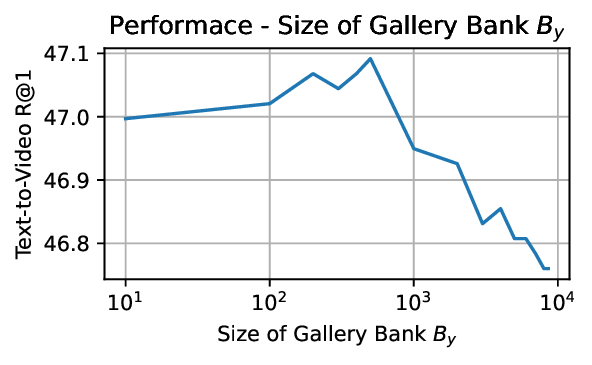}
    \includegraphics[width=0.48\columnwidth]{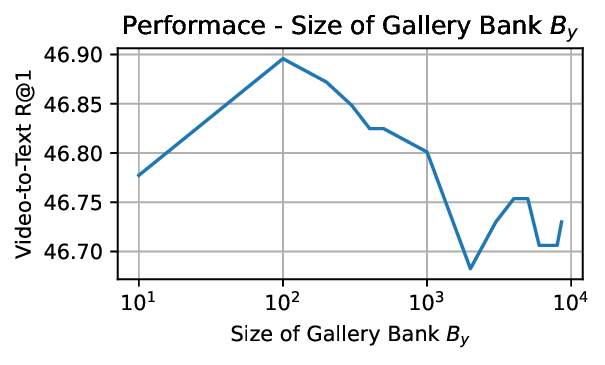}
      \caption{
      Performance w.r.t the size of dual banks on ActivityNet using CLIP4Clip and \ours\ (DDIS).
      }
      \label{fig: ablation study size}
   \end{center}
\end{figure}

\begin{figure}[h!]
   \begin{center}
    \includegraphics[width=0.48\columnwidth]{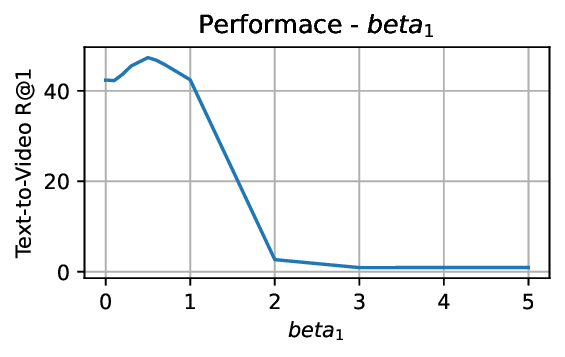}
    \includegraphics[width=0.48\columnwidth]{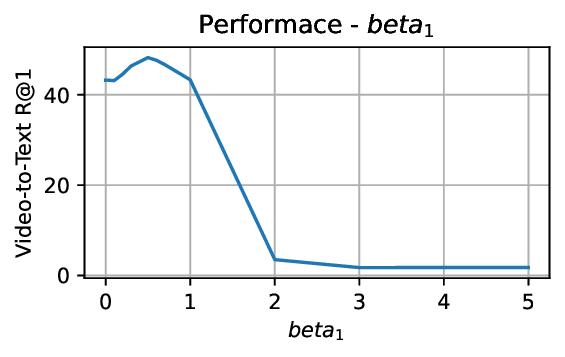}
    \includegraphics[width=0.48\columnwidth]{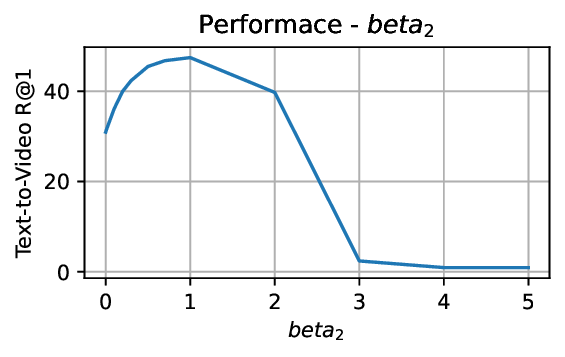}
    \includegraphics[width=0.48\columnwidth]{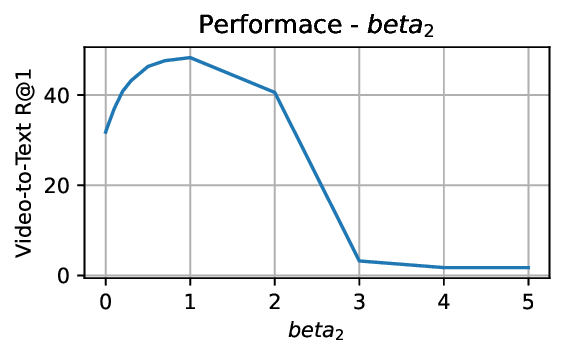}
      \caption{
      Performance w.r.t $\beta_1$ and $\beta_2$ on ActivityNet using CLIP4Clip and \ours\ (DDIS).
      }
      \label{fig: ablation study betas}
   \end{center}
\end{figure}

\begin{figure}[h!]
   \begin{center}
    \includegraphics[width=0.48\columnwidth]{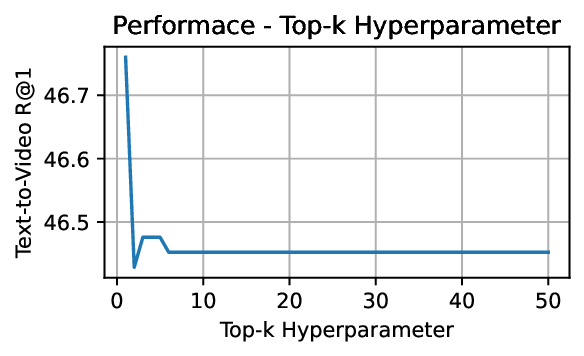}
    \includegraphics[width=0.48\columnwidth]{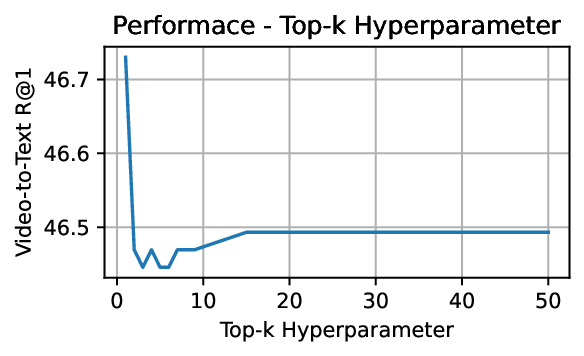}
      \caption{
      Performance w.r.t the Top-K selection during constructing activation sets on ActivityNet using CLIP4Clip and \ours\ (DDIS).
      }
      \label{fig: ablation study topk}
   \end{center}
\end{figure}
}
\end{document}